\def\proj{\mathrm{proj}}
\def\projM{\proj_{\mathcal{M}}}
\newcommand{\projR}[1]{\proj_{\mathcal{R}(#1)}}
\def\pathmeas{\mathcal{P}(\mathcal{C})}
\def\KL{\mathrm{KL}}
\def\E{\mathbb{E}}
\def\P{\mathbb{P}}
\def\Pstar{\mathbb{P}^\star}
\def\PSB{\mathbb{P}^\textup{SB}}
\def\Pistar{\Pi^\star}
\def\PiSB{\Pi^\textup{SB}}
\def\Q{\mathbb{Q}}
\def\Qbridge{\mathbb{Q}_{|0,T}}
\newcommand{\schro}{Schr\"{o}dinger\xspace}
\def \argmin {\mathrm{argmin}}
\def \bfX {\mathbf{X}}
\def \bfM {\mathbf{M}}
\def \bfZ {\mathbf{Z}}
\def \bfY {\mathbf{Y}}
\def \bfB {\mathbf{B}}
\def \nset {\mathbb{N}}
\def \rset {\mathbb{R}}
\def \Qbb {\mathbb{Q}}
\def \Pbb {\mathbb{P}}
\def \Mbb {\mathbb{M}}
\def \calR {\mathcal{R}}
\def \calC {\mathcal{C}}
\def \calM {\mathcal{M}}
\def \rmd {\mathrm{d}}
\def \rmC {\mathrm{C}}
\def \msx {\mathsf{X}}
\def \vareps {\varepsilon}
\def \Id {\mathrm{Id}}
\newcommand \ccint[1]{[#1]}
\newcommand \coint[1]{[#1)}
\newcommand \ooint[1]{(#1)}
\newcommand \ensembleLigne[2]{\{ #1 \ : \ #2 \}}
\newcommand \expeLigne[2]{\mathbb{E}_{#1}[#2]}
\newcommand \CPELigne[3]{\mathbb{E}_{#1}[ #2 \ | \ #3 ]}
\newcommand \KLLigne[2]{\mathrm{KL}(#1|#2)}
\newcommand \normLigne[1]{\| #1 \|}
\DeclareMathOperator{\csch}{csch}
\newtheorem{theorem}{Theorem}
\crefname{theorem}{theorem}{Theorems}
\Crefname{Theorem}{Theorem}{Theorems}
\newtheorem*{lemma*}{Lemma}
\newtheorem*{theorem*}{Theorem}
\newtheorem*{proposition*}{Proposition}
\newaliascnt{lemma}{theorem}
\newtheorem{lemma}[lemma]{Lemma}
\crefname{lemma}{lemma}{lemmas}
\Crefname{Lemma}{Lemma}{Lemmas}
\newaliascnt{corollary}{theorem}
\crefname{corollary}{corollary}{corollaries}
\Crefname{Corollary}{Corollary}{Corollaries}
\newaliascnt{proposition}{theorem}
\newtheorem{proposition}[proposition]{Proposition}
\crefname{proposition}{proposition}{propositions}
\Crefname{Proposition}{Proposition}{Propositions}
\newaliascnt{definition}{theorem}
\newtheorem{definition}[definition]{Definition}
\crefname{definition}{definition}{definitions}
\Crefname{Definition}{Definition}{Definitions}
\newaliascnt{remark}{theorem}
\crefname{remark}{remark}{remarks}
\Crefname{Remark}{Remark}{Remarks}
\crefname{example}{example}{examples}
\Crefname{Example}{Example}{Examples}
\crefname{technique}{technique}{techniques}
\Crefname{Technique}{Technique}{Techniques}
\crefname{figure}{figure}{figures}
\Crefname{Figure}{Figure}{Figures}
\newtheorem{assumption}{\textbf{A}\hspace{-3pt}}
\newtheorem{assumptionF}{\textbf{F}\hspace{-3pt}}
\Crefname{assumptionB}{\textbf{B}\hspace{-3pt}}{\textbf{B}\hspace{-3pt}}
\crefname{assumptionB}{\textbf{B}}{\textbf{B}}
\Crefname{assumptionC}{\textbf{C}\hspace{-3pt}}{\textbf{C}\hspace{-3pt}}
\crefname{assumptionC}{\textbf{C}}{\textbf{C}}
\Crefname{assumptionH}{\textbf{H}\hspace{-3pt}}{\textbf{H}\hspace{-3pt}}
\crefname{assumptionH}{\textbf{H}}{\textbf{H}}
\Crefname{assumptionT}{\textbf{T}\hspace{-3pt}}{\textbf{T}\hspace{-3pt}}
\crefname{assumptionT}{\textbf{T}}{\textbf{T}}
\Crefname{assumptionT}{\textbf{T}\hspace{-3pt}}{\textbf{T}\hspace{-3pt}}
\crefname{assumptionT}{\textbf{T}}{\textbf{T}}
\Crefname{assumptionL}{\textbf{L}\hspace{-3pt}}{\textbf{L}\hspace{-3pt}}
\crefname{assumptionL}{\textbf{L}}{\textbf{L}}
\Crefname{assumptionQ}{\textbf{Q}\hspace{-3pt}}{\textbf{Q}\hspace{-3pt}}
\crefname{assumptionQ}{\textbf{Q}}{\textbf{Q}}
\Crefname{assumptionAR}{\textbf{AR}\hspace{-3pt}}{\textbf{AR}\hspace{-3pt}}
\crefname{assumptionAR}{\textbf{AR}}{\textbf{AR}}
\newcommand \tref[1] {\textup{\Cref{#1}}}
\renewcommand{\arraystretch}{1.2} % stretching the row height
\newcommand{\dsbmalgo}{
\STATE{\textbf{Input:} Joint distribution $\Pi_{0,T}^0$, tractable bridge $\Qbb_{|0,T}$, number of outer iterations $N \in \nset$.}
\STATE{Let $\Pi^0 = \Pi_{0,T}^0 \Qbb_{|0,T}$.}
\FOR{$n \in \{0, \dots, N-1\}$}
  \STATE Learn $v_{\phi^\star}$ using \eqref{eq:loss_function_backward} with $\Pi = \Pi^{2n}$. 
  \STATE Let $\Mbb^{2n+1}$ be given by \eqref{eq:approximate_markovian_proj_backward}. 
  \STATE Let $\Pi^{2n+1} = \Mbb^{2n+1}_{0,T} \Qbb_{|0,T}$.
  \STATE Learn $v_{\theta^\star}$ using \eqref{eq:loss_function} with $\Pi = \Pi^{2n+1}$. 
  \STATE Let $\Mbb^{2n+2}$ be given by \eqref{eq:approximate_markovian_proj_forward}. 
  \STATE Let $\Pi^{2n+2} = \Mbb^{2n+2}_{0,T} \Qbb_{|0,T}$.
  \ENDFOR
  \STATE \textbf{Output:} $v_{\theta^\star}$, $v_{\phi^\star}$
}
\title{Diffusion Schr\"odinger Bridge Matching}
\author{%
    Yuyang Shi\thanks{Equal contribution.} \\
    University of Oxford
\And
    Valentin De Bortoli\footnotemark[1] \\
    ENS ULM
\And
    Andrew Campbell \\
    University of Oxford
\And
    Arnaud Doucet \\
    University of Oxford
}
\begin{document}

\maketitle

\begin{abstract}
  Solving transport problems, i.e. finding a map transporting one given
  distribution to another, has numerous applications in machine learning. Novel
  mass transport methods motivated by generative modeling have recently been
  proposed, e.g. Denoising Diffusion Models (DDMs) and Flow Matching Models
  (FMMs) implement such a transport through a Stochastic Differential Equation
  (SDE) or an Ordinary Differential Equation (ODE). However, while it is
  desirable in many applications to approximate the deterministic dynamic
  Optimal Transport (OT) map which admits attractive properties, DDMs and FMMs
  are not guaranteed to provide transports close to the OT map. In contrast,
  Schr\"odinger bridges (SBs) compute stochastic dynamic mappings which recover
  entropy-regularized versions of OT. Unfortunately, existing numerical methods approximating SBs either scale poorly with dimension or accumulate errors across iterations. In this work, we introduce Iterative Markovian
  Fitting (IMF), a new methodology for solving SB problems, and Diffusion
  Schr\"odinger Bridge Matching (DSBM), a novel numerical algorithm for
  computing IMF iterates.  DSBM significantly improves over previous SB numerics
  and recovers as special/limiting cases various recent transport methods. We
  demonstrate the performance of DSBM on a variety of problems.
\end{abstract}

\section{Introduction}
Mass transport problems are ubiquitous in machine learning
\citep{peyre2019computational}.
For discrete measures, the Optimal Transport (OT) map can be computed exactly but is 
computationally intensive. In a landmark paper, \cite{cuturi2013sinkhorn} showed that an entropy-regularized version of OT can be computed more efficiently
using the Sinkhorn algorithm \citep{sinkhorn1967diagonal}. This has enabled the use of OT techniques in a variety of applications ranging from biology \citep{bunne2022proximal} to shape correspondence \citep{feydy2017optimal}. 
However, applications involving high-dimensional continuous distributions and/or large datasets remain challenging for these techniques.

One of such data-rich applications is generative modeling, a central transport problem in machine learning which requires designing a deterministic or stochastic mapping transporting a
reference ``noise'' distribution to the data distribution. For example,
Generative Adversarial Networks \citep{goodfellow2014generative} define a static,
deterministic transport map, while Denoising Diffusion Models (DDMs)
\citep{song2020score,ho2020denoising} build a dynamic, stochastic transport map
by simulating a Stochastic Differential Equation (SDE), whose drift is learned
using score matching \citep{hyvarinen2005estimation,vincent2011connection}. 
The excellent performances of DDMs have motivated
recent developments of Bridge Matching and Flow Matching models, which are
dynamic transport maps using SDEs
\citep{song2020denoising,peluchettinon,liu2022rectified,albergo2023stochastic}
or ODEs
\citep{albergo2022building,heitz2023iterative,lipman2022flow,liu2022flow}. 
Compared to DDMs, Bridge and Flow Matching methods
do not rely on a forward ``noising'' diffusion converging
to the reference distribution in infinite time, and are also more generally applicable as
they can approximate transport maps between two general distributions based on their samples. Nonetheless, these transport maps are not necessarily close to the OT
map minimizing the Wasserstein-2 metric,
which is appealing for its many attractive properties
\citep{peyre2019computational,villani2009optimal}.
 
In contrast, the \schro Bridge (SB) problem is a dynamic
version of entropy-regularized OT (EOT)
\citep{follmer1988random,leonard2014survey}. 
The SB is the finite-time diffusion which admits
as initial and terminal distributions the two distributions of interest and is
the closest in Kullback--Leibler divergence to a reference diffusion. Numerous methods to approximate SBs numerically have been proposed, see e.g. \citep{bernton2019schr,chen2016entropic,finlay2020learning,caluya2021wasserstein,pavon2018data}, but these techniques tend to be restricted to low-dimensional settings. Recently, novel techniques using diffusion-based ideas have been proposed in \citep{debortoli2021diffusion,vargas2021solving,chen2021likelihood} based on Iterative 
Proportional Fitting (IPF)
\citep{fortet1940resolution,kullback1968probability,ruschendorf1993note}, a continuous state-space extension of the Sinkhorn algorithm \citep{essid2019traversing}.  
These approaches have been shown to scale better empirically, 
 but numerical errors tend to accumulate over
iterations \citep{fernandes2021shooting}. 

\begin{figure}
\centering
\begin{minipage}[b]{0.31\textwidth}
    \scalebox{0.7}{
    \begin{tikzpicture}
        \node [draw, rounded corners=25pt, very thick, olive, minimum width=6cm, minimum height=3.6cm, align=center] at (1,0.3){};
        
        \node [olive, align=center] at (-1.1,1.6){\textbf{DSBM}};
        
        \node [draw, ellipse, very thick, purple, minimum width=5cm, minimum height=2cm, align=center] at (1,0.2){Bridge Matching\\\\\\\\};
        \node [draw, ellipse, very thick, blue, minimum width=2cm, minimum height=1cm, align=center] at   (0,0)   {Denoising \\Diffusion};
        \node [draw, ellipse, very thick, red, minimum width=2cm, minimum height=1cm, align=center] at    (2,0)   {Flow \\ Matching};
    \end{tikzpicture}
    }
    \caption{Relationship between DSBM and existing methods. }
    \label{fig:venn_diagram}
\end{minipage}
\hfill
\begin{minipage}[b]{0.67\textwidth} 
  \begin{tabular}[c]{c||c|c}
    & Sets for alternating projections & Preserved properties \\
    \hline \hline 
    IPF & $\Pbb_0 = \pi_0$; $\Pbb_T = \pi_T$  & $\calM$, $\calR(\Qbb)$   \\
    \hline
    IMF &$\calM$; $\calR(\Qbb)$ & $\Pbb_0 = \pi_0$, $\Pbb_T = \pi_T$  \\ 
    \hline 
  \end{tabular}
  
  \captionof{table}{Comparison between Iterative Markovian Fitting (IMF) and
    Iterative Proportional Fitting (IPF). The \schro Bridge is the \emph{unique} $\Pbb$
    s.t. $\Pbb_0 = \pi_0$, $\Pbb_T = \pi_T$, $\Pbb\in\calM$,
    $\Pbb\in\calR(\Qbb)$ simultaneously by
    \Cref{prop:schro-markov-recip}. $\calM$ is the space of (regular)
    Markov measures and $\calR(\Qbb)$ the
    space of reciprocal measures of $\Qbb$.
    }
  \label{tab:diff}
\end{minipage}
\end{figure}

In this paper, our contributions are three-fold. First, we introduce Iterative
Markovian Fitting (IMF), a new procedure to compute SBs which alternates between
projecting on the space of \emph{Markov processes} and on the \emph{reciprocal class}, i.e. the measures which
have the same bridge as the reference measure of SB  \citep{leonard2014reciprocal}. 
We establish various theoretical results for IMF. Contrary to IPF, the IMF iterates always preserve the initial and terminal distributions. The differences between IPF and IMF are presented in \Cref{tab:diff}. 
Second, we propose Diffusion Schr\"odinger Bridge Matching (DSBM), a novel
algorithm approximating numerically 
the SB solution derived from IMF. 
DSBM requires at
each iteration solving a simple regression problem in the spirit of
Bridge and Flow Matching,
and does not suffer from the time-discretization and 
``forgetting'' issues of previous DSB techniques
\citep{debortoli2021diffusion,vargas2021solving,chen2021likelihood}. 
Finally, we demonstrate the performance of DSBM on a variety of transport tasks.\footnote{Code can be found at \url{https://github.com/yuyang-shi/dsbm-pytorch}.}
 
\paragraph{Notations.} We denote by $\pathmeas$, the space of \emph{path
measures}, i.e. $\pathmeas=\mathcal{P}(\rmC(\ccint{0,T}, \rset^d))$ where
$T > 0$. 
The subset of \emph{Markov} path measures associated
with an SDE of the form
$\rmd \bfX_t = v_t(\bfX_t) \rmd t + \sigma_t \rmd \bfB_t$, with $\sigma,v$
locally Lipschitz, is denoted $\calM$.  For any
$\Qbb \in \calM$, the \emph{reciprocal class}  of $\Qbb$
is denoted $\calR(\Qbb)$, see
\Cref{def:reciprocal_projection}. We also denote $\Qbb_t$ its marginal
distribution at time $t$, $\Qbb_{s,t}$ the joint distribution at times $s$ and
$t$, $\Qbb_{s|t}$ the conditional distribution at time $s$ given state at
time $t$, and $\Qbb_{|0,T} \in \pathmeas$ its \emph{diffusion bridge}. Unless specified otherwise, all gradient operators $\nabla$ are
w.r.t. the variable $x_t$ with time index $t$. Let $(\msx, \mathcal{X})$ and $(\mathsf{Y}, \mathcal{Y})$ be probability spaces. Given a Markov kernel $\mathrm{K}: \ \msx \times \mathcal{Y} \to [0,1]$ and a probability measure $\mu$ defined on $\mathcal{X}$, we write $\mu \mathrm{K}$ the probability measure on $\mathcal{Y}$ such that for any $\mathsf{A} \in \mathcal{Y}$ we have $\mu \mathrm{K}(\mathsf{A}) = \int_{\msx} \mathrm{K}(x, \mathsf{A}) \rmd \mu(x)$. 
In particular, for any joint distribution $\Pi_{0,T}$ over $\rset^d \times \rset^d$, we denote the \emph{mixture of bridges} measure as $\Pi = \Pi_{0,T} \Qbb_{|0,T} \in \pathmeas$, which is short for $\Pi(\cdot) = \int_{\rset^d \times \rset^d} \Qbb_{|0,T}(\cdot|x_0, x_T) \Pi_{0,T}(\rmd x_0, \rmd x_T)$. 

\section{Dynamic Mass Transport Techniques}

\subsection{Denoising Diffusion and Bridge Matching Models}

Denoising Diffusion Models \citep{song2020score,ho2020denoising} are a popular
class of generative models. They define a forward noising process $\Qbb \in \calM$
using the SDE
$\rmd \bfX_t = - \tfrac{1}{2} \bfX_t \rmd t + \rmd \bfB_t$ on the time-interval $[0,T]$,
where $\bfX_0 \in \rset^d$ is drawn from the data distribution $\pi_0$
and $(\bfB_t)_{t\in[0,T]}$ is a $d$-dimensional Brownian motion. This
diffusion\footnote{This is known as the Ornstein--Uhlenbeck (OU) process or VPSDE \citep{song2020score}. } 
converges towards the standard Gaussian distribution
$\mathrm{N}(0, \Id)$ as $T\to\infty$. A generative model is given by its
\textit{time-reversal} $(\bfY_t)_{t\in[0,T]}=(\bfX_{T-t})_{t\in[0,T]}$, where
$\bfY_0 \sim \Qbb_T$ and
$\rmd \bfY_t = \{\tfrac{1}{2} \bfY_t + \nabla \log \Qbb_{T-t}(\bfY_t)\} \rmd t +
\rmd \bfB_t$
\citep{anderson1982reverse,haussmann1986time}.  In practice, $(\bfY_t)_{t \in \ccint{0,T}}$
is initialized with $\bfY_0 \sim \pi_T = \mathrm{N}(0, \Id)$, and the \emph{Stein
  score}
$\nabla \log \Qbb_t(x_t)=\CPELigne{\Qbb_{0|t}}{\nabla \log
  \Qbb_{t|0}(\bfX_{t}|\bfX_0)}{\bfX_{t}=x_t}$ is approximated using a neural
network $s_\theta(t,x_t)$
minimizing the 
\emph{denoising score matching} loss
$\expeLigne{\Qbb_{0,t}}{\normLigne{ \nabla \log \Qbb_{t|0}(\bfX_{t}|\bfX_0) -
    s_\theta(t,\bfX_t)}^2}$.  

An alternative to considering the time-reversal of a forward noising process is to
``build bridges'' between the two distributions and learn a \emph{mimicking} diffusion process. This approach generalizes DDMs and allows for more
flexible choices of sampling processes. We call this framework \emph{Bridge Matching} and adopt a presentation similar to 
\cite{peluchettinon,liu2022let}, where $\pi_T$ is the data distribution.\footnote{To keep notations consistent with existing works, $\pi_0$ is the data distribution in the context of DDM and SB, whereas $\pi_T$ is the data distribution in Bridge Matching. However, both SB and Bridge Matching methods allow transfer between arbitrary distributions $\pi_0,\pi_T$, so this distinction is not important.}
We denote $\Qbb \in \calM$ the path measure associated with the following process
\begin{equation}
   \rmd \bfX_t = f_t(\bfX_t) \rmd t + \sigma_t \rmd \bfB_t, \qquad \bfX_0 \sim \Qbb_0.
   \label{eq:reference_measure}
\end{equation}
Consider now the distribution of this process pinned down at an initial and terminal
point $x_0,x_T$, denoted $\Qbb_{|0,T}(\cdot|x_0,x_T)$. Under mild assumptions,
the \emph{pinned} process $\Qbb_{|0,T}(\cdot|x_0,x_T)$ is a \emph{diffusion bridge} and is given by 
\begin{equation}
  \rmd \bfX_t^{0,T} = \{f_t(\bfX_t^{0,T}) + \sigma_t^2 \nabla \log \Qbb_{T|t}(x_T|\bfX_t^{0,T})\} \rmd t + \sigma_t \rmd \bfB_t, \qquad \bfX_0^{0,T} = x_0,
  \label{eq:bridge_forward}
\end{equation}
which satisfies $\bfX_T^{0,T}=x_T$ 
using Doob $h$-transform theory
\citep{rogers2000diffusions}. Next, we define an independent coupling
$\Pi_{0,T}=\pi_0 \otimes \pi_T$, and let
$\Pi = \Pi_{0,T} \Qbb_{|0,T}$. This path measure $\Pi$ is a
\textit{mixture of bridges}.  
We aim to find a Markov diffusion 
$\rmd \bfY_t = \{f_t(\bfY_t) + v_t(\bfY_t)\}\rmd t + \sigma_t \rmd \bfB_t$ on $[0,T]$ 
which admits the same marginals as $\Pi$; i.e. for any
$t \in \ccint{0,T}$, $\bfY_t \sim \Pi_{t}$, so $\bfY_T \sim \pi_T$.  For such $v_t$,
a
generative model for sampling data distribution $\pi_T$ is obtained by simulating $(\bfY_t)_{t\in [0,T]}$. 
It can be verified that indeed $\bfY_t \sim \Pi_{t}$ for $v^\star_t(x_t) = \sigma_t^2 \CPELigne{\Pi_{T|t}}{ \nabla \log \Qbb_{T|t}(\bfX_{T}|\bfX_t)}{\bfX_{t}=x_t}$.
We present the theory behind this idea more formally using Markovian projections in \Cref{subsec:markovian_projection}. 
In practice, we do not have access to
$v^\star_t$ and it is learned using neural networks with regression loss
\begin{equation}
    \label{eq:bridge_matching_loss}
    \expeLigne{\Pi_{t,T}}{\normLigne{\sigma_t^2 \nabla \log \Qbb_{T|t}(\bfX_{T}|\bfX_t) - v_\theta(t,\bfX_t)}^2}.
\end{equation}
For $f_t = 0$ and $\sigma_t=\sigma$, $\Qbb_{|0,T}$ is a \emph{Brownian Bridge} and we have
\begin{equation}
  \label{eq:interpolation_stochastic}
  \bfX_t^{0,T} = \tfrac{t}{T} x_T + (1- \tfrac{t}{T}) x_0 + \sigma_t (\bfB_t - \tfrac{t}{T}\bfB_T) , \quad \rmd \bfX_t^{0,T} = \{(x_T - \bfX_t^{0,T})/(T-t)\} \rmd t + \sigma_t \rmd \bfB_t,
\end{equation}
with $(\bfB_t - \tfrac{t}{T}\bfB_T) \sim \mathrm{N}(0, t(1 - \tfrac{t}{T})~\Id)$. The regression loss \eqref{eq:bridge_matching_loss} associated with \eqref{eq:interpolation_stochastic} is given by 
\begin{equation}
    \label{eq:bridge_matching_loss_brownian}
    \expeLigne{\Pi_{t,T}}{\normLigne{(\bfX_T - \bfX_t)/(T-t) - v_\theta(t,\bfX_t)}^2}.
\end{equation}
Letting $\sigma \to 0$, we recover Flow Matching models (see \Cref{subsec:fm_relationship_appendix} for further details).

\subsection{Schr\"odinger Bridges and Optimal Transport}
The Schr\"odinger Bridge (SB) problem \citep{schrodinger1932theorie} consists in finding a path measure $\PSB \in \pathmeas$
such that
\begin{equation}
  \PSB = \argmin_{\Pbb} \ensembleLigne{\KLLigne{\Pbb}{\Qbb}}{\Pbb_0 = \pi_0, \ \Pbb_T = \pi_T} ,
  \label{eq:schrodinger_bridge}  
\end{equation}
where $\Qbb \in \pathmeas$ is a reference path measure.  In what follows, we
consider $\Qbb$ defined by the diffusion process \eqref{eq:reference_measure}
which is Markov, and without loss of generality, we assume $\Qbb_0=\pi_0$.
Hence $\PSB$ is the path measure closest to $\Qbb$ in terms of Kullback--Leibler
divergence which satisfies the initial and terminal constraints $\PSB_0 = \pi_0$
and $\PSB_T = \pi_T$.  

Another crucial property of $\PSB$ is that it can also be defined as a mixture of bridges
$\PSB = \PiSB_{0,T} \Qbb_{|0,T}$, where
$\PiSB_{0,T} = \argmin_{\Pi_{0,T}}
\ensembleLigne{\KLLigne{\Pi_{0,T}}{\Qbb_{0,T}}}{\Pi_0 = \pi_0, \ \Pi_T = \pi_T}$
is the solution of the \emph{static} SB problem \citep{leonard2014survey}. In
particular, for $\Qbb$ associated with $(\sigma \bfB_t)_{t \in \ccint{0,T}}$ we have 
\begin{equation}
\PiSB_{0,T} = \argmin_{\Pi_{0,T}} \ensembleLigne{\mathbb{E}_{\Pi_{0,T}}[||\bfX_0-\bfX_T||^2-2\sigma^2 T~\textup{H}(\Pi_{0,T})}{\Pi_0 = \pi_0, \ \Pi_T = \pi_T},
\end{equation}
where $\textup{H}(\mu)$
denotes the
entropy, i.e. $\PiSB_{0,T}$ is the solution of the entropy-regularized OT problem. 
In this case, the SB can also be obtained theoretically by solving the following problem \citep{dai1991stochastic}
\begin{equation}\label{eq:BenamouBrenierSBOT}
  \textstyle
v_{\textup{SB}}=\argmin_{v} \ensembleLigne{\int^T_0 \mathbb{E}_{\P_t}[||v(t,\bfX_t)||^2]\rmd t}{\rmd \bfX_t=v(t,\bfX_t)\rmd t+ \sigma\rmd \bfB_t, \,~\P_0=\pi_0,~ \P_T=\pi_T}.
\end{equation}
Then $\PSB$ is given by the SDE with drift $v_{\textup{SB}}$ initialized with $\bfX_0 \sim \pi_0$. 
For $\sigma=0$, we recover the classical OT problem and the Benamou--Brenier formula \citep{benamou2000computational}. 

A common approach to solve \eqref{eq:schrodinger_bridge} is the
Iterative Proportional Fitting (IPF) method 
\citep{fortet1940resolution,kullback1968probability,ruschendorf1995convergence} defining a sequence of path measures $(\tilde{\Pbb}^{n})_{n \in \nset}$ where
\begin{equation}
  \label{eq:sinkhorn}
  \hspace{0.1cm} \tilde{\Pbb}^{2n+1} = \argmin_{\tilde{\Pbb}} \ensembleLigne{\KLLigne{\tilde{\Pbb}}{\tilde{\Pbb}^{2n}}}{\tilde{\Pbb}_T = \pi_T} , \ \tilde{\Pbb}^{2n+2} = \argmin_{\tilde{\Pbb}} \ensembleLigne{\KLLigne{\tilde{\Pbb}}{\tilde{\Pbb}^{2n+1}}}{\tilde{\Pbb}_0 = \pi_0} , \hspace{-0.2cm}
\end{equation}
with initialization $\tilde{\Pbb}^0 = \Qbb$. This procedure alternates between
projections on the set of path measures with given initial distribution $\pi_0$ and terminal distribution $\pi_T$. It can be shown \citep{debortoli2021diffusion}
that $(\tilde{\Pbb}^n)_{n \in \nset}$ are associated with diffusions and
that for any $n \in \nset$, $\tilde{\Pbb}^{2n+1}$ is the time-reversal of
$\tilde{\Pbb}^{2n}$ with initialization $\pi_T$, and $\tilde{\Pbb}^{2n+2}$ is the time-reversal
of $\tilde{\Pbb}^{2n+1}$ with initialization $\pi_0$. 
Leveraging this property, \cite{debortoli2021diffusion} proposed Diffusion
Schr\"odinger Bridge (DSB), an algorithm which learns the time-reversals
iteratively.  In particular, DDMs can be seen as the first iteration of DSB.

\section{Iterative Markovian Fitting}
\label{sec:iterative-markovian-fitting}

\subsection{Markovian Projection and Reciprocal Projection}
\label{subsec:markovian_projection}

\paragraph{Markovian Projection.} Projecting on Markov measures is a key ingredient in our methodology and
in the Bridge Matching framework. This concept was introduced multiple times in
the literature \citep{gyongy1986mimicking,peluchettinon,liu2022let}. 
In particular, we focus on Markovian projection of path measures given by a mixture of bridges
$\Pi = \Pi_{0,T} \Qbb_{|0,T} \in \pathmeas$.

\begin{definition}
  \label{def:markovian_proj}
  Assume that $\Qbb$ is given by \eqref{eq:reference_measure} and that for any
  $(x_0,x_T) \in \rset^d$, $\Qbb_{|0,T}(\cdot|x_0,x_T)$ is associated with
  $(\bfX_t^{0,T})_{t \in \ccint{0,T}}$ given by
  $\rmd \bfX_t^{0,T} = \{f_t(\bfX_t^{0,T}) + \sigma_t^2 \nabla \log
  \Qbb_{T|t}(x_T|\bfX_t^{0,T})\} \rmd t + \sigma_t \rmd \bfB_t$, with
  $\sigma: \ \ccint{0,T} \to \ooint{0,+\infty}$. Then, when it is well-defined,
  we introduce the \emph{Markovian projection} of $\Pi$,
  $\Mbb^\star = \projM(\Pi) \in \calM$, which is associated with the SDE
  \begin{equation}
    \rmd \bfX^\star_t = \{f_t(\bfX^\star_t) + v_t^\star(\bfX^\star_t)\}\rmd t + \sigma_t \rmd \bfB_t, \qquad v_t^\star(x_t) =  \sigma_t^2 \CPELigne{\Pi_{T|t}}{\nabla \log \Qbb_{T|t}(\bfX_T|\bfX_t)}{\bfX_t=x_t}.
  \end{equation}  
\end{definition}

Note that in our definition $\sigma_t>0$ so $\nabla \log \Qbb_{T|t}(x_T|x_t)$ is
well-defined, but Flow Matching can be recovered as the \emph{deterministic}
case in the limit $\sigma_t=\sigma \to 0$. 
In the following proposition, we show that the Markovian projection is indeed a
projection for the \textit{reverse} Kullback--Leibler divergence, and that it
preserves marginals of $\Pi_t$. 

\begin{proposition}
  \label{prop:markovian-projection}
  Assume that $\sigma_t > 0$. Let $\Mbb^\star = \projM(\Pi)$. Then, under mild assumptions, we have
  \begin{align}    
    &\textstyle \Mbb^\star = \argmin_{\Mbb} \ensembleLigne{\KLLigne{\Pi}{\Mbb}}{\Mbb \in \calM} , \\
    & \textstyle \KLLigne{\Pi}{\Mbb^\star} = \tfrac{1}{2} \int_0^T \expeLigne{\Pi_{0,t}}{\normLigne{\sigma_t^2 \CPELigne{\Pi_{T|0,t}}{\nabla \log \Qbb_{T|t}(\bfX_T|\bfX_t)}{\bfX_0, \bfX_t} - v_t^\star(\bfX_t)}^2}/\sigma_t^2 \rmd t  .
  \end{align}
  In addition, we have that for any $t \in \ccint{0,T}$, $\Mbb^\star_t = \Pi_t$. In particular, $\Mbb^\star_T = \Pi_T$.
\end{proposition}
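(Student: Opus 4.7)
The plan is to identify the drift of $\Pi$ in its own natural filtration, apply Girsanov's theorem to recast $\KLLigne{\Pi}{\Mbb}$ as an $L^2$ distance between drifts, minimize pointwise, and invoke a mimicking argument for the marginal preservation.

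First I would compute the drift of $\Pi$ in its natural filtration. By construction $\Pi = \Pi_{0,T}\Qbb_{|0,T}$, so conditionally on $(\bfX_0, \bfX_T)$ the canonical process is a $\Qbb$-bridge SDE with drift $f_t(\bfX_t) + \sigma_t^2 \nabla \log \Qbb_{T|t}(\bfX_T|\bfX_t)$ and diffusion $\sigma_t$. Using Bayes' rule together with the Markov property of $\Qbb$, the bridge density on $[0,t]$ given endpoints $(x_0,x_T)$ factorizes into an $x_T$-free piece times $\Qbb_{T|t}(x_T|\bfX_t)/\Qbb_{T|0}(x_T|x_0)$; therefore the regular conditional of $\bfX_T$ given the full past $(\bfX_s)_{s \le t}$ depends only on $(\bfX_0,\bfX_t)$. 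Consequently, the drift of $\Pi$ in its natural filtration at time $t$ equals
\begin{equation*}
b_t(\bfX_0,\bfX_t) \;=\; f_t(\bfX_t) + \sigma_t^2\,\CPELigne{\Pi_{T|0,t}}{\nabla \log \Qbb_{T|t}(\bfX_T|\bfX_t)}{\bfX_0,\bfX_t}.
\end{equation*}

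Next, for any $\Mbb \in \calM$ with drift $f_t + u_t(x_t)$ and the same diffusion $\sigma_t$, Girsanov's theorem applied on the natural filtration (and noting that $\Mbb_0 \neq \Pi_0$ forces $\KLLigne{\Pi}{\Mbb}=+\infty$) yields
\begin{equation*}
\KLLigne{\Pi}{\Mbb} \;=\; \tfrac{1}{2}\int_0^T \expeLigne{\Pi}{\normLigne{b_t(\bfX_0,\bfX_t) - f_t(\bfX_t) - u_t(\bfX_t)}^2/\sigma_t^2}\,\rmd t.
\end{equation*}
Conditioning the integrand on $\bfX_t$ and using the orthogonality of conditional expectation, the pointwise minimizer in $u_t(\cdot)$ is $u_t^\star(x_t) = \expeLigne{\Pi}{b_t - f_t \mid \bfX_t = x_t}$; applying the tower property collapses the inner conditional expectation $\CPELigne{\Pi_{T|0,t}}{\cdot}{\bfX_0,\bfX_t}$ down to $\CPELigne{\Pi_{T|t}}{\cdot}{\bfX_t}$, so $u_t^\star = v_t^\star$ and hence $\Mbb^\star = \projM(\Pi)$ is the minimizer. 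Substituting $u_t = v_t^\star$ back into the quadratic gives the explicit formula for $\KLLigne{\Pi}{\Mbb^\star}$ stated in the proposition.

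Finally, the marginal preservation $\Mbb^\star_t = \Pi_t$ follows from Gy\"ongy's mimicking theorem: since the drift of $\Mbb^\star$ is by construction the conditional expectation of the $\Pi$-drift given $\bfX_t$, and the diffusion coefficient is deterministic and common to both, the unique weak solution of the $\Mbb^\star$ SDE shares the one-dimensional marginals of $\Pi$. The main obstacle is controlling the regularity hidden under ``mild assumptions'': one needs Novikov's condition to validate Girsanov, enough regularity of $v_t^\star$ to guarantee well-posedness of the $\Mbb^\star$ SDE (hence applicability of the mimicking theorem), and sufficient integrability for the conditional expectations to be meaningful in $L^2(\Pi)$.
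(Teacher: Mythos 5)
Your proposal is correct and follows essentially the same route as the paper's proof: identify the drift of $\Pi$ in its own filtration as the conditional expectation of the bridge drift given $(\bfX_0,\bfX_t)$, apply the entropic Girsanov theorem to write $\KLLigne{\Pi}{\Mbb}$ as an $L^2$ drift discrepancy, minimize pointwise via the tower property, and obtain marginal preservation from a mimicking/Fokker--Planck argument. The only cosmetic differences are that the paper derives the conditional drift through the Doob $h$-transform function $\varphi_{t|0}$ (following Palmowski--Rolski) rather than your direct Bayes factorization of the bridge, and phrases the last step as uniqueness of solutions to the Fokker--Planck equation rather than citing Gy\"ongy's theorem; these amount to the same computations.
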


\paragraph{Reciprocal Projection.} While the Markovian projection
ensures that the obtained  measure is Markov, the associated \emph{bridge} measure is
not preserved in general, i.e.
$\projM(\Pi)_{|0,T} \neq \Pi_{|0,T} = \Qbb_{|0,T}$. 
Measures with same bridge as $\Qbb$ are said to be in its \emph{reciprocal class}
\citep{leonard2014reciprocal}. 

\begin{definition}
  $\Pi \in \pathmeas$ is in the reciprocal class $\calR(\Qbb)$ of $\Qbb \in \calM$ if
  $\Pi = \Pi_{0,T} \Qbb_{|0,T}$. 
  We define the
  \emph{reciprocal projection} of $\Pbb \in \pathmeas$ as
  $\Pi^\star = \projR{\Qbb}(\Pbb) =  \Pbb_{0,T} \Qbb_{|0,T} $.
  \label{def:reciprocal_projection}
\end{definition}

Similarly to \Cref{prop:markovian-projection}, we have the following
result, which justifies the term reciprocal projection.

\begin{proposition}
  \label{prop:reciprocal-projection}
  Let $\Pbb \in \pathmeas$, $\Pi^\star = \projR{\Qbb}(\Pbb)$. Then, $\Pi^\star = \argmin_{\Pi} \ensembleLigne{\KLLigne{\Pbb}{\Pi}}{ \Pi \in \calR(\Qbb)}$.
\end{proposition}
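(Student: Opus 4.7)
\textbf{Proof plan for \Cref{prop:reciprocal-projection}.} The approach is to disintegrate the KL divergence along the joint endpoint marginal $(\bfX_0,\bfX_T)$ and exploit the fact that membership in $\calR(\Qbb)$ constrains only the bridge part of $\Pi$, leaving the endpoint joint $\Pi_{0,T}$ free.

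First, I would recall the chain rule for KL divergence for path measures: for any $\Pbb,\Pi \in \pathmeas$ with $\Pbb \ll \Pi$,
\begin{equation*}
  \KLLigne{\Pbb}{\Pi} = \KLLigne{\Pbb_{0,T}}{\Pi_{0,T}} + \expeLigne{\Pbb_{0,T}}{\KLLigne{\Pbb_{|0,T}(\cdot|\bfX_0,\bfX_T)}{\Pi_{|0,T}(\cdot|\bfX_0,\bfX_T)}},
\end{equation*}
obtained by conditioning on the endpoint pair $(\bfX_0,\bfX_T)$. This is the standard disintegration identity; I would only need to invoke it, not reprove it.

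Next, I would use the defining property of $\calR(\Qbb)$: for every $\Pi \in \calR(\Qbb)$, the bridge disintegration satisfies $\Pi_{|0,T}(\cdot|x_0,x_T) = \Qbb_{|0,T}(\cdot|x_0,x_T)$ for $\Pi_{0,T}$-a.e.\ $(x_0,x_T)$. Hence the second term in the disintegration becomes $\expeLigne{\Pbb_{0,T}}{\KLLigne{\Pbb_{|0,T}(\cdot|\bfX_0,\bfX_T)}{\Qbb_{|0,T}(\cdot|\bfX_0,\bfX_T)}}$, which no longer depends on the choice of $\Pi \in \calR(\Qbb)$. Consequently, minimizing $\KLLigne{\Pbb}{\Pi}$ over $\calR(\Qbb)$ reduces to minimizing only the first term $\KLLigne{\Pbb_{0,T}}{\Pi_{0,T}}$ over admissible endpoint marginals $\Pi_{0,T}$.

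Finally, since any probability $\Pi_{0,T}$ on $\rset^d \times \rset^d$ is realizable as the endpoint of some element of $\calR(\Qbb)$ via $\Pi := \Pi_{0,T}\Qbb_{|0,T}$, the minimization is unconstrained. By the nonnegativity of KL and its vanishing iff the arguments coincide, the unique minimizer is $\Pi_{0,T}^\star = \Pbb_{0,T}$, yielding $\Pi^\star = \Pbb_{0,T} \Qbb_{|0,T} = \projR{\Qbb}(\Pbb)$, as claimed. The only delicate point is ensuring the disintegration identity is justified under the ``mild assumptions'' implicit in the statement; this reduces to the existence of regular conditional distributions $\Pbb_{|0,T}$ and $\Qbb_{|0,T}$, which is standard since $\rset^d$ is Polish, together with an absolute continuity check to rule out the trivial case $\KLLigne{\Pbb}{\Pi}=+\infty$.
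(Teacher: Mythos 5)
Your proof is correct and follows essentially the same route as the paper's: both disintegrate $\KLLigne{\Pbb}{\Pi}$ into the endpoint term $\KLLigne{\Pbb_{0,T}}{\Pi_{0,T}}$ plus an expected bridge term, observe that the constraint $\Pi\in\calR(\Qbb)$ fixes the bridge term to the one involving $\Qbb_{|0,T}$ independently of the choice of $\Pi$, and conclude that the minimizer is $\Pbb_{0,T}\Qbb_{|0,T}$. The only difference is that you spell out the realizability of arbitrary endpoint couplings and the measure-theoretic caveats, which the paper leaves implicit.
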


The reciprocal projection $\Pi^\star$ of a Markov path measure $\Mbb$
does not preserve the Markov property in general. In fact, the Schr\"odinger
Bridge is the \emph{unique} path measure which satisfies the initial and
terminal conditions, is Markov and is in the reciprocal class of $\Qbb$, see
\citep{leonard2014survey}. 

\begin{proposition}
  \label{prop:schro-markov-recip}
  Let $\Pbb$ be a Markov measure in the reciprocal class of $\Qbb$ such
  that $\Pbb_0 = \pi_0$, $\Pbb_T = \pi_T$. Then, under 
  assumptions on $\Qbb$, $\pi_0$ and $\pi_T$, $\Pbb$ is unique and is equal to the Schr\"odinger Bridge $\PSB$.
\end{proposition}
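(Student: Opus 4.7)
I would establish existence and uniqueness separately. For existence, the Schrödinger bridge $\PSB$ itself satisfies all three properties: the static formulation $\PSB = \PiSB_{0,T} \Qbb_{|0,T}$ puts it in $\calR(\Qbb)$ by \Cref{def:reciprocal_projection}; the boundary constraints in \eqref{eq:schrodinger_bridge} give $\PSB_0 = \pi_0$, $\PSB_T = \pi_T$; and the Markovianity of $\PSB$ follows from the SDE representation via $v_{\textup{SB}}$ in \eqref{eq:BenamouBrenierSBOT}. So the existence side is immediate, and the real content is uniqueness.

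For uniqueness, the strategy is to show that any $\Pbb$ satisfying the three properties must have the same joint $\Pbb_{0,T}$ as $\PiSB_{0,T}$; then $\Pbb = \Pbb_{0,T}\Qbb_{|0,T} = \PiSB_{0,T}\Qbb_{|0,T} = \PSB$ by reciprocity. To identify $\Pbb_{0,T}$, I would prove that the Radon--Nikodym derivative $d\Pbb_{0,T}/d\Qbb_{0,T}$ factorizes as $\phi(x_0)\psi(x_T)$, which is the Schrödinger system form characterizing $\PiSB_{0,T}$. The key manipulation is: pick any intermediate time $s \in \ooint{0,T}$ and compute $\Pbb_{0,s,T}$ two ways. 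On the one hand, reciprocity gives $\Pbb_{0,s,T}(dx_0, dx_s, dx_T) = \Pbb_{0,T}(dx_0, dx_T) \Qbb_{s|0,T}(dx_s | x_0, x_T)$. On the other hand, the Markov property of $\Pbb$ (combined with time reversal) gives $\Pbb_{0,s,T}(dx_0, dx_s, dx_T) = \Pbb_s(dx_s)\, \Pbb_{0|s}(dx_0|x_s)\, \Pbb_{T|s}(dx_T|x_s)$. Since $\Qbb$ is itself Markov, $\Qbb_{s|0,T}$ similarly decomposes, and taking the ratio yields
\begin{equation*}
\frac{d\Pbb_{0,T}}{d\Qbb_{0,T}}(x_0, x_T) \;=\; \frac{d\Pbb_s}{d\Qbb_s}(x_s)\, \frac{d\Pbb_{0|s}}{d\Qbb_{0|s}}(x_0|x_s)\, \frac{d\Pbb_{T|s}}{d\Qbb_{T|s}}(x_T|x_s).
\end{equation*}
The left-hand side is independent of $x_s$, so freezing any value $x_s = x_s^\ast$ in the right-hand side produces the factorization $d\Pbb_{0,T}/d\Qbb_{0,T} = \phi(x_0)\psi(x_T)$ for explicit $\phi,\psi$.

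Once this factorization is established, the marginal constraints $\Pbb_0 = \pi_0$ and $\Pbb_T = \pi_T$ translate into the classical Schrödinger system
\begin{equation*}
\phi(x_0)\int \psi(x_T)\,\Qbb_{T|0}(dx_T|x_0) = \frac{d\pi_0}{d\Qbb_0}(x_0), \qquad \psi(x_T)\int \phi(x_0)\,\Qbb_{0|T}(dx_0|x_T) = \frac{d\pi_T}{d\Qbb_T}(x_T),
\end{equation*}
and invoking the standard existence/uniqueness result for this system (e.g.\ Léonard's survey, requiring the finite-entropy and positivity conditions that the unstated assumptions on $\Qbb, \pi_0, \pi_T$ provide) identifies $(\phi,\psi)$ up to the trivial rescaling, hence pins down $\Pbb_{0,T} = \PiSB_{0,T}$.

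\textbf{Main obstacle.} The delicate step is handling densities cleanly: the argument above tacitly assumes that the conditional laws $\Pbb_{0|s}, \Pbb_{T|s}, \Qbb_{0|s}, \Qbb_{T|s}, \Qbb_s$ admit densities with respect to each other and that one can freeze $x_s^\ast$ in a set of positive measure. These regularity issues, together with the invocation of Schrödinger-system uniqueness, are precisely what the ``assumptions on $\Qbb$, $\pi_0$ and $\pi_T$'' are meant to supply; pinning down a minimal sufficient set of such assumptions, and verifying the almost-everywhere selection of $x_s^\ast$ needed for the factorization to be genuine rather than formal, is the technical heart of the argument.
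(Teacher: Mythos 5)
Your plan is correct in substance, but note that the paper does not prove this statement from scratch: its proof is a two-line reduction to Theorem~2.12 of L\'eonard's survey together with Theorem~2.14 of L\'eonard's paper on reciprocal classes, the latter being precisely the statement that a Markov measure in the reciprocal class of $\Qbb$ has endpoint density of product form $\phi(x_0)\psi(x_T)$ with respect to $\Qbb_{0,T}$. Your sketch is essentially a reconstruction of the proofs behind those citations: the three-time decomposition $\Pbb_{0,s,T}=\Pbb_{0,T}\,\Qbb_{s|0,T}=\Pbb_s\,\Pbb_{0|s}\,\Pbb_{T|s}$ followed by freezing $x_s$ is exactly the Jamison/L\'eonard argument for the product factorization, and reducing the marginal constraints to the Schr\"odinger system is the standard route to uniqueness. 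Two points deserve care if you flesh this out. First, as you already flag, freezing $x_s^\ast$ requires the equality of densities to hold $\Pbb_{0,s,T}$-almost everywhere together with a Fubini-type selection of a good $x_s^\ast$; this is where the positivity and mutual absolute-continuity hypotheses on $\Qbb$ (the conditions $\rmd \Qbb_{0,T}/\rmd(\bar\Qbb\otimes\bar\Qbb)\geq \exp[-A(x_0)-A(x_T)]$ and $\Qbb_{0,T}\ll\Qbb_{0,T|t_0}$ in the version restated in the paper's appendix) enter. Second, and more substantively, the factorization $\rmd\Pbb_{0,T}/\rmd\Qbb_{0,T}=\phi(x_0)\psi(x_T)$ together with the marginal constraints identifies $\Pbb_{0,T}$ with $\PiSB_{0,T}$ only under an integrability condition such as $\KLLigne{\Pbb}{\Qbb}<+\infty$, which the paper's restated theorem explicitly assumes; without it there can be product-form couplings with the prescribed marginals that are not the entropic minimizer, so ``invoking the standard uniqueness result for the Schr\"odinger system'' is not purely formal and must carry the finite-entropy hypothesis along. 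With those caveats your argument matches the one the cited theorems encapsulate.
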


\subsection{Iterative Markovian Fitting}
\label{sec:iter-mark-fitt}
Based on \Cref{prop:schro-markov-recip}, we propose a novel methodology called
\textit{Iterative Markovian Fitting} (IMF) to solve Schr\"odinger Bridges.  We
consider a sequence $(\Pbb^n)_{n \in \nset}$ such that
\begin{equation}  
  \label{eq:imp}
  \Pbb^{2n+1} = \projM(\Pbb^{2n}) , \qquad \Pbb^{2n+2} = \projR{\Qbb}(\Pbb^{2n+1}) ,
\end{equation}
with $\Pbb^0$ such that $\Pbb^0_0 = \pi_0$, $\Pbb^0_T=\pi_T$ and
$\Pbb^0 \in \calR(\Qbb)$.  These updates correspond to alternatively performing
Markovian projections and reciprocal projections. 

Combining \Cref{prop:markovian-projection} and
\Cref{def:reciprocal_projection}, we get that for any $n \in \nset$,
$\Pbb^n_0 = \pi_0$ and $\Pbb^n_T = \pi_T$.
This property is in contrast to the IPF
algorithm \eqref{eq:sinkhorn} for which the marginals at the initial and final times are
\textit{not} preserved.  
We highlight this duality between IPF \eqref{eq:sinkhorn} and IMF \eqref{eq:imp} in \Cref{tab:diff}.

We conclude this section with a theoretical analysis of IMF.  First, we
start by showing a Pythagorean theorem for both the Markovian projection and the
reciprocal projection. 

\begin{lemma}
  \label{lemma:pythagorean_theorem}
  Under mild assumptions, 
  if $\Mbb \in \calM$, $\Pi \in \calR(\Qbb)$ and $\KLLigne{\Pi}{\Mbb} < +\infty$, we have
  \begin{equation}
    \KLLigne{\Pi}{\Mbb} = \KLLigne{\Pi}{\projM(\Pi)} +  \KLLigne{\projM(\Pi)}{\Mbb} . 
  \end{equation}
  If $\KLLigne{\Mbb}{\Pi} < +\infty$, we have 
  \begin{equation}
    \KLLigne{\Mbb}{\Pi} = \KLLigne{\Mbb}{\projR{\Qbb}(\Mbb)} +  \KLLigne{\projR{\Qbb}(\Mbb)}{\Pi} . 
  \end{equation}  
\end{lemma}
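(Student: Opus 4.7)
The plan is to establish the two identities separately: the reciprocal Pythagorean identity follows essentially from the chain rule for relative entropy applied to the endpoint disintegration, while the Markovian Pythagorean identity uses a Girsanov-based decomposition of path-measure KL together with the marginal-preservation property of $\projM$.

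For the reciprocal identity, I would disintegrate any path measure $\mu \in \pathmeas$ as $\mu = \mu_{0,T}\,\mu_{|0,T}$ and apply the chain rule of KL divergence:
\begin{equation*}
\KLLigne{\Mbb}{\Pi} \;=\; \KLLigne{\Mbb_{0,T}}{\Pi_{0,T}} \;+\; \E_{\Mbb_{0,T}}\bigl[\KLLigne{\Mbb_{|0,T}}{\Pi_{|0,T}}\bigr].
\end{equation*}
Because $\Pi \in \calR(\Qbb)$ we have $\Pi_{|0,T} = \Qbb_{|0,T}$, and by definition $\projR{\Qbb}(\Mbb) = \Mbb_{0,T}\,\Qbb_{|0,T}$. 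The same chain rule then identifies the first summand on the right as $\KLLigne{\projR{\Qbb}(\Mbb)}{\Pi}$ (their conditional parts are both $\Qbb_{|0,T}$) and the second as $\KLLigne{\Mbb}{\projR{\Qbb}(\Mbb)}$ (their $0,T$-marginals are both $\Mbb_{0,T}$). Adding gives the identity.

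For the Markovian identity, I would use Girsanov's theorem. Let $\Mbb^\star = \projM(\Pi)$, so that $\Mbb^\star$ has drift $f_t + v^\star_t$ with $v^\star_t(x_t) = \sigma_t^2\,\CPELigne{\Pi_{T|t}}{\nabla \log \Qbb_{T|t}(\bfX_T|\bfX_t)}{\bfX_t = x_t}$, and write the drift of $\Mbb$ as $f_t + v^\Mbb_t$. Since $\Pi = \Pi_{0,T}\Qbb_{|0,T}$, conditioning on $(\bfX_0, \bfX_T)$ shows $\Pi$ is, pathwise, a Markov bridge with drift $f_t + \sigma_t^2 \nabla \log \Qbb_{T|t}(\bfX_T|\bfX_t)$; projecting onto the natural filtration and using the Markov property of the bridge gives that $\Pi$ has drift
\begin{equation*}
b^\Pi_t \;=\; f_t(\bfX_t) + \sigma_t^2\,\CPELigne{\Pi_{T|0,t}}{\nabla \log \Qbb_{T|t}(\bfX_T|\bfX_t)}{\bfX_0,\bfX_t}
\end{equation*}
in its own filtration. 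Girsanov then yields $\KLLigne{\Pi}{\Mbb} = \tfrac12\int_0^T \E_\Pi[\|b^\Pi_t - f_t - v^\Mbb_t\|^2/\sigma_t^2]\,\rmd t$. Decomposing $b^\Pi_t - f_t - v^\Mbb_t = (b^\Pi_t - f_t - v^\star_t) + (v^\star_t - v^\Mbb_t)$ and expanding the square, the cross term vanishes because $(v^\star_t - v^\Mbb_t)$ depends only on $\bfX_t$ while $\CPELigne{\Pi}{b^\Pi_t - f_t - v^\star_t}{\bfX_t} = 0$ by the tower property and the very definition of $v^\star_t$. The first diagonal term is exactly the expression for $\KLLigne{\Pi}{\Mbb^\star}$ recorded in \Cref{prop:markovian-projection}; the second diagonal term, using that $(v^\star_t - v^\Mbb_t)$ depends only on $\bfX_t$ together with $\Pi_t = \Mbb^\star_t$ (also from \Cref{prop:markovian-projection}), equals $\tfrac12\int_0^T \E_{\Mbb^\star_t}[\|v^\star_t - v^\Mbb_t\|^2/\sigma_t^2]\,\rmd t = \KLLigne{\Mbb^\star}{\Mbb}$ by Girsanov again.

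The hard part is justifying the Girsanov representation rigorously: one must verify that the conditional drift under $\Pi$ in its natural filtration coincides with the formula above (this is where the Markov-bridge structure of $\Qbb_{|0,T}(\cdot|x_0,x_T)$ is used), check Novikov-type integrability so that Girsanov applies, and ensure $v^\star_t$ is a genuine square-integrable drift (so that $\Mbb^\star$ is a well-defined element of $\calM$). The hypothesis $\KLLigne{\Pi}{\Mbb} < +\infty$ and the ``mild assumptions'' are precisely what absorb these technicalities; once they are in place, the tower/orthogonality argument for the cross term and the marginal-preservation identity $\Pi_t = \Mbb^\star_t$ do the rest.
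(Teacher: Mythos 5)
Your proposal is correct and follows essentially the same route as the paper: the reciprocal identity is the additive (chain-rule/Csisz\'ar) decomposition of the KL divergence over the endpoint marginal and the bridge, and the Markovian identity is obtained by expressing all three KL terms via Girsanov as integrated squared drift differences and killing the cross term with the tower property (using that $v^\star_t - v^{\Mbb}_t$ is $\sigma(\bfX_t)$-measurable and that $\Pi_t = \projM(\Pi)_t$). The paper merely reads the same computation in the opposite direction, expanding the sum of the two right-hand terms and recombining it into $\KLLigne{\Pi}{\Mbb}$.
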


Using \Cref{lemma:pythagorean_theorem}, we have the following proposition.

\begin{proposition}
  \label{prop:convergence_marginals}
  Under mild assumptions, we have 
  $\KLLigne{\Pbb^{n+1}}{\PSB} \leq \KLLigne{\Pbb^{n}}{\PSB} < \infty$, and $\lim_{n \to +\infty} \KLLigne{\Pbb^n}{\Pbb^{n+1}} = 0$. 
\end{proposition}

Hence, for the IMF
sequence $(\Pbb^n)_{n \in \nset}$, the Markov path measures $(\Pbb^{2n+1})_{n \in \nset}$ are getting closer
to the reciprocal class, while the reciprocal path measures
$(\Pbb^{2n+2})_{n \in \nset}$ are getting closer to the set of Markov measures.
\Cref{prop:convergence_marginals} should be compared with \cite[Proposition
2.1, Equation (2.16)]{ruschendorf1995convergence} which shows that, for the IPF
sequence $(\tilde{\Pbb}^n)_{n \in \nset}$, we have
$\lim_{n \to +\infty} \KLLigne{\tilde{\Pbb}^{n+1}}{\tilde{\Pbb}^n} = 0$. This result is similar 
to \Cref{prop:convergence_marginals} but for the \emph{forward}
Kullback--Leibler divergence. 

Using \Cref{prop:convergence_marginals}, we finally prove the convergence of the IMF sequence $(\Pbb^n)_{n \in \nset}$ to the Schr\"odinger Bridge. This result was first shown in the concurrent work \cite[Theorem 2]{peluchetti2023diffusion}. 
We present 
a simpler proof in \Cref{subsec:proof_thm_convergence_imf}. 

\begin{theorem}
    \label{thm:convergence_imf}
    Under mild assumptions, the IMF sequence $(\Pbb^n)_{n \in \nset}$ admits a unique fixed point $\Pstar=\PSB$, and $\lim_{n \to +\infty} \KLLigne{\Pbb^n}{\Pstar} = 0$. 
\end{theorem}

\section{Diffusion \schro Bridge Matching}
\label{sec:pract-meth}
In this section, we present  Diffusion \schro Bridge Matching (DSBM), a practical
algorithm for solving the SB problem obtained by combining the IMF procedure
with Bridge Matching. 

\paragraph{Iterative Markovian Fitting in practice. }
IMF alternatively projects on the Markov class $\calM$ and
the reciprocal class $\calR(\Qbb)$. We denote $\Mbb^{n+1} = \Pbb^{2n+1} \in \calM$ and $\Pi^n = \Pbb^{2n} \in \calR(\Qbb)$. Assuming we know how to sample from the
bridge $\Qbb_{|0,T}$ given the initial and terminal conditions, sampling from
the reciprocal projection $\projR{\Qbb}(\Mbb)$ is simple: First, sample $(\bfX_0,\bfX_T)$ from the joint distribution
$\Mbb_{0,T}$.\footnote{In practice, we sample the SDE associated with $\Mbb$ and
  save a batch of joint samples $(\bfX_0, \bfX_T)$. This is similar to the
  \emph{trajectory caching} procedure in \cite{debortoli2021diffusion}, but we
  only retain initial and final samples.} Then, sample from the bridge
$\Qbb_{|0,T}(\cdot|\bfX_0,\bfX_T)$.
The bottleneck of IMF is in the computation of Markovian projections. By
\Cref{def:markovian_proj}, $\Mbb^\star=\projM(\Pi)$ is associated with the process
\begin{equation}
  \rmd \bfX_t = \{ f_t(\bfX_t) + \sigma_t^2 \CPELigne{\Pi_{T|t}}{\nabla \log \Qbb_{T|t}(\bfX_T|\bfX_t)}{\bfX_t}\} \rmd t + \sigma_t \rmd \bfB_t , \qquad \bfX_0 \sim \pi_0 . 
\end{equation}
By Proposition \ref{prop:markovian-projection}, we can learn
$\Mbb^\star$ using $\Mbb^{\theta^\star}$ given by 
\begin{align}
&  \label{eq:approximate_markovian_proj_forward}
  \rmd \bfX_t = \{ f_t(\bfX_t) +  v_{\theta^\star}(t, \bfX_t) \} \rmd t + \sigma_t \rmd \bfB_t , \qquad \bfX_0 \sim \pi_0 , \\
  \label{eq:loss_function}
 & \textstyle \theta^\star = \argmin_\theta \ensembleLigne{\int_0^T \expeLigne{\Pi_{t,T}}{\normLigne{\sigma_t^2 \nabla
    \log \Qbb_{T|t}(\bfX_T|\bfX_t) - v_\theta(t,\bfX_t)}^2}/\sigma_t^2 \rmd t }{\theta \in \Theta},
\end{align}
where $\ensembleLigne{v_\theta}{\theta \in \Theta}$ is a parametric family of
functions, usually given by a neural network. The optimal $v_{\theta^\star}(t, x_t) = \sigma_t^2 \CPELigne{\Pi_{T|t}}{\nabla \log
    \Qbb_{T|t}(\bfX_T|\bfX_t)}{\bfX_t = x_t}$ 
for any
$t \in \ccint{0,T}$ and $x_t \in \rset^d$. 

With the above two procedures for computing $\projR{\Qbb}(\Mbb)$ and $\projM(\Pi)$, we can
now describe a numerical method implementing IMF \eqref{eq:imp}. Let $\Pi^0=   \Pi_{0,T}^0 \Qbb_{|0,T}$ where
$\Pi^0_0 = \pi_0$, $\Pi^0_T = \pi_T$. 
Learn $\Mbb^1 \approx \projM(\Pi^0)$ given by
\eqref{eq:approximate_markovian_proj_forward} with $v_{\theta^\star}$ given by 
\eqref{eq:loss_function}.  Next, sample from $\Pi^1 = \projR{\Qbb}(\Mbb^1)= \Mbb^1_{0,T} \Qbb_{|0,T}$ by
sampling from $\Mbb^1_{0,T}$ and reconstructing the bridge $\Qbb_{|0,T}$. 
We iterate the process to obtain a sequence $(\Pi^n,\Mbb^{n+1})_{n \in \nset}$.
In practice, this algorithm performs poorly (see \Cref{fig:gaussian_ipf_convergence}), since the approximate
minimization \eqref{eq:loss_function} for computing $\Mbb^{n+1}$
may not admit $\Mbb^{n+1}_T = \pi_T$ exactly as in \Cref{prop:markovian-projection}. 
Instead, we incur a bias between $\Mbb^{n+1}_T$ and $\pi_T$ which accumulates for each $n \in \nset$.

To mitigate this problem, we alternate between a \emph{forward} Markovian
projection and a \emph{backward} Markovian projection. This procedure is
justified by the following proposition.
\begin{proposition}
  \label{prop:forward_backward_markov_proj}
  Assume that $\Pi=\Pi_{0,T} \Qbb_{|0,T}$ with $\Qbb$ associated with
  $\rmd \bfX_t = f_t(\bfX_t) \rmd t + \sigma_t \rmd \bfB_t$. Under mild
  conditions, the Markovian projection $\Mbb^\star = \projM(\Pi)$ is associated with both 
  \begin{align}
    &\label{eq:forward_markov_proj}
    \hspace{-0.5cm} \rmd \bfX_t = \{f_t(\bfX_t) + \sigma_t^2 \CPELigne{\Pi_{T|t}}{\nabla \log \Qbb_{T|t}(\bfX_T|\bfX_t)}{\bfX_t} \} \rmd t + \sigma_t \rmd \bfB_t , \quad \bfX_0 \sim \Pi_0 , \\
    & \label{eq:backward_markov_proj}
    \hspace{-0.5cm} \rmd \bfY_t = \{-f_{T-t}(\bfY_t) + \sigma_{T-t}^2 \CPELigne{\Pi_{0|T-t}}{\nabla \log \Qbb_{T-t|0}(\bfY_t|\bfY_T)}{\bfY_t} \} \rmd t + \sigma_{T-t} \rmd \bfB_t, \bfY_0 \sim \Pi_T.\hspace{-0.5cm} 
  \end{align}
  \vspace{-0.5cm}
\end{proposition}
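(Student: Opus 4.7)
The forward representation \eqref{eq:forward_markov_proj} is essentially the content of \Cref{def:markovian_proj} (together with the identity that $\Pi_0$ is the initial marginal of $\Pi_{0,T}$), so the real work lies in the backward representation \eqref{eq:backward_markov_proj}. My plan is to take the time-reversal of \eqref{eq:forward_markov_proj} via the Anderson/Haussmann--Pardoux formula, and then rewrite the resulting Stein score using the reciprocal class assumption $\Pi = \Pi_{0,T}\Qbb_{|0,T}$ to match the drift claimed in \eqref{eq:backward_markov_proj}.

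Set $b_t(x) = f_t(x) + \sigma_t^2 \CPELigne{\Pi_{T|t}}{\nabla \log \Qbb_{T|t}(\bfX_T|x)}{\bfX_t = x}$, so that $\Mbb^\star$ solves $\rmd \bfX_t = b_t(\bfX_t)\rmd t + \sigma_t \rmd \bfB_t$ with $\Mbb^\star_t = \Pi_t$ by \Cref{prop:markovian-projection}. Under mild regularity (locally Lipschitz $f_t$, $\sigma_t$ bounded away from $0$, smooth positive density for $\Pi_t$, finite energy of $b_t$ under $\Pi_t$), the Haussmann--Pardoux theorem yields that $\bfY_t = \bfX_{T-t}$ is a diffusion with diffusion coefficient $\sigma_{T-t}$ and drift
\begin{equation*}
\tilde b_t(y) = -b_{T-t}(y) + \sigma_{T-t}^2 \nabla \log \Pi_{T-t}(y).
\end{equation*}

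To identify $\tilde b_t$ with the drift in \eqref{eq:backward_markov_proj} I would decompose the Stein score using the reciprocal class assumption. Writing $\Pi_{T-t}(y) = \int \Qbb_{T-t|0,T}(y|x_0,x_T)\, \Pi_{0,T}(\rmd x_0,\rmd x_T)$ and invoking Bayes plus the Markov property of $\Qbb$,
\begin{equation*}
\Qbb_{T-t|0,T}(y|x_0,x_T) = \frac{\Qbb_{T-t|0}(y|x_0)\,\Qbb_{T|T-t}(x_T|y)}{\Qbb_{T|0}(x_T|x_0)},
\end{equation*}
which has $y$-log-gradient $\nabla_y \log \Qbb_{T-t|0}(y|x_0) + \nabla_y \log \Qbb_{T|T-t}(x_T|y)$. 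Differentiating $\Pi_{T-t}$ under the integral and recognizing the resulting reweighting as conditioning on $\bfX_{T-t}=y$ under $\Pi$ gives
\begin{equation*}
\nabla \log \Pi_{T-t}(y) = \CPELigne{\Pi_{0|T-t}}{\nabla_y\log\Qbb_{T-t|0}(y|\bfX_0)}{\bfX_{T-t}=y} + \CPELigne{\Pi_{T|T-t}}{\nabla_y\log\Qbb_{T|T-t}(\bfX_T|y)}{\bfX_{T-t}=y}.
\end{equation*}
Plugging this into $\tilde b_t$ makes the $\Qbb_{T|T-t}$ conditional expectation cancel the score term inside $b_{T-t}$, leaving exactly the drift of \eqref{eq:backward_markov_proj}. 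The initial condition $\bfY_0 \sim \Pi_T$ follows from $\Mbb^\star_T = \Pi_T$.

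The main obstacle I expect is not the algebra but the regularity required to justify (i) differentiating $\Pi_{T-t}(y)$ under the integral, which needs an integrable dominating function for $\nabla_y \Qbb_{T-t|0,T}(y|x_0,x_T)$ under $\Pi_{0,T}$, and (ii) applying Haussmann--Pardoux to a drift $b_t$ that is itself a conditional expectation and is unlikely to be globally Lipschitz. I would address both by imposing assumptions on $f_t$, $\sigma_t$, $\pi_0$, $\pi_T$ (e.g.\ smooth positive densities with finite second moments and controlled tails) strong enough to ensure Aronson-type bounds on the transition densities of $\Qbb$, and to upgrade the local regularity of $v_t^\star$ to the integrability required by time-reversal; the proposition's "mild conditions" presumably package exactly this.
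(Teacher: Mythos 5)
Your argument is correct and arrives at the right drift, but it is organized differently from the paper's. The paper's proof is a one-line reduction: it reruns the Doob $h$-transform/mimicking argument of \Cref{prop:markovian-projection} on the \emph{time-reversed bridge} \eqref{eq:bridge_backward}, so that both \eqref{eq:forward_markov_proj} and \eqref{eq:backward_markov_proj} appear symmetrically as SDEs whose drifts are the $\Pi_{0,T|t}$-conditional expectations of the forward and backward bridge drifts respectively; identifying the two resulting path measures then rests on the same Fokker--Planck uniqueness argument used there. You instead take the forward Markovian projection as given, apply Haussmann--Pardoux time-reversal to it, and cancel the resulting Stein score via the decomposition $\nabla \log \Pi_s(y) = \CPELigne{\Pi_{0|s}}{\nabla\log\Qbb_{s|0}(y|\bfX_0)}{\bfX_s=y} + \CPELigne{\Pi_{T|s}}{\nabla\log\Qbb_{T|s}(\bfX_T|y)}{\bfX_s=y}$, which you derive self-consistently from the bridge factorization $\Qbb_{s|0,T} = \Qbb_{s|0}\,\Qbb_{T|s}/\Qbb_{T|0}$; this is exactly the identity the paper records later (around \eqref{eq:identity_score}), but there it is \emph{deduced from} the proposition rather than used to prove it, so your derivation is not circular. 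What your route buys is that the forward and backward SDEs induce the same path measure by construction (time-reversal is an involution on path measures), with no separate uniqueness step; what it costs is the full strength of the Haussmann--Pardoux hypotheses (integrability of the score of $\Pi_t$ and of the conditional-expectation drift), which you correctly flag and which is indeed what the proposition's ``mild conditions'' and the assumptions \tref{assum:nice_diffusion}--\tref{assum:doob_h_defined} are meant to package. Both proofs are sound; yours is arguably more explicit about why the two representations describe one and the same $\Mbb^\star$.
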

In \Cref{prop:forward_backward_markov_proj}, \eqref{eq:forward_markov_proj} is the definition of the Markovian
projection, see \Cref{def:markovian_proj}. However,
\eqref{eq:backward_markov_proj} is an equivalent representation as a \emph{time-reversal}. 
In practice, $(\bfY_t)_{t \in \ccint{0,T}}$ is approximated with
\begin{align}
&  \label{eq:approximate_markovian_proj_backward}
  \rmd \bfY_t = \{ -f_{T-t}(\bfY_t) + v_{\phi^\star}(T-t, \bfY_t) \} \rmd t + \sigma_{T-t} \rmd \bfB_t , \qquad \bfY_0 \sim \pi_T , \\ 
  \label{eq:loss_function_backward}
 & \textstyle \phi^\star = \argmin_\phi \ensembleLigne{\int_0^T \expeLigne{\Pi_{0,t}}{\normLigne{\sigma_t^2 \nabla
    \log \Qbb_{t|0}(\bfX_t|\bfX_0) - v_\phi(t,\bfX_t)}^2} /\sigma_t^2 \rmd t }{\phi \in \Phi} .
\end{align}
The optimal 
  $v_{\phi^\star}(t, x_t) = \sigma_t^2 \CPELigne{\Pi_{0|t}}{\nabla \log
    \Qbb_{t|0}(\bfX_t|\bfX_0)}{\bfX_t = x_t}$ for any
$t \in \ccint{0,T}$ and $x_t \in \rset^d$.

\begin{wrapfigure}{l}{0.6\textwidth}
\begin{minipage}{0.6\textwidth}
\vspace{-0.8cm}
\begin{algorithm}[H]
\caption{Diffusion Schr\"odinger Bridge Matching}
\label{alg:DSBM_general}
\begin{algorithmic}[1]
\dsbmalgo
\end{algorithmic}
\end{algorithm}
\vspace{-1cm}
\end{minipage}
\end{wrapfigure}

Note that $\bfX_0 \sim \pi_0$ in the forward projection, while $\bfY_0 \sim \pi_T$ in the backward projection. Therefore, using the backward projection removes the bias on $\pi_T$
accumulated from the forward projection. 
Leveraging the time-symmetry of the Markovian projection 
 and alternating between \eqref{eq:approximate_markovian_proj_backward} and
\eqref{eq:approximate_markovian_proj_forward} yields the DSBM methodology summarized in \Cref{alg:DSBM_general}. 

It is also possible to learn
\emph{both} the forward and backward processes at each step, 
and enforce that 
the backward and forward processes match. We explore this in \Cref{sec:joint-train-forw}.

\paragraph{Initialization coupling.} 
We now relate \Cref{alg:DSBM_general} to the classical IPF and practical algorithms such as DSB
\citep{debortoli2021diffusion}. 
Instead of initializing DSBM with $\Pi^0_{0,T}$ given by a coupling between $\pi_0, \pi_T$, 
if we initialize it by $\Pi^0_{0,T}=\Q_{0,T}$
where $\Q_0=\pi_0$ and $\Q_{T|0}$ is given by the reference process
defined in \eqref{eq:reference_measure}, 
then DSBM also recovers the IPF iterates used in DSB.

 \begin{proposition}
    \label{prop:dsb_dsbm}
   Suppose the families of functions $\ensembleLigne{v_\theta}{\theta \in \Theta}$ and
  $\ensembleLigne{v_\phi}{\phi \in \Phi}$ are rich enough so that they can model the optimal vector fields. 
    Let $(\Pi^n, \Mbb^{n+1})_{n \in \nset}$ be the
    optimal DSBM sequence in \Cref{alg:DSBM_general} initialized with  $\Pi^0_{0,T}=\Q_{0,T}$, and let
    $(\tilde{\Pbb}^n)_{n \in \nset}$ be the optimal DSB sequence given by the
    IPF iterates in \eqref{eq:sinkhorn}. Then for any $n \in \nset, n \geq 1$, we have
    $\Mbb^n = \tilde{\Pbb}^n$.
 \end{proposition}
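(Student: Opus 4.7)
The plan is to prove by induction that $\Mbb^n = \tilde{\Pbb}^n$ for all $n \geq 1$, while simultaneously maintaining the invariant $\tilde{\Pbb}^n \in \calM \cap \calR(\Qbb)$. Two structural facts drive the argument. First, the reference measure lies in its own reciprocal class via the trivial disintegration $\Qbb = \Qbb_{0,T}\Qbb_{|0,T}$, so the initialization satisfies $\Pi^0 = \Qbb_{0,T}\Qbb_{|0,T} = \Qbb = \tilde{\Pbb}^0$. Second, the IPF half-bridge $\argmin_{\Pbb'}\{\KLLigne{\Pbb'}{\Pbb} : \Pbb'_T = \pi_T\}$ admits the closed form $\int \Pbb(\cdot \mid X_T = x_T)\pi_T(dx_T)$, and when $\Pbb \in \calM$ this coincides with the Markov process whose backward SDE has the same drift as $\Pbb$ but is initialized at $\pi_T$; the symmetric statement holds for the $X_0 = \pi_0$ constraint.

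For the base case $n=1$, since $\Pi^0 = \Qbb$, the Tweedie-type identity $\CPELigne{\Qbb_{0|t}}{\nabla\log\Qbb_{t|0}(\bfX_t|\bfX_0)}{\bfX_t=x} = \nabla\log\Qbb_t(x)$ reduces $v_{\phi^\star}$ to $\sigma_t^2\nabla\log\Qbb_t$, so $\Mbb^1$ defined by \eqref{eq:approximate_markovian_proj_backward} is the time-reversal of $\Qbb$ initialized at $\pi_T$, which is exactly $\tilde{\Pbb}^1$. For the inductive step, suppose $\Mbb^{2n} = \tilde{\Pbb}^{2n} \in \calM \cap \calR(\Qbb)$. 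Then $\Pi^{2n} = \Mbb^{2n}_{0,T}\Qbb_{|0,T} = \Mbb^{2n}$ because $\Mbb^{2n}$ is reciprocal, and since $\Mbb^{2n}$ is already Markov, Proposition~\ref{prop:forward_backward_markov_proj} yields $\projM(\Pi^{2n}) = \Mbb^{2n}$. Hence the learned drift $v_{\phi^\star}$ coincides pointwise with the backward drift of $\Mbb^{2n}$, and the DSBM update \eqref{eq:approximate_markovian_proj_backward} initialized at $\pi_T$ produces $\int \Mbb^{2n}(\cdot \mid X_T = x_T)\pi_T(dx_T)$, which by the second structural fact is $\tilde{\Pbb}^{2n+1}$. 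To propagate the reciprocal-class invariant, I disintegrate $\tilde{\Pbb}^{2n+1}(dX_{[0,T]}) = \tilde{\Pbb}^{2n}(dX_{(0,T)}\mid X_0,X_T)\,\tilde{\Pbb}^{2n+1}_{0,T}(dX_0,dX_T)$ and replace the interior conditional by $\Qbb_{|0,T}$ using $\tilde{\Pbb}^{2n} \in \calR(\Qbb)$, obtaining $\tilde{\Pbb}^{2n+1} = \tilde{\Pbb}^{2n+1}_{0,T}\Qbb_{|0,T}$; Markovianity of $\tilde{\Pbb}^{2n+1}$ is immediate from the SDE representation. The odd-to-even step is symmetric, with the forward SDE representation of $\projM$ and the $\pi_0$ constraint.

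The main obstacle is rigorously identifying the ``time-reversed SDE of a Markov $\Pbb$ started at $\pi_T$'' with the half-bridge $\int \Pbb(\cdot \mid X_T)\pi_T$: this rests on the fact that the backward transition kernel of a Markov diffusion together with a terminal marginal uniquely determines the path law, so swapping only the terminal marginal from $\Pbb_T$ to $\pi_T$ leaves all bridge conditionals $\Pbb(\cdot \mid X_0,X_T)$ and the backward dynamics unchanged. A secondary subtlety is that the ``rich enough function family'' hypothesis is used precisely to turn \eqref{eq:loss_function} and \eqref{eq:loss_function_backward} into exact pointwise identities for the conditional expectations, so that the chained equalities above are exact rather than approximate.
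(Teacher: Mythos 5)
Your proof is correct and follows essentially the same route as the paper's: induction on $n$, using the invariant $\tilde{\Pbb}^n \in \calM \cap \calR(\Qbb)$ so that both the reciprocal and Markovian projections act as identities and the DSBM update reduces to the IPF half-bridge (a swap of the initial or terminal marginal). The only difference is that you re-derive this invariant inside the induction via the disintegration argument, whereas the paper imports it directly from \cite[Section 3.5]{debortoli2021diffusion}.
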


We will thus call DSBM-IPF, the DSBM algorithm initialized with the joint distribution given by the forward reference process $\Pi_{0,T}^0=\Q_{0,T}$; and DSBM-IMF,
the DSBM algorithm initialized with an independent coupling
$\Pi_{0,T}^0 = \pi_0 \otimes \pi_T$. 
However, the training procedure of DSBM-IPF is very different from the one of 
\citep{debortoli2021diffusion,chen2021likelihood}. In existing works, $\tilde{\Pbb}^{n+1}$ is obtained as the time-reversal of
$\tilde{\Pbb}^{n}$
which requires full trajectories from $\tilde{\Pbb}^{n}$, see e.g. \cite[Proposition 6]{debortoli2021diffusion}.
In contrast, in \Cref{alg:DSBM_general} we only use the \emph{coupling} $\Mbb^{n}_{0,T}$ to create the bridge measure
$\Pi^{n} = \Mbb^{n}_{0,T} \Qbb_{|0,T} $. By doing so,
\begin{enumerate*}[label=(\roman*)]   
\item the losses \eqref{eq:loss_function} and
\eqref{eq:loss_function_backward} can be easily evaluated at any time $t \in \ccint{0,T}$;
\item the \emph{trajectory caching} procedure in DSBM is more computationally and memory efficient; 
\item while every IPF iteration $\tilde{\Pbb}^n$ is also supposed to be in $\calR(\Qbb)$, in practice one can observe a
\emph{forgetting} of the bridge $\Qbb_{|0,T}$
\citep{fernandes2021shooting}. In DSBM, this effect is countered by explicit projections on the reciprocal class.
\end{enumerate*}
See \Cref{sec:dsbm_vs_dsb_appendix} for more details. 

\paragraph{Probability flow ODE.} 
At equilibrium of DSBM, we have that
$(\bfY_t)_{t \in \ccint{0,T}}$ given by
\eqref{eq:approximate_markovian_proj_backward} is the time reversal of
$(\bfX_t)_{t \in \ccint{0,T}}$ given by
\eqref{eq:approximate_markovian_proj_forward} and are both associated with the optimal 
Schr\"odinger Bridge path measure $\Pstar$. As a result, we have that
$v_{\phi^\star}(t,x) = -v_{\theta^\star}(t,x) + \sigma_t^2 \nabla \log
\Pstar_t(x)$. Hence, a probability flow $(\bfZ^\star_t)_{t \in \ccint{0,T}}$ such that
$\mathrm{Law}(\bfZ^\star_t) = \Pstar_t$ for any $t \in \ccint{0,T}$ is given by
 \begin{equation}
   \label{eq:probability_flow}
   \rmd \bfZ^\star_t = \{f_t(\bfZ^\star_t) + \tfrac{1}{2}[v_{\theta^\star}(t, \bfZ^\star_t) - v_{\phi^\star}(t, \bfZ^\star_t)]\} \rmd t , \qquad \bfZ^\star_0 \sim \pi_0 . 
 \end{equation}
See also \cite{debortoli2021diffusion,chen2021likelihood} for derivation of this result. 
Note however that the path measure induced by $(\bfZ^\star_t)_{t \in \ccint{0,T}}$ does not correspond to $\Pstar$; in particular,
$(\bfZ^\star_0, \bfZ^\star_T)$ is \emph{not} an entropic OT plan. However,
since for any $t \in \ccint{0,T}$, $\bfZ^\star_t$ has marginal distribution
$\Pstar_t$, we
can compute the log-likelihood of the model
\citep{song2020score,huang2021variational}.

% \vspace{-0.1cm}
\section{Related Work}
\label{sec:related_work}
\paragraph{Markovian projection and Bridge Matching.} The concept of Markovian projection has been
rediscovered multiple times \citep{krylov1984once, gyongy1986mimicking, dupire1994pricing}. 
In the machine learning context, this was first proposed by
\cite{peluchettinon} to define Bridge Matching models. 
More recently,
\citet{liu2022let} derived theoretical properties of the Markovian projection in \Cref{prop:markovian-projection}, first part of \Cref{lemma:pythagorean_theorem}, 
and applied Bridge Matching for learning data on discrete and constrained domains. 

\paragraph{Bridge and Flow Matching.}
Flow Matching corresponds to deterministic bridges with deterministic samplers (ODEs) and has been under active study 
\citep{liu2022flow,liu2022rectified,lipman2022flow,albergo2022building,heitz2023iterative,pooladian2023multisample,tong2023conditional}.
Denoising Diffusion Implicit Models (DDIM) \citep{song2020denoising} can also be
formulated as a discrete-time version of Flow Matching, see \cite{liu2022flow}.
These models have been extended to the Riemannian setting by
\cite{chen2023riemannian}. 
Recently, \cite{albergo2023stochastic} studied the influence
of stochasticity in the bridge, through the concept of stochastic interpolants.
\cite{liu2023I2SB,delbracio2023inversion} used Bridge Matching to perform
image restoration tasks and noted benefits of stochasticity empirically. 
Closely related to our work
is the Rectified Flow algorithm of \cite{liu2022flow}, which corresponds to an
iterative Flow Matching procedure in order to improve the straightness
of the flow and thus eases its simulation. 
An iterative rectifying procedure using stochastic interpolants is also proposed in \cite[Section 3.5]{albergo2023stochastic}.
Our proposed DSBM-IMF algorithm is closest to
Rectified Flow, which can be seen as the deterministic limiting case of DSBM-IMF as
$\sigma\to0$.  However, there are a few important 
theoretical and practical
differences. Most notably, we adopt the SDE approach 
which is crucial for the validity of \Cref{prop:schro-markov-recip} as well as for the empirical performance of DSBM. We discuss further distinctions between DSBM and Rectified Flow in \Cref{subsec:dsbm_vs_rf_appendix}.

\paragraph{Diffusion Schr\"odinger Bridge. } Schr\"odinger Bridges
\citep{schrodinger1932theorie} are ubiquitous in probability theory
\citep{leonard2014survey} and stochastic control \citep{dai1991stochastic,chen2020optimal}. 
More recently, they have been used for generative modeling: \cite{debortoli2021diffusion} introduced the
DSB algorithm and \cite{vargas2021solving,chen2021likelihood} introduced similar algorithms. The case of Dirac delta terminal distribution was investigated by \cite{wang2021deepschro}. 
These methods were later extended to solve conditional simulation and more general control problems
\citep{shi2022conditional,thornton2022riemannian,liu2022deep,chen2023deep,tamir2023transport}. 
In \cite{somnath2023aligned}, SBs are learned using one Bridge
Matching iteration, assuming access to the true Schr\"odinger static coupling. 
Our proposed method DSBM-IPF is closest to DSB, 
but with improved continous-time training and projections on the reciprocal
class which mitigate two limitations of DSB. 
Concurrently with our work, \cite{peluchetti2023diffusion} independently introduced the DSBM-IMF approach (named IDBM therein).

\vspace{-0.1cm}
\section{Experiments}
\label{sec:experiments}

\begin{figure}[b]
\centering
\setlength{\tabcolsep}{5pt}
\renewcommand\arraystretch{0.8}
\vspace{-0.3cm}
\scalebox{0.64}{
    \begin{tabular}{ccccc}
    \toprule 
     & \multicolumn{4}{c}{\textit{2-Wasserstein (Euler 20 steps)}}\tabularnewline
    \cmidrule{2-5} \cmidrule{3-5} \cmidrule{4-5} \cmidrule{5-5} 
    \textit{Dataset} & moons & scurve & 8gaussians & moons-8gaussians\tabularnewline
    \midrule
    \midrule 
    DSBM-IPF & 0.140\textpm 0.006 & 0.140\textpm 0.024 & 0.315\textpm 0.079 & \textit{0.812\textpm 0.092}\tabularnewline
    \midrule 
    DSBM-IMF & 0.144\textpm 0.024 & 0.145\textpm 0.037 & 0.338\textpm 0.091 & 0.838\textpm 0.098\tabularnewline
    \midrule 
    DSBM-IMF+ & \textbf{0.123\textpm 0.014} & \textbf{0.130\textpm 0.025} & \textit{0.276\textpm 0.030} & \textbf{0.802\textpm 0.172}\tabularnewline
    \midrule 
    DSB & 0.190\textpm 0.049 & 0.272\textpm 0.065 & 0.411\textpm 0.084 & 0.987\textpm 0.324\tabularnewline
    \midrule 
    SB-CFM & \textit{0.129\textpm 0.024} & \textit{0.136\textpm 0.030} & \textbf{0.238\textpm 0.044} & 0.843\textpm 0.079\tabularnewline
    \midrule 
    \midrule 
    FM & 0.212\textpm 0.025 & 0.161\textpm 0.033 & 0.351\textpm 0.066 & -\tabularnewline
    \midrule 
    CFM & 0.215\textpm 0.028 & 0.171\textpm 0.023 & 0.370\textpm 0.049 & \textit{1.285\textpm 0.314}\tabularnewline
    \midrule 
    RF & \textit{0.129\textpm 0.022} & \textit{0.126\textpm 0.019} & \textit{0.267\textpm 0.041} & 1.522\textpm 0.304\tabularnewline
    \midrule 
    OT-CFM & \textbf{0.111\textpm 0.005} & \textbf{0.102\textpm 0.013} & \textbf{0.253\textpm 0.040} & \textbf{0.716\textpm 0.187}\tabularnewline
    \bottomrule
    \end{tabular}
}
\scalebox{0.64}{
    \begin{tabular}{ccccc}
    \toprule 
    \multicolumn{4}{c}{\textit{Path energy}}\tabularnewline
    \midrule
    moons & scurve & 8gaussians & moons-8gaussians\tabularnewline
    \midrule
    \midrule 
    1.598\textpm 0.034 & \textit{2.110\textpm 0.059} & 14.91\textpm 0.310 & 42.16\textpm 1.026\tabularnewline
    \midrule 
    \textbf{1.580\textpm 0.036} & \textbf{2.092\textpm 0.053} & \textbf{14.81\textpm 0.255} & \textbf{41.00\textpm 1.495}\tabularnewline
    \midrule 
    \textit{1.594\textpm 0.043} & 2.116\textpm 0.018 & \textit{14.88\textpm 0.252} & \textit{41.09\textpm 1.206} \tabularnewline
    \midrule 
    - & - & - & - \tabularnewline
    \midrule 
    1.649\textpm 0.035 & 2.144\textpm 0.044 & 15.08\textpm 0.209 & 45.69\textpm 0.661 \tabularnewline
    \midrule 
    \midrule 
    2.227\textpm 0.056 & 2.950\textpm 0.074 & 18.12\textpm 0.416 & -\tabularnewline
    \midrule 
    2.391\textpm 0.043 & 3.071\textpm 0.026 & 18.00\textpm 0.090 & 116.5\textpm 2.633\tabularnewline
    \midrule 
    \textit{1.185\textpm 0.052} & \textit{1.633\textpm 0.074} & \textbf{14.84\textpm 0.441} & \textit{37.61\textpm 3.906}\tabularnewline
    \midrule 
    \textbf{1.178\textpm 0.020} & \textbf{1.577\textpm 0.036} & \textit{15.10\textpm 0.215} & \textbf{30.50\textpm 0.626}\tabularnewline
    \bottomrule
    \end{tabular}
}
\captionof{table}{Sampling quality as measured by 2-Wasserstein distance and path energy for the 2D experiments. $\pm1$ SD over 5 seeds. Best values are in bold and second best are italicized. }
\label{tab:2d_result}

\centering
\includegraphics[width=0.9\textwidth,trim=100 0 100 0]{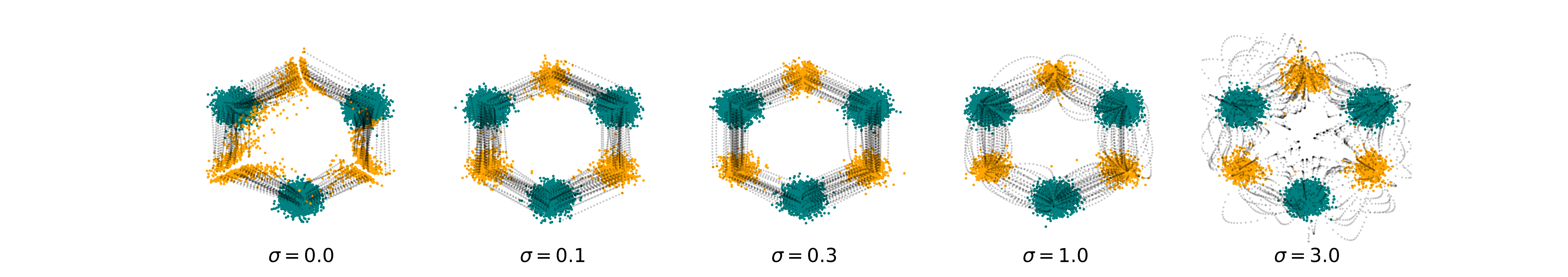}
\captionof{figure}{Learned SB probability flow between two mixtures of Gaussians (green $\to$ yellow).}
\label{fig:2d_sigma_paths}
\end{figure}

\paragraph{2D Experiments.} We first show our proposed methods can generate correct samples and learn lower kinetic energy transport maps in some 2D examples. 
We compare our method DSBM with flow-based methods including Flow Matching (FM) \citep{lipman2022flow}, Conditional Flow Matching (CFM), OT-CFM \citep{tong2023conditional}, and Rectified Flow (RF) \citep{liu2022flow}; and other SB methods including DSB \citep{debortoli2021diffusion} and SB-CFM  \citep{tong2023conditional}. 
OT-CFM and SB-CFM utilizes sample-based mini-batch OT or EOT solvers \citep{fatras2021minibatch,flamary2021pot} to define an approximate OT or SB
static coupling $\tilde{\Pi}^\textup{OT}_{0,T}$ or $\tilde{\Pi}^\textup{SB}_{0,T}$, see also \cite{pooladian2023multisample, stromme2023sampling}.
We can also utilize this idea in the DSBM-IMF framework, which corresponds to using the initialization coupling $\Pi_{0,T}^0=\tilde{\Pi}^\textup{SB}_{0,T}$ in \Cref{alg:DSBM_general}. This approximate SB coupling $\tilde{\Pi}^\textup{SB}_{0,T}$ also satisfies $\tilde{\Pi}^\textup{SB}_0=\pi_0$ and $\tilde{\Pi}^\textup{SB}_T=\pi_T$ but can provide a better initialization than the independent coupling $\Pi_{0,T}^0=\pi_0 \otimes \pi_T$. We name this approach DSBM-IMF+. 
The rest of the methods do not use OT solvers. 
DSB and DSBM directly learn the EOT map as the solution of the diffusion process.

In \Cref{tab:2d_result}, we show the 2-Wasserstein distance between the true and generated samples, as well as the integrated path energy defined as $\smash{\E[\int_{0}^T || v(t, \bfZ_t) ||^2 \rmd t]}$ where $v$ is the learned drift along the ODE trajectory $\bfZ_t$. 
For direct comparability, we report for DSBM using its probability flow ODE.
Lower path energies represent shorter (and potentially easier to integrate) trajectories.
We find that in this low dimensional setting, OT-CFM performs the best by utilizing OT solvers, but DSBM outperforms FM and CFM when OT solvers are not used.
Further, 
DSBM outperforms DSB on all datasets, suggesting DSBM solves the SB problem with higher accuracy.
The results also show that among SB methods, DSBM-IMF+ can achieve lower sampling error than DSBM-IPF and DSBM-IMF. It also performs better than SB-CFM on 3 of the datasets and achieve lower path energy on all datasets. 
Finally, we find Rectified Flow achieves lower sampling error than DSBM except for the \textit{moons-8gaussians} task, for which DSBM is significantly more accurate. Since RF can be informally seen as DSBM in the case $\sigma \rightarrow 0$, this suggests the optimal $\sigma$ varies for each task and between generative and general transfer tasks.  Figure \ref{fig:2d_sigma_paths} visualizes how $\sigma$ affects the straightness and sample quality of learned transport maps between two mixture distributions.

\paragraph{High-Dimensional Gaussian Experiment.} 
We next perform the Gaussian transport experiment in \cite{debortoli2021diffusion} with dimension $d=50$ to verify the scalability of our proposed approach. The true SB can be computed analytically in this case \citep{bunne2022gaussian}. In \Cref{fig:gaussian_ipf_convergence}, we plot the convergence of the learned mean $\E[\bfX_0]$, 
variance $\mathrm{Var}(\bfX_0)$,
and covariance $\mathrm{Cov}(\bfX_0, \bfX_T)$ between times $0,T$. 
We also consider RF and a related baseline IMF-b, which performs IMF numerically but only in the backward direction. All methods converge approximately to the correct mean. However, the variance estimates become inaccurate for RF and IMF-b. Among SB methods, DSB and IMF-b also gave inaccurate SB covariance estimates as the number of iteration increases. On the other hand, DSBM does not suffer from this issue. 
In \Cref{tab:gaussian_avg_kl}, we further quantify the accuracy and compare with SB-CFM \citep{tong2023conditional} by computing the KL divergence between the marginal distributions of the learned process $\P_t$ and the true SB $\PSB_t$. Our proposed methods achieve similar KL divergence as SB-CFM in dimension $d=5$, but are much more accurate in higher dimensions.

\begin{figure}
\centering
\begin{minipage}[b]{.66\textwidth}
    \centering
    \includegraphics[trim={3 0 1.2 0}, clip, height=2.65cm]{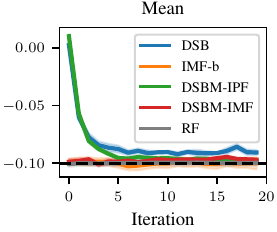}
    \includegraphics[trim={1.2 0 1.2 0}, clip, height=2.65cm]{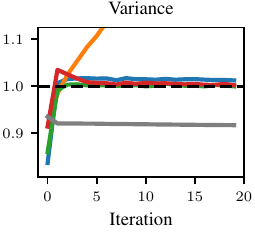}
    \includegraphics[trim={1.2 0 1.2 0}, clip, height=2.65cm]{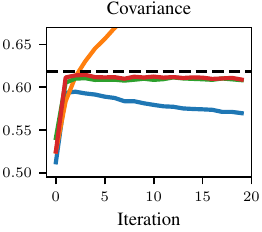}
    \caption{Convergence of Gaussian experiment in $d=50$. }  
    \label{fig:gaussian_ipf_convergence}
\end{minipage}
\hspace{0.3cm}
\begin{minipage}[b]{.3\textwidth}
    \centering
    {\setlength{\tabcolsep}{2pt}
    \scalebox{0.64}{
    \begin{tabular}{cccc}
    \toprule 
    KL $\times {10}^{-3}$ & $d=5$ & $d=20$ & $d=50$\tabularnewline
    \midrule
    \midrule 
    DSB & 3.26\textpm 1.60 & 13.0\textpm 3.49 & 32.8\textpm 1.28\tabularnewline
    \midrule 
    SB-CFM & 1.45\textpm 0.73 & 12.3\textpm 1.47 & 49.4\textpm 3.91\tabularnewline
    \midrule 
    DSBM-IPF & \textbf{1.23\textpm 0.23} & \textbf{4.42\textpm 0.76} & \textbf{8.75\textpm 0.87}\tabularnewline
    \midrule 
    DSBM-IMF & \emph{1.34\textpm 0.51} & \emph{5.05\textpm 0.95} & \emph{9.76\textpm 1.67}\tabularnewline
    \bottomrule
    \end{tabular}
    }}
    \captionof{table}{Average $\KL(\P_t|\PSB_t)$ at 21 uniformly spaced $t$. 
    }
    \label{tab:gaussian_avg_kl}
\end{minipage} 
\vspace{-.3cm}
\end{figure}

\begin{figure}
\centering
\begin{minipage}[b]{.65\textwidth}
    \centering
    \subfloat[OT-CFM]{
    \includegraphics[width=0.28\linewidth]{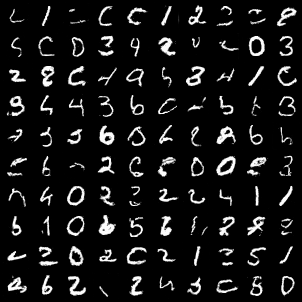} 
    \label{fig:mnist_backward_samples_otcfm}
    }
    \subfloat[DSB]{
    \includegraphics[width=0.28\linewidth]{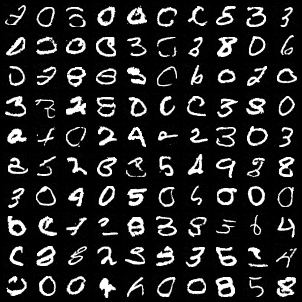} 
    \label{fig:mnist_backward_samples_dsb20}
    }
    \subfloat[DSBM-IPF]{
    \includegraphics[width=0.28\linewidth]{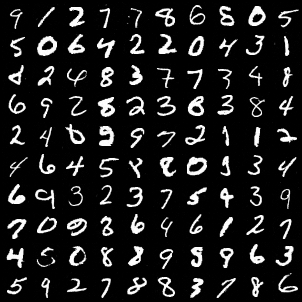} 
    }
    \caption{Samples of MNIST digits transferred from letters. }
    \label{fig:mnist_backward_samples}
\end{minipage}
\quad
\begin{minipage}[b]{.28\textwidth}
    \centering
    \includegraphics[width=\linewidth]{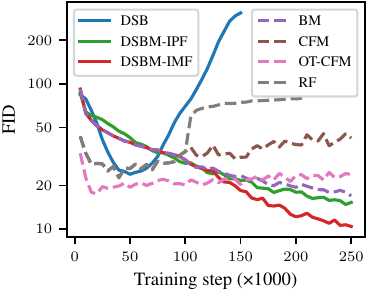}
    \caption{FID vs iteration. }
    \label{fig:mnist_fid}
\end{minipage} 
\vspace{-.3cm}
\end{figure}

\paragraph{MNIST, EMNIST transfer.} We test our method for domain transfer between MNIST digits and EMNIST letters as in \cite{debortoli2021diffusion}. We compare DSBM as a direct substitute of DSB, and also with Bridge Matching (BM) \citep{peluchettinon,liu2022let}, CFM, OT-CFM and RF. 
We plot some output samples from different algorithms in \Cref{fig:mnist_backward_samples} and the convergence of FID score in \Cref{fig:mnist_fid}. 
We find that OT-CFM becomes less applicable in higher dimensions and produces samples of worse quality (\Cref{fig:mnist_backward_samples_otcfm}).
On the other hand, image quality deteriorates during training of DSB and RF.  DSBM achieves higher quality samples visually, and does not suffer from  deterioration. 
It is also about $30\%$ more efficient than DSB in terms of runtime.

\paragraph{CelebA transfer.}
Next, we evaluate and perform some ablations of our method on a transfer task on the CelebA $64\times64$ dataset. We
consider the images given by the tokens \texttt{male}/\texttt{old} and
\texttt{female}/\texttt{young}. 
In Figures \ref{fig:visual_sigma} and \ref{fig:fid_sigma}, we show that as
$\sigma$ increases, the quality of the images (as measured by the FID score) increases until $\sigma$ is too high, but the alignment (as measured by LPIPS) between the generated image and the original sample decreases. 
Additionally, we investigate the dependency between $\sigma$ and
image dimension in \Cref{fig:sigma_dim}. 
In particular, 
for the same $\sigma =1$, the outputs of DSBM for CelebA $128 \times 128$ are better aligned with the original data than for CelebA $64 \times 64$. 
This is in agreement with the observations of
\cite{chen2023importance,hoogeboom2023simple} that the \emph{noise schedule}
in diffusion models should scale with the resolution.

\begin{figure}[t]
\centering
\pgfplotsset{every tick label/.append style={font=\scriptsize}}
\begin{minipage}[b]{.35\linewidth}
  \centering
  \includegraphics[trim={1 67 265 133}, clip, width=.18\linewidth]{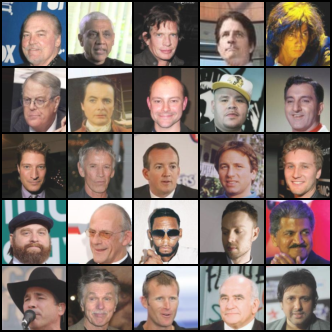}
  \includegraphics[trim={1 67 265 133}, clip, width=.18\linewidth]{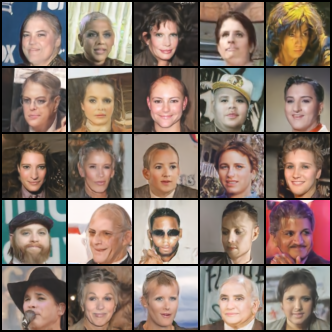}
  \includegraphics[trim={1 67 265 133}, clip, width=.18\linewidth]{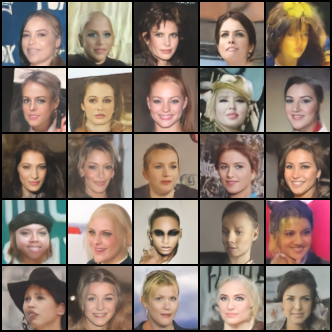}
  \includegraphics[trim={1 67 265 133}, clip, width=.18\linewidth]{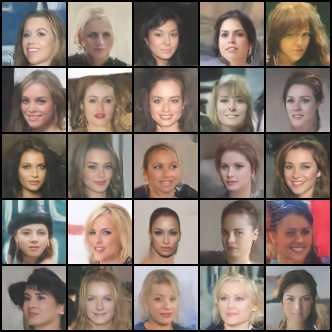}
  \includegraphics[trim={1 67 265 133}, clip, width=.18\linewidth]{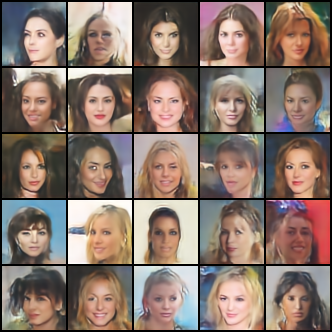}
  \captionof{figure}{Left to right: initial and
    generated samples ($64\times64$) obtained after 20 DSBM-IMF iterations for
    $\sigma^2\in \{0.01, 0.1, 1, 10\}$.}
  \label{fig:visual_sigma}
\end{minipage}
\hspace{0.2cm}
\begin{minipage}[b]{.31\linewidth}
\centering
\begin{tikzpicture}
\begin{semilogxaxis}[axis y line*=left, width=0.85\linewidth, x post scale=1.5, minor tick style={draw=none}, xtick={0.01, 0.1, 1, 10}]
  \addplot[blue, line width=1pt, mark=*] table [x=sigma, y=fid, col sep=comma] {fid.csv};
\end{semilogxaxis}
\begin{semilogxaxis}[axis y line*=right, width=0.85\linewidth, , x post scale=1.5, y tick label style={
        /pgf/number format/.cd,
            fixed,
            fixed zerofill,
            precision=2,
        /tikz/.cd
    }, minor tick style={draw=none}, xtick={0.01, 0.1, 1, 10}]
  \addplot[red, line width=1pt, mark=*] table [x=sigma, y=lpip, col sep=comma] {lpip.csv};
 \end{semilogxaxis}
\end{tikzpicture} 
\captionof{figure}{FID (blue) and LPIPS (red) scores (lower is better for both) as we vary $\sigma^2$.}
\label{fig:fid_sigma}
\end{minipage}
\hspace{0.1cm}
\begin{minipage}[b]{.27\linewidth}
  \centering
  \includegraphics[width=0.8\linewidth]{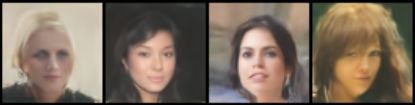}
  \includegraphics[width=0.8\linewidth]{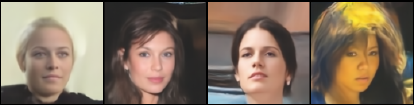}
  \includegraphics[width=0.8\linewidth]{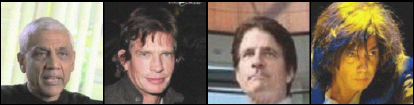}
  \captionof{figure}{\label{fig:sigma_dim} Top to bottom: DSBM ($\sigma=1$)
    $64 \times 64$;  
    $128 \times 128$; original images.}
\end{minipage}
\vspace{-0.3cm}
\end{figure}

\paragraph{AFHQ transfer. } 
We demonstrate the scalability of our method on an additional transfer experiment on the AFHQ $512 \times 512$ dataset between the classes \texttt{cat} and \texttt{wild}. The results are shown in \Cref{fig:afhq_transfer}. On this higher-dimensional problem, we observe that DSBM can also generate realistic samples which are similar to the input.

\paragraph{Unpaired Fluid Flows Downscaling.} Finally, we apply DSBM to perform downscaling of geophysical fluid dynamics, i.e. super-resolution of low resolution spatial data. We use the dataset in \citep{bischoff2023unpaired}, which consists of unpaired low ($64 \times 64$) and high ($512 \times 512$) resolution fields. As shown in \Cref{fig:downscaler_vis}, DSBM is able to learn high resolution reconstructions by only slightly noising the low resolution input. In contrast, \cite{bischoff2023unpaired} use two diffusion models in forward and backward directions (Diffusion-fb) based on \cite{meng2022sdedit}, which improves over the Random baseline. 
\Cref{fig:downscaler_l2} shows that DSBM-IPF and DSBM-IMF achieve much lower $\ell_2$ distances for all frequency classes in the dataset than Diffusion-fb (and thus Random), indicating DSBM is able to reconstruct high resolution fields consistent with the low resolution source. 
\begin{figure}[t]
    \centering
    
    \subfloat[Transfer between classes \texttt{cat} and \texttt{wild}. ]{
    \includegraphics[width=0.22\linewidth]{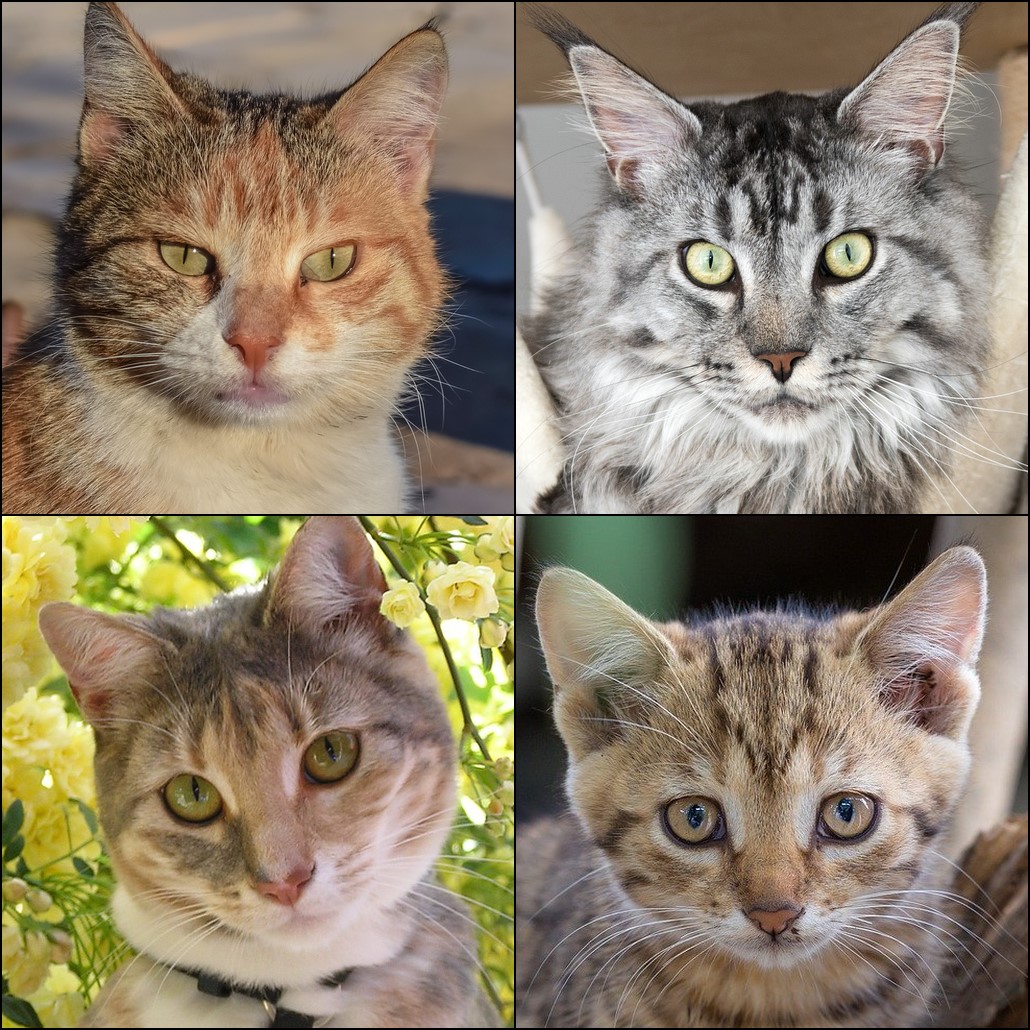}
    \includegraphics[width=0.22\linewidth]{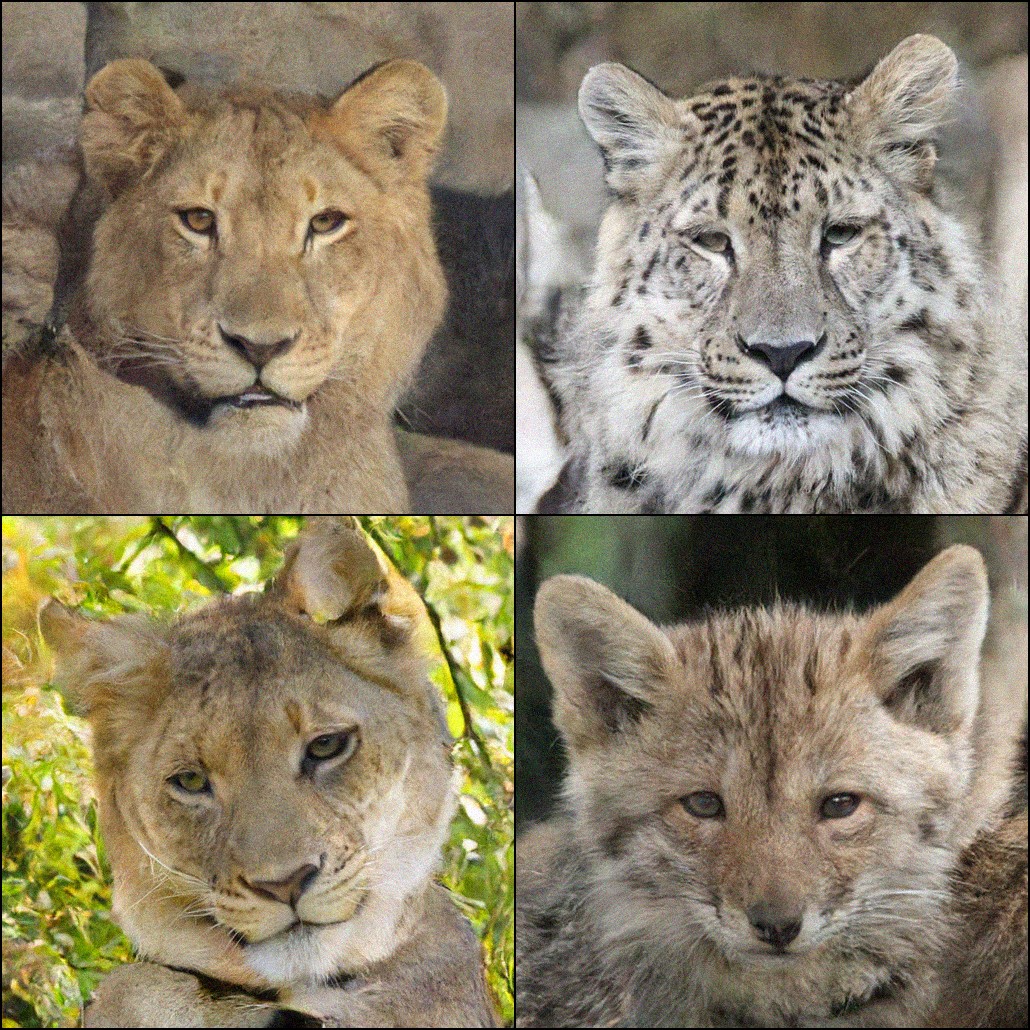}
    } \quad
    \subfloat[Transfer between classes \texttt{wild} and \texttt{cat}. ]{
    \includegraphics[width=0.22\linewidth]{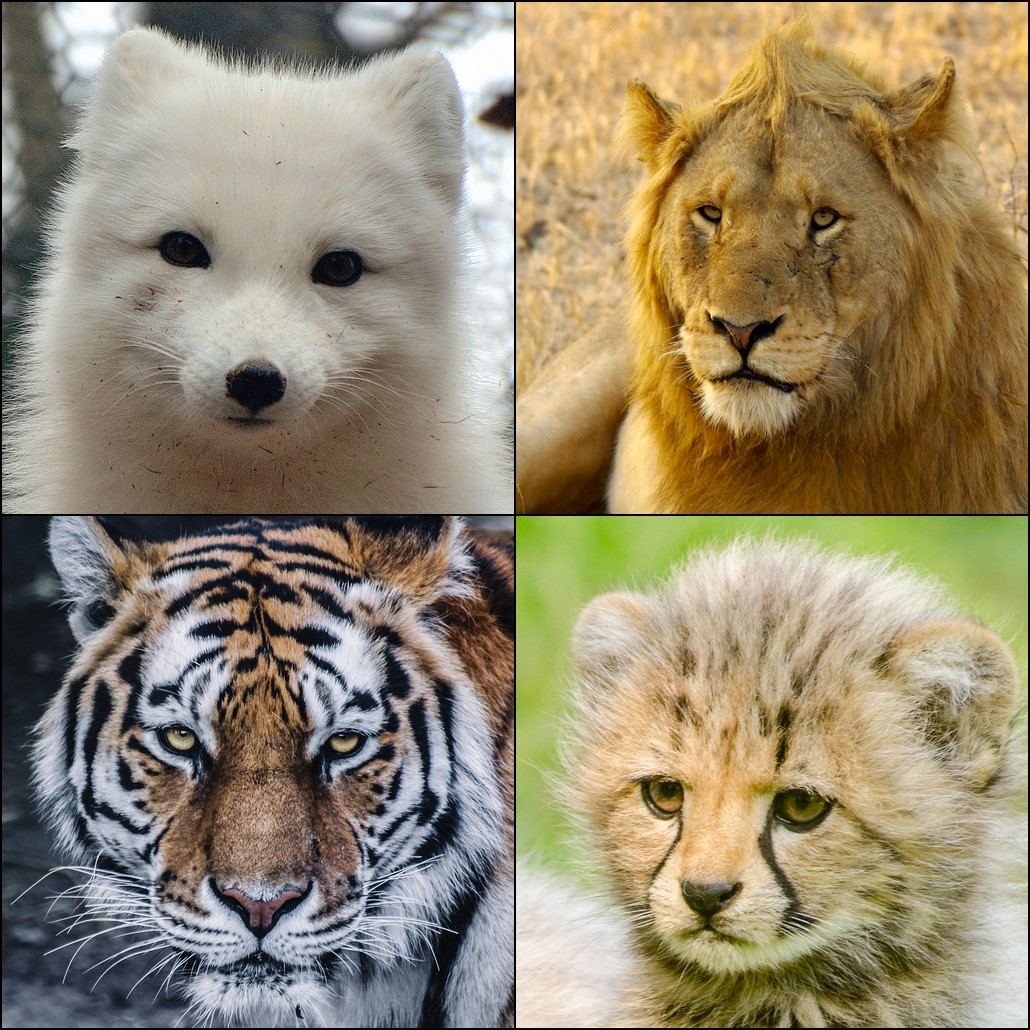}
    \includegraphics[width=0.22\linewidth]{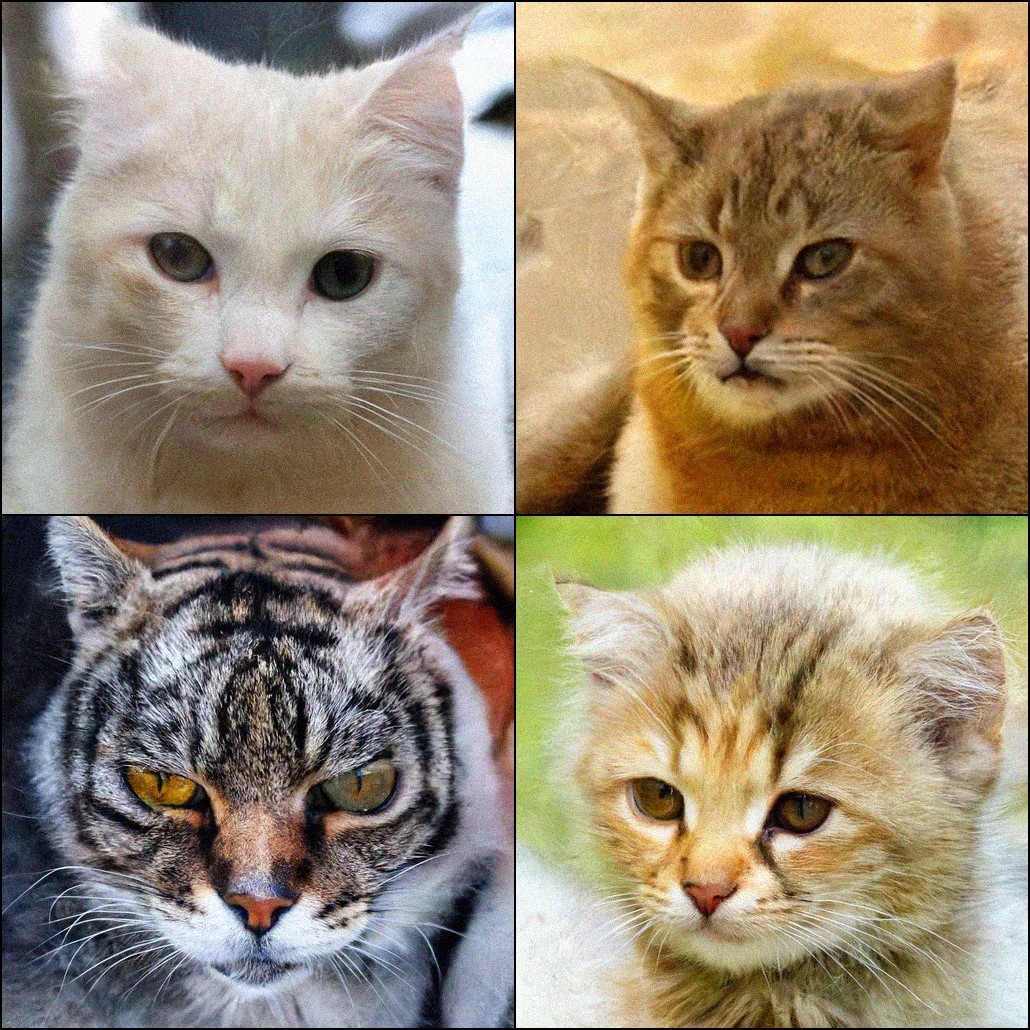}
    }
    \caption{DSBM domain transfer results on the AFHQ $512 \times 512$ dataset. }
    \label{fig:afhq_transfer}
    
\centering
\begin{minipage}[b]{.52\linewidth}
    \centering
    \subfloat[]{
    \includegraphics[width=1.5cm]{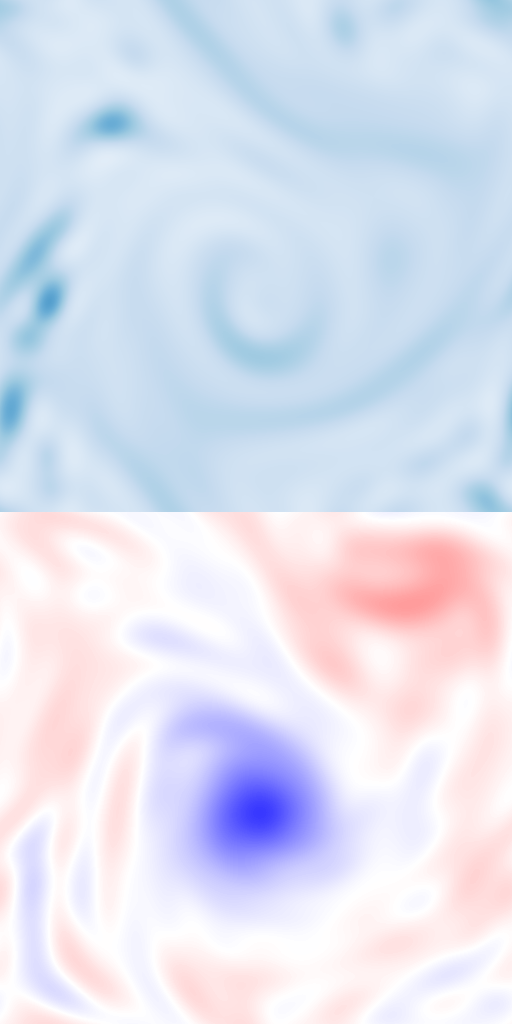} 
    \includegraphics[width=1.5cm]{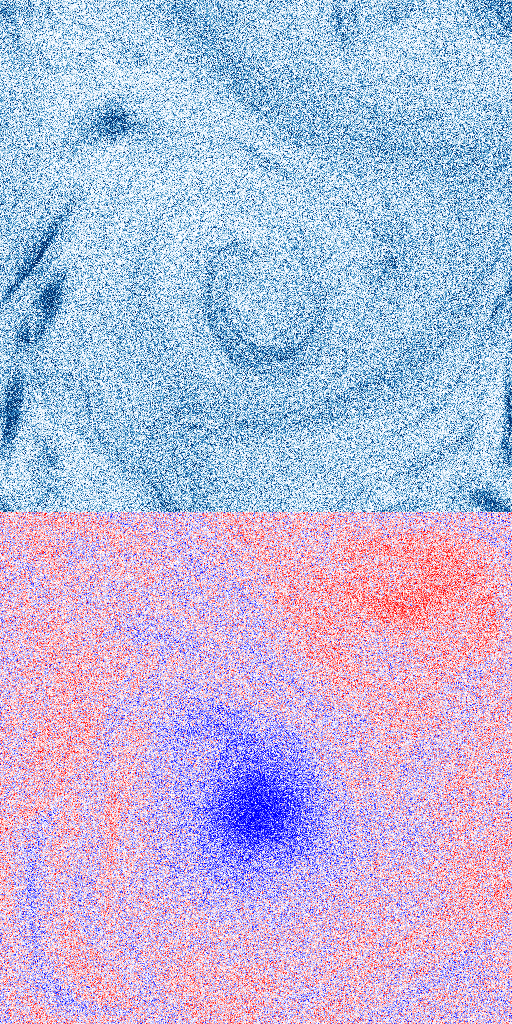}
    \includegraphics[width=1.5cm]{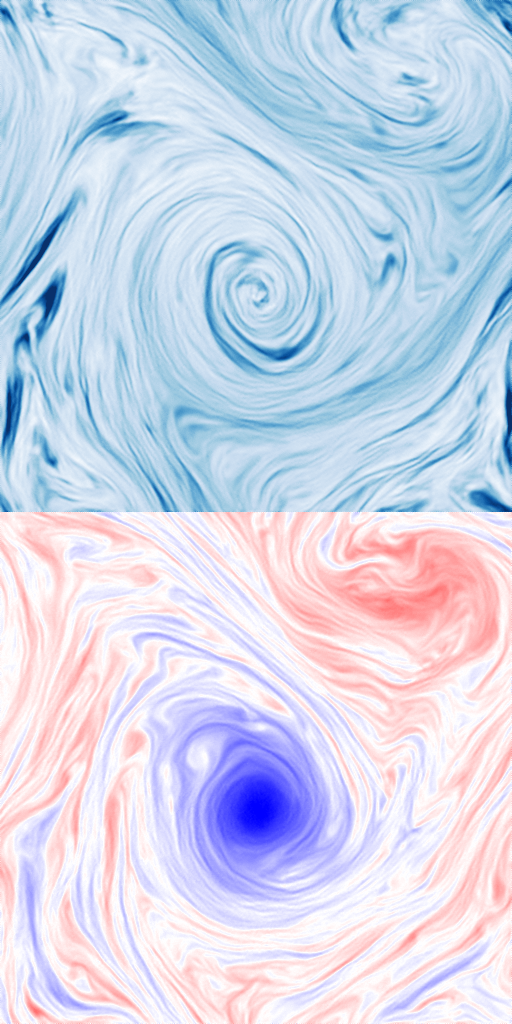} 
    }
    \subfloat[]{
    \includegraphics[width=1.5cm]{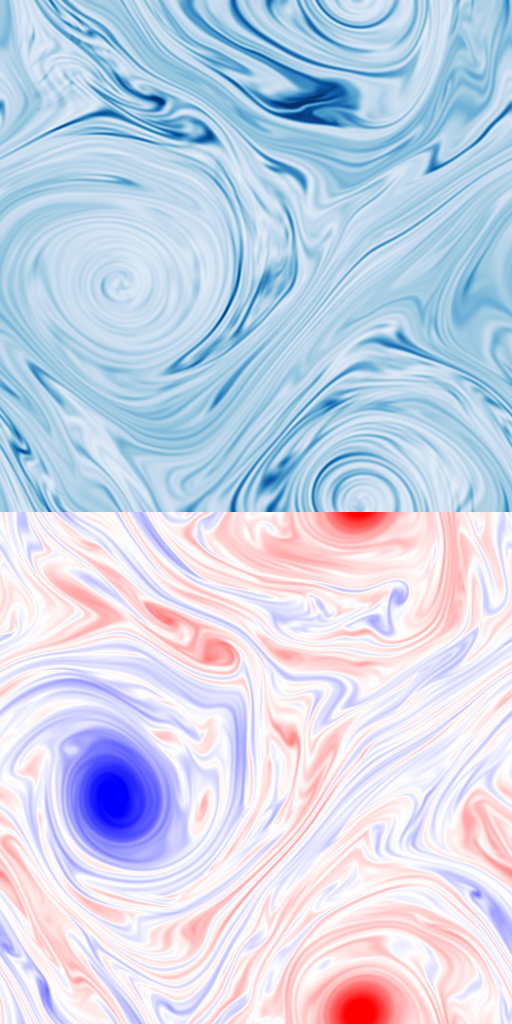}
    }
    \caption{(a) Left to right: source low resolution sample, intermediate state and final reconstruction of DSBM-IPF; (b) an unpaired high resolution sample.   }
    \label{fig:downscaler_vis}
\end{minipage}
\hfill
\begin{minipage}[b]{.435\linewidth}
    \centering
    \includegraphics[width=0.9\linewidth]{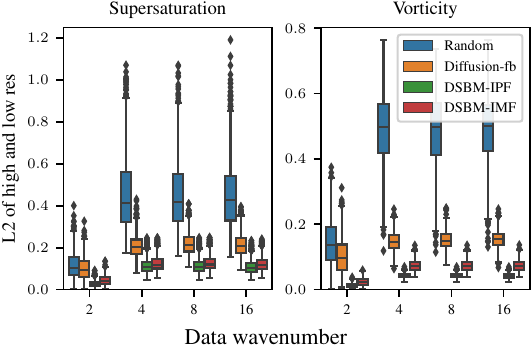}
    \caption{$\ell_2$ distance between low resolution source and high resolution reconstructed fields. }
    \label{fig:downscaler_l2}
\end{minipage}
\end{figure}

\section{Discussion}
In this work, we introduce IMF, a new methodology for learning \schro Bridges.
IMF is an alternative to the classical IPF and can be interpreted as its dual. 
Building on this new framework, we present two practical algorithms, DSBM-IPF and DSBM-IMF, for learning SBs. 
These algorithms mitigate the time-discretization and bias accumulation issues of existing methods.
However, DSBM still has some limitations. First, our results suggest DSBM is most effective for solving general transport problems. For generative modeling, we only find minor improvements compared to Bridge and Flow Matching on CIFAR-10 (see \Cref{subsec:cifar10_appendix}). Second, while DSBM is more efficient than DSB, it still requires sampling from the learned process during the caching step.
Finally, the EOT problem becomes more difficult to solve numerically for small values of $\sigma$.

In future work, we would like to 
further investigate the differences between DSBM-IMF and DSBM-IPF.
IMF also appears useful for developing a better understanding of the Rectified Flow algorithm \citep{liu2022flow}, as IMF minimizes a clear objective \eqref{eq:schrodinger_bridge} and Rectified Flow can be seen as a limiting case of it. Finally, Rectified Flow has also been extended to solve OT problems with general convex costs by \cite{liu2022rectified}, and it would be interesting to derive a SB version of this extension. 

\newpage
\section*{Acknowledgements}
YS acknowledges support from the Huawei UK Fellowship. 
AC acknowledges support
from the EPSRC CDT in Modern Statistics and Statistical Machine Learning (EP/S023151/1). AD
acknowledges support of the UK Dstl and EPSRC grant EP/R013616/1. This is part of the collaboration between US DOD, UK MOD and UK EPSRC under the Multidisciplinary University
Research Initiative. He also acknowledges support from the EPSRC grants CoSines (EP/R034710/1)
and Bayes4Health (EP/R018561/1).

\bibliographystyle{apalike}
\bibliography{bib}

\begin{thebibliography}{}

\bibitem[Albergo et~al., 2023]{albergo2023stochastic}
Albergo, M.~S., Boffi, N.~M., and Vanden-Eijnden, E. (2023).
\newblock Stochastic interpolants: A unifying framework for flows and
  diffusions.
\newblock {\em arXiv preprint arXiv:2303.08797}.

\bibitem[Albergo and Vanden-Eijnden, 2023]{albergo2022building}
Albergo, M.~S. and Vanden-Eijnden, E. (2023).
\newblock Building normalizing flows with stochastic interpolants.
\newblock In {\em International Conference on Learning Representations}.

\bibitem[Anderson, 1982]{anderson1982reverse}
Anderson, B.~D. (1982).
\newblock Reverse-time diffusion equation models.
\newblock {\em Stochastic Processes and their Applications}, 12(3):313--326.

\bibitem[Banerjee et~al., 2005]{banerjee2005optimality}
Banerjee, A., Guo, X., and Wang, H. (2005).
\newblock On the optimality of conditional expectation as a {B}regman
  predictor.
\newblock {\em IEEE Transactions on Information Theory}, 51(7):2664--2669.

\bibitem[Barczy and Kern, 2013]{barczy2013representations}
Barczy, M. and Kern, P. (2013).
\newblock Representations of multidimensional linear process bridges.
\newblock {\em Random Operators and Stochastic Equations}, 21(2):159--189.

\bibitem[Benamou and Brenier, 2000]{benamou2000computational}
Benamou, J.-D. and Brenier, Y. (2000).
\newblock A computational fluid mechanics solution to the
  {M}onge--{K}antorovich mass transfer problem.
\newblock {\em Numerische Mathematik}, 84(3):375--393.

\bibitem[Bernton et~al., 2019]{bernton2019schr}
Bernton, E., Heng, J., Doucet, A., and Jacob, P.~E. (2019).
\newblock Schr{\"o}dinger bridge samplers.
\newblock {\em arXiv preprint arXiv:1912.13170}.

\bibitem[Bischoff and Deck, 2023]{bischoff2023unpaired}
Bischoff, T. and Deck, K. (2023).
\newblock Unpaired downscaling of fluid flows with diffusion bridges.
\newblock {\em arXiv preprint arXiv:2305.01822}.

\bibitem[Bogachev et~al., 2021]{bogachev2021uniqueness}
Bogachev, V.~I., Krasovitskii, T.~I., and Shaposhnikov, S.~V. (2021).
\newblock On uniqueness of probability solutions of the
  {F}okker--{P}lanck--{K}olmogorov equation.
\newblock {\em Sbornik: Mathematics}, 212(6):745.

\bibitem[Bunne et~al., 2023]{bunne2022gaussian}
Bunne, C., Hsieh, Y.-P., Cuturi, M., and Krause, A. (2023).
\newblock The {S}chr\"odinger bridge between {G}aussian measures has a closed
  form.
\newblock In {\em International Conference on Artificial Intelligence and
  Statistics}.

\bibitem[Bunne et~al., 2022]{bunne2022proximal}
Bunne, C., Papaxanthos, L., Krause, A., and Cuturi, M. (2022).
\newblock Proximal optimal transport modeling of population dynamics.
\newblock In {\em International Conference on Artificial Intelligence and
  Statistics}, pages 6511--6528. PMLR.

\bibitem[Caluya and Halder, 2021]{caluya2021wasserstein}
Caluya, K.~F. and Halder, A. (2021).
\newblock Wasserstein proximal algorithms for the {S}chr{\"o}dinger bridge
  problem: Density control with nonlinear drift.
\newblock {\em IEEE Transactions on Automatic Control}, 67(3):1163--1178.

\bibitem[Chen and Lipman, 2023]{chen2023riemannian}
Chen, R.~T. and Lipman, Y. (2023).
\newblock Riemannian flow matching on general geometries.
\newblock {\em arXiv preprint arXiv:2302.03660}.

\bibitem[Chen, 2023]{chen2023importance}
Chen, T. (2023).
\newblock On the importance of noise scheduling for diffusion models.
\newblock {\em arXiv preprint arXiv:2301.10972}.

\bibitem[Chen et~al., 2023]{chen2023deep}
Chen, T., Liu, G.-H., Tao, M., and Theodorou, E.~A. (2023).
\newblock Deep momentum multi-marginal {S}chr\"odinger bridge.
\newblock {\em arXiv preprint arXiv:2303.01751}.

\bibitem[Chen et~al., 2022]{chen2021likelihood}
Chen, T., Liu, G.-H., and Theodorou, E.~A. (2022).
\newblock Likelihood training of {S}chr\"odinger bridge using forward-backward
  {SDEs} theory.
\newblock In {\em International Conference on Learning Representations}.

\bibitem[Chen et~al., 2016]{chen2016entropic}
Chen, Y., Georgiou, T., and Pavon, M. (2016).
\newblock Entropic and displacement interpolation: a computational approach
  using the {H}ilbert metric.
\newblock {\em SIAM Journal on Applied Mathematics}, 76(6):2375--2396.

\bibitem[Chen et~al., 2021]{chen2020optimal}
Chen, Y., Georgiou, T.~T., and Pavon, M. (2021).
\newblock Optimal transport in systems and control.
\newblock {\em Annual Review of Control, Robotics, and Autonomous Systems}, 4.

\bibitem[Choi et~al., 2020]{choi2020starganv2}
Choi, Y., Uh, Y., Yoo, J., and Ha, J.-W. (2020).
\newblock Stargan v2: Diverse image synthesis for multiple domains.
\newblock In {\em Proceedings of the IEEE Conference on Computer Vision and
  Pattern Recognition}.

\bibitem[Chung and Walsh, 2006]{chung2006markov}
Chung, K.~L. and Walsh, J.~B. (2006).
\newblock {\em Markov processes, Brownian motion, and Time Symmetry}, volume
  249.
\newblock Springer Science \& Business Media.

\bibitem[Csisz{\'a}r, 1975]{csiszar1975divergence}
Csisz{\'a}r, I. (1975).
\newblock I-divergence geometry of probability distributions and minimization
  problems.
\newblock {\em The Annals of Probability}, 3(1):146--158.

\bibitem[Cuturi, 2013]{cuturi2013sinkhorn}
Cuturi, M. (2013).
\newblock Sinkhorn distances: Lightspeed computation of optimal transport.
\newblock In {\em Advances in Neural Information Processing Systems}.

\bibitem[Dai~Pra, 1991]{dai1991stochastic}
Dai~Pra, P. (1991).
\newblock A stochastic control approach to reciprocal diffusion processes.
\newblock {\em Applied Mathematics and Optimization}, 23(1):313--329.

\bibitem[De~Bortoli et~al., 2021]{debortoli2021diffusion}
De~Bortoli, V., Thornton, J., Heng, J., and Doucet, A. (2021).
\newblock Diffusion {S}chr{\"o}dinger bridge with applications to score-based
  generative modeling.
\newblock In {\em Advances in Neural Information Processing Systems}.

\bibitem[Delbracio and Milanfar, 2023]{delbracio2023inversion}
Delbracio, M. and Milanfar, P. (2023).
\newblock Inversion by direct iteration: An alternative to denoising diffusion
  for image restoration.
\newblock {\em arXiv preprint arXiv:2303.11435}.

\bibitem[Dupire, 1994]{dupire1994pricing}
Dupire, B. (1994).
\newblock Pricing with a smile.
\newblock {\em Risk}, 7(1):18--20.

\bibitem[Essid and Pavon, 2019]{essid2019traversing}
Essid, M. and Pavon, M. (2019).
\newblock Traversing the {S}chr{\"o}dinger bridge strait: {R}obert {F}ortet’s
  marvelous proof redux.
\newblock {\em Journal of Optimization Theory and Applications}, 181(1):23--60.

\bibitem[Fatras et~al., 2021]{fatras2021minibatch}
Fatras, K., Zine, Y., Majewski, S., Flamary, R., Gribonval, R., and Courty, N.
  (2021).
\newblock Minibatch optimal transport distances; analysis and applications.
\newblock {\em arXiv preprint arXiv:2101.01792}.

\bibitem[Fernandes et~al., 2021]{fernandes2021shooting}
Fernandes, D.~L., Vargas, F., Ek, C.~H., and Campbell, N.~D. (2021).
\newblock Shooting {S}chr{\"o}dinger’s cat.
\newblock In {\em Fourth Symposium on Advances in Approximate Bayesian
  Inference}.

\bibitem[Feydy et~al., 2017]{feydy2017optimal}
Feydy, J., Charlier, B., Vialard, F.-X., and Peyr{\'e}, G. (2017).
\newblock Optimal transport for diffeomorphic registration.
\newblock In {\em Medical Image Computing and Computer Assisted Intervention-
  MICCAI}, pages 291--299.

\bibitem[Finlay et~al., 2020]{finlay2020learning}
Finlay, C., Gerolin, A., Oberman, A.~M., and Pooladian, A.-A. (2020).
\newblock Learning normalizing flows from entropy-{K}antorovich potentials.
\newblock {\em arXiv preprint arXiv:2006.06033}.

\bibitem[Flamary et~al., 2021]{flamary2021pot}
Flamary, R., Courty, N., Gramfort, A., Alaya, M.~Z., Boisbunon, A., Chambon,
  S., Chapel, L., Corenflos, A., Fatras, K., Fournier, N., Gautheron, L.,
  Gayraud, N.~T., Janati, H., Rakotomamonjy, A., Redko, I., Rolet, A., Schutz,
  A., Seguy, V., Sutherland, D.~J., Tavenard, R., Tong, A., and Vayer, T.
  (2021).
\newblock Pot: Python optimal transport.
\newblock {\em Journal of Machine Learning Research}, 22(78):1--8.

\bibitem[F{\"o}llmer, 1988]{follmer1988random}
F{\"o}llmer, H. (1988).
\newblock Random fields and diffusion processes.
\newblock In {\em {\'E}cole d'{\'E}t{\'e} de Probabilit{\'e}s de Saint-Flour
  XV--XVII, 1985--87}, pages 101--203. Springer.

\bibitem[Fortet, 1940]{fortet1940resolution}
Fortet, R. (1940).
\newblock R{\'e}solution d’un syst{\`e}me d’{\'e}quations de {M}.
  {S}chr{\"o}dinger.
\newblock {\em Journal de Math{\'e}matiques Pures et Appliqu{\'e}s}, 1:83--105.

\bibitem[Goodfellow et~al., 2014]{goodfellow2014generative}
Goodfellow, I.~J., Pouget-Abadie, J., Mirza, M., Xu, B., Warde-Farley, D.,
  Ozair, S., Courville, A., and Bengio, Y. (2014).
\newblock Generative adversarial networks.
\newblock {\em arXiv preprint arXiv:1406.2661}.

\bibitem[Gy{\"o}ngy, 1986]{gyongy1986mimicking}
Gy{\"o}ngy, I. (1986).
\newblock Mimicking the one-dimensional marginal distributions of processes
  having an {I}t\^o differential.
\newblock {\em Probability Theory and Related Fields}, 71:501--516.

\bibitem[Haussmann and Pardoux, 1986]{haussmann1986time}
Haussmann, U.~G. and Pardoux, E. (1986).
\newblock Time reversal of diffusions.
\newblock {\em The Annals of Probability}, 14(4):1188--1205.

\bibitem[Heitz et~al., 2023]{heitz2023iterative}
Heitz, E., Belcour, L., and Chambon, T. (2023).
\newblock Iterative {$\alpha$}-(de)blending: a minimalist deterministic
  diffusion model.
\newblock In {\em SIGGGRAPH}.

\bibitem[Heng et~al., 2021]{heng2021simulating}
Heng, J., Bortoli, V.~D., Doucet, A., and Thornton, J. (2021).
\newblock Simulating diffusion bridges with score matching.
\newblock {\em arXiv preprint arXiv:2111.07243}.

\bibitem[Ho et~al., 2020]{ho2020denoising}
Ho, J., Jain, A., and Abbeel, P. (2020).
\newblock Denoising diffusion probabilistic models.
\newblock In {\em Advances in Neural Information Processing Systems}.

\bibitem[Hoogeboom et~al., 2023]{hoogeboom2023simple}
Hoogeboom, E., Heek, J., and Salimans, T. (2023).
\newblock simple diffusion: End-to-end diffusion for high resolution images.
\newblock {\em arXiv preprint arXiv:2301.11093}.

\bibitem[Huang et~al., 2021]{huang2021variational}
Huang, C.-W., Lim, J.~H., and Courville, A.~C. (2021).
\newblock A variational perspective on diffusion-based generative models and
  score matching.
\newblock In {\em Advances in Neural Information Processing Systems}.

\bibitem[Hyv{\"a}rinen, 2005]{hyvarinen2005estimation}
Hyv{\"a}rinen, A. (2005).
\newblock Estimation of non-normalized statistical models by score matching.
\newblock {\em Journal of Machine Learning Research}, 6(4).

\bibitem[Krylov, 1984]{krylov1984once}
Krylov, N. (1984).
\newblock Once more about the connection between elliptic operators and
  {I}t{\^o}’s stochastic equations.
\newblock In {\em Statistics and Control of Stochastic Processes, Steklov
  Seminar}, pages 214--229.

\bibitem[Kullback, 1968]{kullback1968probability}
Kullback, S. (1968).
\newblock Probability densities with given marginals.
\newblock {\em The Annals of Mathematical Statistics}, 39(4):1236--1243.

\bibitem[Kullback, 1997]{kullback1997information}
Kullback, S. (1997).
\newblock {\em Information Theory and Statistics}.
\newblock Dover Publications, Inc., Mineola, NY.
\newblock Reprint of the second (1968) edition.

\bibitem[L{\'e}onard, 2012]{leonard2012girsanov}
L{\'e}onard, C. (2012).
\newblock Girsanov theory under a finite entropy condition.
\newblock In {\em S{\'e}minaire de Probabilit{\'e}s XLIV}, pages 429--465.
  Springer.

\bibitem[L{\'e}onard, 2014a]{leonard2014some}
L{\'e}onard, C. (2014a).
\newblock Some properties of path measures.
\newblock In {\em S{\'e}minaire de Probabilit{\'e}s XLVI}, pages 207--230.
  Springer.

\bibitem[L{\'e}onard, 2014b]{leonard2014survey}
L{\'e}onard, C. (2014b).
\newblock A survey of the {S}chr{\"o}dinger problem and some of its connections
  with optimal transport.
\newblock {\em Discrete \& Continuous Dynamical Systems-A}, 34(4):1533--1574.

\bibitem[L{\'e}onard et~al., 2014]{leonard2014reciprocal}
L{\'e}onard, C., R{\oe}lly, S., Zambrini, J.-C., et~al. (2014).
\newblock Reciprocal processes. a measure-theoretical point of view.
\newblock {\em Probability Surveys}, 11:237--269.

\bibitem[Lipman et~al., 2023]{lipman2022flow}
Lipman, Y., Chen, R.~T., Ben-Hamu, H., Nickel, M., and Le, M. (2023).
\newblock Flow matching for generative modeling.
\newblock In {\em International Conference on Learning Representations}.

\bibitem[Liu et~al., 2022a]{liu2022deep}
Liu, G.-H., Chen, T., So, O., and Theodorou, E.~A. (2022a).
\newblock Deep generalized {S}chr\"odinger bridge.
\newblock In {\em Advances in Neural Information Processing Systems}.

\bibitem[Liu et~al., 2023a]{liu2023I2SB}
Liu, G.-H., Vahdat, A., Huang, D.-A., Theodorou, E.~A., Nie, W., and
  Anandkumar, A. (2023a).
\newblock {I{$^2$}SB}: Image-to-image {S}chr{\"o}dinger bridge.
\newblock {\em arXiv preprint arXiv:2302.05872}.

\bibitem[Liu, 2022]{liu2022rectified}
Liu, Q. (2022).
\newblock Rectified flow: A marginal preserving approach to optimal transport.
\newblock {\em arXiv preprint arXiv:2209.14577}.

\bibitem[Liu et~al., 2023b]{liu2022flow}
Liu, X., Gong, C., and Liu, Q. (2023b).
\newblock Flow straight and fast: Learning to generate and transfer data with
  rectified flow.
\newblock In {\em International Conference on Learning Representations}.

\bibitem[Liu et~al., 2022b]{liu2022let}
Liu, X., Wu, L., Ye, M., and Liu, Q. (2022b).
\newblock Let us build bridges: Understanding and extending diffusion
  generative models.
\newblock {\em arXiv preprint arXiv:2208.14699}.

\bibitem[Liu et~al., 2015]{liu2015faceattributes}
Liu, Z., Luo, P., Wang, X., and Tang, X. (2015).
\newblock Deep learning face attributes in the wild.
\newblock In {\em International Conference on Computer Vision}.

\bibitem[Meng et~al., 2022]{meng2022sdedit}
Meng, C., He, Y., Song, Y., Song, J., Wu, J., Zhu, J.-Y., and Ermon, S. (2022).
\newblock {SDE}dit: Guided image synthesis and editing with stochastic
  differential equations.
\newblock In {\em International Conference on Learning Representations}.

\bibitem[Palmowski and Rolski, 2002]{palmowski2002technique}
Palmowski, Z. and Rolski, T. (2002).
\newblock A technique for exponential change of measure for {M}arkov processes.
\newblock {\em Bernoulli}, pages 767--785.

\bibitem[Pavon et~al., 2021]{pavon2018data}
Pavon, M., Trigila, G., and Tabak, E.~G. (2021).
\newblock The data-driven {S}chr{\"o}dinger bridge.
\newblock {\em Communications on Pure and Applied Mathematics}, 74:1545--1573.

\bibitem[Peluchetti, 2021]{peluchettinon}
Peluchetti, S. (2021).
\newblock Non-denoising forward-time diffusions.
\newblock \url{https://openreview.net/forum?id=oVfIKuhqfC}.

\bibitem[Peluchetti, 2023]{peluchetti2023diffusion}
Peluchetti, S. (2023).
\newblock Diffusion bridge mixture transports, {S}chr\"odinger bridge problems
  and generative modeling.
\newblock {\em arXiv preprint arXiv:2304.00917}.

\bibitem[Peyr{\'e} and Cuturi, 2019]{peyre2019computational}
Peyr{\'e}, G. and Cuturi, M. (2019).
\newblock Computational optimal transport.
\newblock {\em Foundations and Trends{\textregistered} in Machine Learning},
  11(5-6):355--607.

\bibitem[Pooladian et~al., 2023]{pooladian2023multisample}
Pooladian, A.-A., Ben-Hamu, H., Domingo-Enrich, C., Amos, B., Lipman, Y., and
  Chen, R. (2023).
\newblock Multisample flow matching: Straightening flows with minibatch
  couplings.
\newblock In {\em International Conference on Machine Learning}.

\bibitem[Rogers and Williams, 2000]{rogers2000diffusions}
Rogers, L. C.~G. and Williams, D. (2000).
\newblock {\em Diffusions, {M}arkov processes, and {M}artingales. {V}ol. 2}.
\newblock Cambridge Mathematical Library. Cambridge University Press,
  Cambridge.
\newblock It\^{o} calculus, Reprint of the second (1994) edition.

\bibitem[R{\"u}schendorf, 1995]{ruschendorf1995convergence}
R{\"u}schendorf, L. (1995).
\newblock Convergence of the iterative proportional fitting procedure.
\newblock {\em The Annals of Statistics}, 23(4):1160--1174.

\bibitem[R{\"u}schendorf and Thomsen, 1993]{ruschendorf1993note}
R{\"u}schendorf, L. and Thomsen, W. (1993).
\newblock Note on the {S}chr{\"o}dinger equation and i-projections.
\newblock {\em Statistics \& Probability letters}, 17(5):369--375.

\bibitem[Schr{\"o}dinger, 1932]{schrodinger1932theorie}
Schr{\"o}dinger, E. (1932).
\newblock Sur la th{\'e}orie relativiste de l'{\'e}lectron et
  l'interpr{\'e}tation de la m{\'e}canique quantique.
\newblock {\em Annales de l'Institut Henri Poincar{\'e}}, 2(4):269--310.

\bibitem[Shi et~al., 2022]{shi2022conditional}
Shi, Y., De~Bortoli, V., Deligiannidis, G., and Doucet, A. (2022).
\newblock Conditional simulation using diffusion {S}chr{\"o}dinger bridges.
\newblock In {\em Uncertainty in Artificial Intelligence}.

\bibitem[Sinkhorn, 1967]{sinkhorn1967diagonal}
Sinkhorn, R. (1967).
\newblock Diagonal equivalence to matrices with prescribed row and column sums.
\newblock {\em The American Mathematical Monthly}, 74(4):402--405.

\bibitem[Somnath et~al., 2023]{somnath2023aligned}
Somnath, V.~R., Pariset, M., Hsieh, Y.-P., Martinez, M.~R., Krause, A., and
  Bunne, C. (2023).
\newblock Aligned diffusion {S}chrödinger bridges.
\newblock {\em arXiv preprint arXiv:2302.11419}.

\bibitem[Song et~al., 2021a]{song2020denoising}
Song, J., Meng, C., and Ermon, S. (2021a).
\newblock Denoising diffusion implicit models.
\newblock In {\em International Conference on Learning Representations}.

\bibitem[Song, 2022]{song2022applying}
Song, K.-U. (2022).
\newblock Applying regularized {S}chr\"odinger-bridge-based stochastic process
  in generative modeling.
\newblock {\em arXiv preprint arXiv:2208.07131}.

\bibitem[Song et~al., 2021b]{song2020score}
Song, Y., Sohl{-}Dickstein, J., Kingma, D.~P., Kumar, A., Ermon, S., and Poole,
  B. (2021b).
\newblock Score-based generative modeling through stochastic differential
  equations.
\newblock In {\em International Conference on Learning Representations}.

\bibitem[Stromme, 2023]{stromme2023sampling}
Stromme, A. (2023).
\newblock Sampling from a {S}chr{\"o}dinger bridge.
\newblock In {\em International Conference on Artificial Intelligence and
  Statistics}.

\bibitem[Tamir et~al., 2023]{tamir2023transport}
Tamir, E., Trapp, M., and Solin, A. (2023).
\newblock Transport with support: Data-conditional diffusion bridges.
\newblock {\em arXiv preprint arXiv:2301.13636}.

\bibitem[Thornton et~al., 2022]{thornton2022riemannian}
Thornton, J., Hutchinson, M., Mathieu, E., De~Bortoli, V., Teh, Y.~W., and
  Doucet, A. (2022).
\newblock Riemannian diffusion {S}chr\"odinger bridge.
\newblock {\em arXiv preprint arXiv:2207.03024}.

\bibitem[Tong et~al., 2023]{tong2023conditional}
Tong, A., Malkin, N., Huguet, G., Zhang, Y., Rector-Brooks, J., Fatras, K.,
  Wolf, G., and Bengio, Y. (2023).
\newblock Conditional flow matching: Simulation-free dynamic optimal transport.
\newblock {\em arXiv preprint arXiv:2302.00482}.

\bibitem[van Erven and Harremoes, 2014]{vanErven2014renyi}
van Erven, T. and Harremoes, P. (2014).
\newblock R{\'{e}}nyi divergence and kullback-leibler divergence.
\newblock {\em {IEEE} Transactions on Information Theory}, 60(7):3797--3820.

\bibitem[Vargas et~al., 2021]{vargas2021solving}
Vargas, F., Thodoroff, P., Lamacraft, A., and Lawrence, N. (2021).
\newblock Solving {S}chr{\"o}dinger bridges via maximum likelihood.
\newblock {\em Entropy}, 23(9):1134.

\bibitem[Villani, 2009]{villani2009optimal}
Villani, C. (2009).
\newblock {\em Optimal transport}, volume 338 of {\em Grundlehren der
  Mathematischen Wissenschaften [Fundamental Principles of Mathematical
  Sciences]}.
\newblock Springer-Verlag, Berlin.
\newblock Old and new.

\bibitem[Vincent, 2011]{vincent2011connection}
Vincent, P. (2011).
\newblock A connection between score matching and denoising autoencoders.
\newblock {\em Neural Computation}, 23(7):1661--1674.

\bibitem[Wang et~al., 2021]{wang2021deepschro}
Wang, G., Jiao, Y., Xu, Q., Wang, Y., and Yang, C. (2021).
\newblock Deep generative learning via {S}chr\"odinger bridge.
\newblock In {\em International Conference on Machine Learning}.

\end{thebibliography}

\newpage
\appendix 

\section*{Outline of the Appendix} 
In \Cref{sec:related_work_appendix}, we first clarify the relationship
between different methods in the existing literature and our proposed DSBM framework.  In
\Cref{sec:bridge_design_space_appendix}, we focus on the family of linear SDEs, and draw a link between the
parameterization of bridges in this paper and the stochastic interpolant in
\cite{albergo2023stochastic}.  In \Cref{sec:proofs_appendix}, we give proofs for
results in the main text.  
In \Cref{sec:imf_gaussian_appendix}, we present additional theoretical results for IMF in the Gaussian case. 
In \Cref{sec:discrete_time_mp_appendix}, we derive the discrete-time version of
Markovian projection.  In \Cref{sec:dsbm_vs_dsb_appendix}, we explain
the benefits of DSBM compared to DSB in more detail.  In
\Cref{sec:joint-train-forw}, we describe a method for learning the forward and
backward processes jointly and propose a consistency loss between the forward
and backward processes.  In \Cref{sec:loss_scaling_appendix}, we present additional methodological details
for 
a practical scaling of the loss function to reduce variance, similar to
standard Denoising Diffusion Models.  In \Cref{sec:exp_detail_appendix}, we give
further details for all experiments and additional experimental results.
Finally, we discuss broader impacts of our work in \Cref{sec:broader_impact_appendix}.

\section{Discussion of Existing Works}
\label{sec:related_work_appendix}
\subsection{Bridge Matching and Flow Matching Models}
\label{subsec:fm_relationship_appendix}
In this section, we clarify the relationship between variants of 
Flow Matching and show that they are equivalent under some
conditions. We follow the nomenclature of \cite{tong2023conditional}. We refer
to the algorithm originally proposed in \cite{lipman2022flow} using linear
probability paths and described in \cite[Section 4.1]{tong2023conditional} as
Flow Matching (FM), and the algorithm proposed in \cite[Section
4.2]{tong2023conditional} as Conditional Flow Matching (CFM).  There is a small
constant parameter $\sigma_\text{min}$ in both algorithms, which controls the
smoothing of the modeled distribution. We consider the case
$\sigma_\text{min}=0$. Then CFM recovers exactly the 1st iteration
of Rectified Flow \citep{liu2022flow}. Furthermore, FM, CFM and the 1st iteration of
Rectified Flow are all equivalent when performing generative modeling with a
standard Gaussian $\pi_0$. 
We refer to them collectively
as Flow Matching models (FMMs) as they only differ in the smoothing method. 
We also present them all under the Bridge Matching framework. 
These models can also be interpreted in the context
of \emph{stochastic interpolants}
\citep{albergo2022building,albergo2023stochastic}. 
Finally, we present
recent applications of Bridge Matching and show that some of the objectives in
\cite{somnath2023aligned,liu2023I2SB,delbracio2023inversion} are identical.

\paragraph{Flow Matching and Conditional Flow Matching.} In Flow Matching (FM),
the objective \cite[Equation (21)]{lipman2022flow} is
\begin{equation}
  \label{eq:flow_matching_objective}
    \expeLigne{\Pi_{t,T}}{\normLigne{(\bfX_T - \bfX_t)/(T-t) - v_\theta(t,\bfX_t)}^2},
\end{equation}
where $\Pi_{t,T}$ is given by $\pi_T(\bfX_T) \mathrm{N}(\bfX_t;\frac{t}{T}\bfX_T, {(1 - \frac{t}{T})}^2)$.

In Conditional Flow Matching (CFM),
$\bfX_t^{0,T} = \frac{t}{T}\bfX_T + (1-\frac{t}{T})\bfX_0$, with
$\bfX_0 \sim \mathrm{N}(0, \Id)$ and the objective \cite[Equation
(16)]{tong2023conditional} is given by
\begin{equation}
  \label{eq:loss_tong}
    \expeLigne{\Pi_{0,T}}{\normLigne{(\bfX_T - \bfX_0)/T - v_\theta(t,\bfX_t^{0,T})}^2}.
\end{equation}
This is the same as \cite[Equation (1)]{liu2022flow}. Furthermore, $(\bfX_T - \bfX_0)/T=(1-\frac{t}{T})(\bfX_T - \bfX_0)/(T-t)=(\bfX_T - \bfX_t^{0,T})/(T-t)$, so the CFM objective is equivalent to 
\begin{equation}
    \label{eq:cfm_objective}
    \expeLigne{\Pi_{t,T}}{\normLigne{(\bfX_T - \bfX_t^{0,T})/(T-t) - v_\theta(t,\bfX_t^{0,T})}^2}.
  \end{equation}

  The optimal $v_\theta(t,x_t)=(\CPELigne{\Pi_{T|t}}{\bfX_T}{\bfX_t=x_t} - x_t)/(T-t)$. 
  In the case of generative modeling, $\pi_0$ is a standard Gaussian
  distribution and $\Pi_{0,T}$ is given by
  $\mathrm{N}(\bfX_0;0, \Id) \pi_T(\bfX_T)$. Thus, $\Pi_{t,T}$ is also given by
  $\pi_T(\bfX_T) \mathrm{N}(\bfX_t^{0,T};\frac{t}{T}\bfX_T, {(1 -
    \frac{t}{T})}^2)$. Therefore, the FM \citep{lipman2022flow} and CFM
  \citep{tong2023conditional} objectives are exactly the same. However, CFM is
  also applicable when $\pi_0$ is not Gaussian distributed, so CFM is a
  generalized version of FM\footnote{In the case $\sigma_\text{min}>0$, FM and
    CFM are indeed different in how smoothing is performed, and we refer to
    \cite{tong2023conditional} for a more detailed analysis.}.

  \paragraph{Stochastic Interpolant.} In
  \citep{albergo2022building,albergo2023stochastic}, the
  concept of stochastic interpolant is introduced. In \cite{albergo2022building}, the
  interpolation is deterministic (not necessarily linear), of the form
  $I_t(x_0,x_T) = \alpha(t) x_0 + \beta(t) x_T$, while in
  \cite{albergo2023stochastic}, the interpolation is stochastic given by 
  $I_t(x_0,x_T) = \alpha(t) x_0 + \beta(t) x_T + \gamma(t) \bfZ$ for  $\bfZ \sim \mathrm{N}(0, \Id)$. In
  \cite{albergo2022building}, an ODE is learned and the associated velocity
  field $v_\theta$ is obtained by minimizing the following objective
  \cite[Equation (9)]{albergo2022building}
  \begin{equation}
    \expeLigne{\Pi_{0,T}}{\normLigne{\partial_t I_t(\bfX_0, \bfX_T) - v_\theta(t,\bfX_t^{0,T})}^2}.
  \end{equation}
  Hence, if $I_t(x_0,x_T) = \tfrac{t}{T} x_0 + (1 - \tfrac{t}{T}) x_T$, we
  recover \eqref{eq:loss_tong}. 

  \paragraph{Link with Bridge Matching.} When $\Qbb$ is associated with the Brownian motion 
  $(\sigma \bfB_t)_{t \in \ccint{0,T}}$ and $\sigma \to 0$ in Bridge
  Matching, we recover the same objective
  \eqref{eq:bridge_matching_loss_brownian} as the Flow Matching objective
  \eqref{eq:cfm_objective}, since
  $\nabla \log \Qbb_{T|t}(\bfX_{T}|\bfX_t)=(\bfX_T - \bfX_t)/(\sigma^2
  (T-t))$.  Bridge Matching can also be applied to general distributions
  $\pi_0,\pi_T$; i.e. $\pi_0$ does not have to be restricted to a Gaussian. Therefore, Bridge Matching is a generalized version of Flow Matching, see
  also \cite[Equation (10)]{liu2022let}. 

  \paragraph{Inverse problems and interpolation.} 
  \cite{somnath2023aligned} and \cite{liu2023I2SB} present Bridge Matching algorithms 
  between aligned data $(\bfX_0,\bfX_T) \sim \Pi_{0,T}$. The objectives \cite[Equation (8)]{somnath2023aligned} and \cite[Equation (12)]{liu2023I2SB} are equivalent to the Bridge Matching objective
  \eqref{eq:bridge_matching_loss}. 
  The main difference
  between \cite{liu2023I2SB} and \cite{somnath2023aligned} resides in the choice
  of $\Pi_{0,T}$. In the case of \cite{somnath2023aligned}, this choice is
  motivated by the access to \emph{aligned data} with applications in biology assuming they are distributed as the true Schr\"odinger static coupling, i.e. $\Pi_{0,T}=\PiSB_{0,T}$.
  In the case of \cite{liu2023I2SB}, $\Pi_{0,T}$ corresponds to a pairing between
  clean and corrupted images, e.g. with $\Pi_0=\pi_0$ the distribution of clean images and $\Pi_T=\pi_T$ 
  the distribution of corrupted images obtained from the clean images using the degradation kernel $\Pi_{T|0}$.

  Finally, in \cite[Equation (5)]{delbracio2023inversion} the authors consider a
  reconstruction process of the form
  \begin{equation}
    \label{eq:generative_T_to_zero}
    \rmd \bfX_t = (\CPELigne{\Pi_{0|t}}{\bfX_0}{\bfX_t} - \bfX_t)/t \rmd t , \qquad \bfX_T \sim \Pi_T ,
  \end{equation}
  where here we have replaced $F(x_t,t)$ by
  $\CPELigne{\Pi_{0|t}}{\bfX_0}{\bfX_t=x_t}$. This is justified if the
  $\| \cdot \|_p$ norm in \cite[Equation (4)]{delbracio2023inversion} is
  replaced by $\| \cdot \|_2^2$ (or any Bregman Loss Function, see
  \cite{banerjee2005optimality}).  In \cite{delbracio2023inversion}, $\Pi_{0,T}$
  corresponds to the joint distribution of clean and corrupted images as in
  \cite{liu2023I2SB}. Exchanging the role of $\Pi_0$ and
  $\Pi_T$, \eqref{eq:generative_T_to_zero} can be rewritten equivalently as
    \begin{equation}
    \label{eq:generative_zero_to_T}
    \rmd \bfX_t = (\CPELigne{\Pi_{T|t}}{\bfX_T}{\bfX_t} - \bfX_t)/(T-t) \rmd t , \qquad \bfX_0 \sim \Pi_0 . 
  \end{equation}
  We thus obtain the optimal Flow Matching vector field $v_\theta(t,x_t)=(\CPELigne{\Pi_{T|t}}{\bfX_T}{\bfX_t=x_t} - x_t)/(T-t)$ in \eqref{eq:cfm_objective}. 
  Note that \cite{delbracio2023inversion} also incorporates a stochastic version
  of their objective \cite[Equation
  (7)]{delbracio2023inversion}. It remains an open question whether this objective can be understood as a special instance of the Bridge Matching framework.

\subsection{On DSBM and Existing Works}
In this section, we show that the DSBM framework recovers the above existing algorithms for different choices of bridges $\Qbb_{|0,T}$ and couplings $\Pi_{0,T}^0$ in \Cref{alg:DSBM_general}. For the independent coupling
$\Pi_{0,T}^0 = \pi_0 \otimes \pi_T$ and Brownian bridge $\Qbb_{|0,T}$  \eqref{eq:interpolation_stochastic} 
with diffusion parameter $\sigma_t=\sigma$, 
the loss function \eqref{eq:loss_function} recovers the Brownian Bridge Matching loss \eqref{eq:bridge_matching_loss_brownian}.
Letting $\sigma \to 0$, we recover Flow Matching 
\citep{lipman2022flow}. In this case, further iterations repeating lines 7-9 in \Cref{alg:DSBM_general} (with only forward projections) recover Rectified Flow
\citep{liu2022flow}. 
If the coupling $\Pi_{0,T}^0$ is given by
an estimation of the OT map between $\pi_0$ and $\pi_T$, then the first
iteration recovers OT-CFM \citep{tong2023conditional, pooladian2023multisample}. Finally, for general bridges $\Qbb_{|0,T}$, if we are given the optimal Schr\"odinger Bridge
static coupling $\Pi_{0,T}^0=\PiSB_{0,T}$, then the DSBM procedure converges in one
iteration and we recover \cite{somnath2023aligned}.

\subsection{DSBM and Rectified Flow}
\label{subsec:dsbm_vs_rf_appendix}
We discuss the differences in more detail between our proposed DSBM method and Rectified Flow.  
Both Rectified Flow and DSBM are general frameworks for building transport maps between two general distributions $\pi_0, \pi_T$. However, there are a few important theoretical and practical differences.  
Firstly, we adopt the SDE approach as opposed to the ODE approach
in Rectified Flow. This distinction is crucial in theory, as
\Cref{prop:schro-markov-recip}, which guarantees the uniqueness of the
characterization of SB, is valid only when $\sigma_t>0$. Consequently, Rectified
Flow is not guaranteed to converge to the dynamic optimal transport solution (see e.g.\ counterexample in \cite{liu2022rectified}). In
a following work, \cite{liu2022rectified} established formal connections between
Rectified Flow and OT when restricting the class of vector fields to gradient
fields. In DSBM, the connection to OT is obtained by considering its
entropy-regularized version.  
Furthermore, by adopting the SDE approach, we observe significant improvements of sample quality in our experiments when performing transport between two general distributions. 
This is in line with the theoretical analysis in \cite{albergo2023stochastic}. 
On the other hand, while Bridge Matching also achieves high sample quality using the SDE approach, the transported samples are much more dissimilar to the input data (see e.g.\ Figures \ref{fig:mnist_backward_trajectory}, \ref{fig:mnist_forward_trajectory}, \ref{fig:downscaler_vis_others}). 
Lastly, Rectified Flow also performs Markovian
projections iteratively, but only in the forward direction. Consequently, the
bias in the learned marginals $\Pbb_T^n$ is accumulated and cannot be corrected in later
iterations, i.e. the first iteration of RF will achieve the most accurate marginal
$\Pbb_T^1$. Subsequent iterations can improve the straightness of the flow, but at the cost of sampling
accuracy of $\Pbb_T^n$.  We observe in practice that this becomes particularly problematic if the first iteration of Rectified Flow (which is equivalent to CFM) fails to provide a good transport and learn an accurate $\Pbb_T^1$, 
e.g. in the case of \textit{moons-8gaussians} (\Cref{tab:2d_result}),
Gaussian transport (\Cref{fig:gaussian_ipf_convergence}), 
and MNIST, EMNIST transfer (\Cref{fig:mnist_fid} and \Cref{fig:mnist_backward_samples_comparison}). 
As Rectified Flow cannot recover from this issue, we observe the accuracy of $\Pbb_T^n$ only deteriorates in further iterations as $n$ increases. 
In our methodology, we leverage \Cref{prop:forward_backward_markov_proj} to perform
forward and backward Bridge Matching, and we observe that the marginal accuracy
is able to improve with iteration.

\section{The Design Space of Brownian Bridges}
\label{sec:bridge_design_space_appendix}

\subsection{Relationship to Stochastic Interpolants}
\paragraph{From stochastic interpolants to Brownian bridges.} In this section,
we draw a link between our parameterization of bridges and the one used in
\cite{albergo2023stochastic}. In \cite{albergo2023stochastic}, a stochastic
interpolant is defined as
\begin{equation}
  \label{eq:ode_integral}
  \bfX_t = \bar{\alpha}_t x_0 + \bar{\beta}_t x_T + \bar{\gamma}_t \bfZ ,
\end{equation}
where $\bfZ \sim \mathrm{N}(0, \Id)$. Since their methodology and analysis
mainly relies on the probability flow, they work with \eqref{eq:ode_integral},
which is easier to analyse. In our setting, as we deal mostly with diffusions, it
is natural to parameterize Brownian bridges as follows
\begin{equation}
  \label{eq:sde_diff}
  \rmd \bfX_t =\{ -\alpha_t \bfX_t + \beta_t x_T \} \rmd t + \gamma_t \rmd \bfB_t . 
\end{equation}
The goal of this section is to derive explicit formulas between the parameters
$\bar{\alpha}_t$, $\bar{\beta}_t$ and $\bar{\gamma}_t$ of \eqref{eq:ode_integral} and
the parameters $\alpha_t$, $\beta_t$ and $\gamma_t$ of
\eqref{eq:sde_diff}. Consider $(\bfX_t)_{t \in \ccint{0,T}}$ given by
\eqref{eq:sde_diff}. We have that for any $t \in \ccint{0,T}$
\begin{equation}
  \textstyle \bfX_t = \exp[-A_t] x_0 + \int_0^t \beta_s \exp[A_s - A_t] \rmd s x_T  + \int_0^t \gamma_s \exp[A_s - A_t] \rmd \bfB_s ,
\end{equation}
where $A_t = \int_0^t \alpha_s \rmd s$. Therefore, we have that
\begin{equation}
  \label{eq:identification_design_dos}
  \textstyle \bar{\alpha}_t = \exp[-\int_0^t \alpha_s \rmd s] , \qquad \bar{\beta}_t = \int_0^t \beta_s \exp[-\int_s^t \alpha_u \rmd u] \rmd s , \qquad \bar{\gamma}_t^2 = \int_0^t \gamma_s^2 \exp[-2\int_s^t \alpha_u \rmd u] \rmd s ,
\end{equation}
\begin{equation}
  \label{eq:identificiation_design}
  \alpha_t = -\tfrac{\bar{\alpha}_t'}{\bar{\alpha}_t} , \qquad \beta_t = \bar{\beta}_t' + \bar{\beta}_t \alpha_t , \qquad \gamma_t^2 = (\bar{\gamma}_t^2)' + 2 \bar{\gamma}_t^2 \alpha_t  = 2 \bar{\gamma}_t \bar{\gamma}_t' + 2 \bar{\gamma}_t^2 \alpha_t .
\end{equation}
Using this relationship, we get that the Markovian projection, see \Cref{def:markovian_proj}, is given by
  \begin{equation}
    \rmd \bfX_t^{\star} = f_t^\star(\bfX_t) \rmd t + \gamma_t \rmd \bfB_t , \qquad f_t^\star(x_t) = \CPELigne{\Pi_{T|t}}{-\alpha_t \bfX_t + \beta_t \bfX_T}{\bfX_t = x_t}.
  \end{equation}  
We have that
\begin{align}
  f_t^\star(x_t) &= \CPELigne{\Pi_{T|t}}{-\alpha_t \bfX_t + \beta_t \bfX_T}{\bfX_t = x_t} \\
  &= \CPELigne{\Pi_{0,T|t}}{-\alpha_t( \bar{\alpha}_t \bfX_0 + \bar{\beta}_t \bfX_T + \bar{\gamma}_t \bfZ) + \beta_t \bfX_T}{\bfX_t = x_t} .
\end{align}
Using \eqref{eq:identificiation_design}, we get that
\begin{equation}
  f_t^\star(x_t) = \CPELigne{\Pi_{0,T|t}}{\bar{\alpha}_t' \bfX_0 + \bar{\beta}_t' \bfX_T + \tfrac{\bar{\alpha}_t' \bar{\gamma}_t}{\bar{\alpha}_t}\bfZ}{\bfX_t=x_t}.
\end{equation}
In \cite{albergo2023stochastic}, it is shown that
$\nabla \log \Mbb^\star_t(x_t) = -\CPELigne{\Pi_{0,T|t}}{\bfZ}{\bfX_t
  =x_t}/\bar{\gamma}_t$, where $\Mbb^\star$ is the Markovian projection.  The
probability flow associated with $(\bfX^\star_t)_{t \in \ccint{0,T}}$ is given
by
\begin{align}
  \rmd \bfZ_t^\star &= \{f_t^\star( \bfZ_t^\star) - \tfrac{\gamma_t^2}{2} \nabla \log \Mbb^\star_t( \bfZ_t^\star) \} \rmd t  \\
                    &= \{\CPELigne{\Pi_{0,T|t}}{\bar{\alpha}_t' \bfX_0 + \bar{\beta}_t' \bfX_T + (-\alpha_t \bar{\gamma}_t + \tfrac{\gamma_t^2}{2\bar{\gamma}_t})\bfZ}{\bfX_t=\bfZ_t^\star} \} \rmd t \\
                      &= \{\CPELigne{\Pi_{0,T|t}}{\bar{\alpha}_t' \bfX_0 + \bar{\beta}_t' \bfX_T + \bar{\gamma}_t' \bfZ}{\bfX_t=\bfZ_t^\star} \} \rmd t .
\end{align}
Hence, we recover \cite[Theorem 2.6]{albergo2023stochastic}.

\paragraph{Non-Markov path measures.} A natural question is whether
\eqref{eq:sde_diff} arises as the bridge measure of some \emph{Markov}
measure. For instance, if $\Qbb$ is associated with
$(x_0 + \bfB_t)_{t \in \ccint{0,T}}$, then pinning the process at $x_T$ at
time $T$, we get that the associated bridge measure $\Qbb_{|0,T}$ is given by
\begin{equation}
  \rmd \bfX_t^{0,T} = (x_T - \bfX_t)/(T-t) \rmd t + \rmd \bfB_t . 
\end{equation}
Therefore, we recover \eqref{eq:sde_diff} with
$\alpha_t = \beta_t = \tfrac{1}{T-t}$ and $\gamma_t = 1$. Using
\eqref{eq:identification_design_dos}, we get that
$\bar{\alpha}_t = 1 - \tfrac{t}{T}$, $\bar{\beta}_t = \tfrac{t}{T}$ and
$\bar{\gamma}_t^2 = (T-t)t/T$. We recover \eqref{eq:interpolation_stochastic},
upon noting that $\bfB_t - \tfrac{t}{T}\bfB_T$ is Gaussian with zero mean and
variance $(T-t)t/T$.

More generally, we consider a Markov measure $\Qbb$ associated with
$(\bfX_t)_{t \in \ccint{0,T}}$ such that
\begin{equation}
  \rmd \bfX_t = -a_t \bfX_t \rmd t + c_t \rmd \bfB_t , \qquad \bfX_0 = x_0 .
\end{equation}
We now derive the associated bridge measure $\Qbb_{|0,T}$:
\begin{equation}
  \textstyle \bfX_T = \exp[-\Lambda_T + \Lambda_t] \bfX_t   + \int_t^T c_s \exp[\Lambda_s - \Lambda_T] \rmd \bfB_s ,
\end{equation}
with $\Lambda_t = \int_0^t a_s \rmd s$.
We have that
\begin{align}
  \textstyle c_t^2 \nabla_{x_t} \log \Qbb_{T|t}(x_T|x_t)&\textstyle = (c_t^2 \exp[\Lambda_t - \Lambda_T] / \int_t^T c_s^2 \exp[2(\Lambda_s - \Lambda_T)] \rmd s) x_T  \\
  & \qquad \textstyle - (c_t^2 \exp[2(\Lambda_t - \Lambda_T)] / \int_t^T c_s^2 \exp[2(\Lambda_s - \Lambda_T)] \rmd s) x_t . 
\end{align}
Therefore, combining this result and \eqref{eq:bridge_forward}, we get that $\Qbb_{|0,T}$ is associated with
\begin{align}
  &\textstyle \alpha_t = a_t + c_t^2 \exp[-2\int_t^Ta_s \rmd s ] / \int_t^T c_s^2\exp[-2\int_s^Ta_u \rmd u ] \rmd s , \\
    &\textstyle \beta_t = c_t^2 \exp[-\int_t^Ta_s \rmd s ] / \int_t^T c_s^2\exp[-2\int_s^Ta_u \rmd u ] \rmd s , \qquad \gamma_t = c_t . 
\end{align}
In that case $(a_t,c_t)_{t \in \ccint{0,T}}$ entirely parameterize
$(\alpha_t, \beta_t, \gamma_t)_{t \in \ccint{0,T}}$. Hence, in the
Ornstein-Uhlenbeck setting, if $\Qbb_{|0,T}$ is the bridge of a Markov measure,
it is fully parameterized by two functions while in the non-Markov setting it is
parameterized by three functions.

In this paper, we present our framework in the Markovian setting as the
Schr\"odinger Bridge problem is usually defined with respect to Markov reference
measures. However, our methodology could be extended in a straightforward fashion
to the non-Markovian setting. This would allow for a further exploration of the
design space of DSBM.

\subsection{Linear SDE and Bridge Matching}
 
In this section, we study further the diffusion bridge of linear SDEs. 
Arbitrary Markov measures can be chosen to
build bridges; however, we want to be able to compute some representations of
the bridge in an explicit way. More precisely, denoting
$(\bfX^{0,T}_t)_{t \in \ccint{0,T}}$ the diffusion bridge with $x_0, x_T$ the initial and final
  condition, we want to have access to the following:
\begin{itemize}
\item \emph{integral sampler}: we want to have a formula to sample $\bfX^{0,T}_t$ for any
  $t \in \ccint{0,T}$ without having to run a stochastic process forward or
  backward. 
\item \emph{forward sampler}: we want to have a forward SDE for $\bfX^{0,T}_t$ with explicit
  coefficients, which might depend on $x_T$, running in a forward fashion terminating at $x_T$.
\item \emph{backward sampler}: we want to have a backward SDE for $\bfY^{0,T}_t=\bfX^{0,T}_{T-t}$ with explicit
  coefficients, which might depend on $x_0$, running in a backward fashion terminating at $x_0$.
\end{itemize}

We focus on \emph{linear SDEs} of the form
$\rmd \bfX_t = -\alpha \beta_t \bfX_t \rmd t + \sigma \beta_t^{1/2} \rmd \bfB_t$, which
are particularly amenable, where $(\beta_t)_{t \in \ccint{0,T}}$ is a schedule
with $\beta \in \rmC(\ccint{0,T}, \ooint{0,+\infty})$. 

\subsubsection{Brownian motion} 
First, we consider the Brownian motion setting and
let $(\bfX_t)_{t \in \ccint{0,T}}$ be associated with $\Qbb$ with $\rmd \bfX_t = \beta_t^{1/2} \rmd \bfB_t$. We
consider $(\bfX^{0,T}_t)_{t \in \ccint{0,T}}$ conditioned at both ends $\bfX^{0,T}_0 = x_0$ and
$\bfX^{0,T}_T = x_T$. First, using \cite[Theorem 3.3]{barczy2013representations}, we
have that for any $t \in \ccint{0,T}$
\begin{equation}
  \bfX^{0,T}_t = \tfrac{R(t,T)}{R(0,T)} x_0 + \tfrac{R(0,t)}{R(0,T)} (x_T  - \bfX_T) + \bfX_t ,
\end{equation}
with $R(s,t) = \int_s^t \beta_u \rmd u = \sigma^2 (B_t-B_s)$, where for any
$t \in \ccint{0,T}$, $B_t = \int_0^t \beta_s \rmd s$. Therefore, we get that
\begin{equation}
  \label{eq:integral_sampler_brownian}
  \bfX^{0,T}_t = (1 - \tfrac{B(t)}{B(T)}) x_0 + \tfrac{B(t)}{B(T)} (x_T  - \bfX_T) + \bfX_t .
\end{equation}
\eqref{eq:integral_sampler_brownian} defines the \emph{integral sampler}. In addition, using
\cite[Theorem 3.2]{barczy2013representations}, we have
\begin{equation}
  \bfX^{0,T}_0 = x_0 , \qquad \rmd \bfX^{0,T}_t = \{-\sigma^2 \beta_t /\gamma(t,T) \bfX^{0,T}_t + \sigma^2 \beta_t /\gamma(t,T) x_T\} \rmd t + \sigma \beta_t^{1/2} \rmd \bfB_t ,
\end{equation}
where $\gamma(s,t) = \sigma^2(B(t)-B(s))$. Therefore, we get that
\begin{equation}
  \label{eq:forward_sampler_brownian}
  \bfX^{0,T}_0 = x_0 , \qquad \rmd \bfX^{0,T}_t = \{-\tfrac{\beta_t}{B(T)-B(t)} \bfX^{0,T}_t + \tfrac{\beta_t}{B(T)-B(t)} x_T\} \rmd t + \sigma \beta_t^{1/2} \rmd \bfB_t .
\end{equation}
\eqref{eq:forward_sampler_brownian} defines the \emph{forward sampler}. Finally, we derive the
\emph{backward sampler} by considering the time-reversal of the forward
unconditional process (initialized at $x_0$). Following \cite{haussmann1986time},
\begin{equation}
  \label{eq:time_reversal}
  \bfY^{0,T}_0 = x_T , \qquad \rmd \bfY^{0,T}_t = \sigma^2 \beta_{T-t} \nabla \log \Qbb_{T-t|0}(\bfY^{0,T}_t|x_0) \rmd t + \sigma \beta_{T-t}^{1/2} \rmd \bfB_t . 
\end{equation}
In addition, we have that
\begin{equation}
  \bfX_t = x_0 + \sigma B(t)^{1/2} \vareps_t , \qquad \vareps_t \sim \mathrm{N}(0, \Id) .  
\end{equation}
Hence, we get that for any $t \in \ccint{0,T}$ and $x \in \rset^d$
\begin{equation}
  \nabla \log \Qbb_{t|0}(x|x_0) = - (x - x_0)/ (\sigma^2 B(t)) . 
\end{equation}
Combining this result and \eqref{eq:time_reversal}, we get
\begin{equation}
  \label{eq:backward_sampler_brownian}
  \bfY^{0,T}_0 = x_T , \qquad \rmd \bfY^{0,T}_t = \{ -\tfrac{ \beta_{T-t}}{B(T-t)} \bfY^{0,T}_t + \tfrac{\beta_{T-t}}{B(T-t)} x_0 \} \rmd t  + \sigma \beta_{T-t}^{1/2} \rmd \bfB_t . 
\end{equation}
Combining \eqref{eq:integral_sampler_brownian},
\eqref{eq:forward_sampler_brownian} and \eqref{eq:backward_sampler_brownian}, we get
\begin{align}
  & \bfX^{0,T}_t = \lambda_t x_0 +\varphi_t (x_T  - \bfX_T) + \bfX_t . \\
  & \bfX^{0,T}_0 = x_0 , \qquad \rmd \bfX^{0,T}_t = \{\kappa_t^f \bfX^{0,T}_t + \Psi_t^f x_T\} \rmd t + \sigma \beta_t^{1/2} \rmd \bfB_t , \\
  & \bfY^{0,T}_0 = x_T , \qquad \rmd \bfY^{0,T}_t = \kappa_{T-t}^b \bfY^{0,T}_t + \Psi_{T-t}^b  x_0 \} \rmd t  + \sigma \beta_{T-t}^{1/2} \rmd \bfB_t ,
\end{align}
with
\begin{align}
  &\lambda_t = 1 - \tfrac{B(t)}{B(T)} , \qquad \varphi_t = \tfrac{B(t)}{B(T)} , \\
  &\kappa_t^f = -\tfrac{\beta_t}{B(T) - B(t)} , \qquad \Psi_t^f = \tfrac{\beta_t}{B(T) - B(t)} , \\
  &\kappa_t^b = -\tfrac{\beta_t}{B(t)} , \qquad \Psi_t^b = \tfrac{\beta_t}{B(t)} .
\end{align}

\subsubsection{Ornstein-Uhlenbeck}
Second, we consider the Ornstein-Uhlenbeck setting and let
$(\bfX_t)_{t \in \ccint{0,T}}$ with
$\rmd \bfX_t = -\alpha \beta_t \bfX_t \rmd t + \sigma \beta_t^{1/2} \rmd \bfB_t$, with
$\alpha \neq 0$. We consider $(\bfX^{0,T}_t)_{t \in \ccint{0,T}}$, the stochastic
process $(\bfX_t)_{t \in \ccint{0,T}}$ conditioned at both ends $\bfX^{0,T}_0 = x_0$
and $\bfX^{0,T}_T = x_T$. First, using \cite[Theorem 3.3]{barczy2013representations},
we have that for any $t \in \ccint{0,T}$
\begin{equation}
  \bfX^{0,T}_t = \tfrac{R(t,T)}{R(0,T)} x_0 + \tfrac{R(0,t)}{R(0,T)} (x_T  - \bfX_T) + \bfX_t ,
\end{equation}
with $R(s,t) = \exp[\alpha(B(t) - B(s))] \gamma(s,t)$, with
$\gamma(s,t) = \int_s^t \sigma^2 \beta(u) \exp[-2\alpha (B(t) - B(u))] \rmd
u$. In particular, we have
\begin{equation}
  \label{eq:R_gamma}
  \gamma(s,t) = \tfrac{\sigma^2}{2\alpha}(1 - \exp[-2\alpha(B(t) - B(s))] , \qquad R(s,t) = \tfrac{\sigma^2}{\alpha} \sinh(\alpha(B(t) - B(s))). 
\end{equation}
Therefore, we get that
\begin{equation}
  \label{eq:integral_sampler_ou}
  \bfX^{0,T}_t = \tfrac{\sinh(\alpha(B(T) - B(t))}{\sinh(\alpha B(T))} x_0 + \tfrac{\sinh(\alpha B(t))}{\sinh(\alpha B(T))} (x_T  - \bfX_T) + \bfX_t .
\end{equation}
\eqref{eq:integral_sampler_ou} defines the \emph{integral sampler}. In addition, using
\cite[Theorem 3.2]{barczy2013representations} and \eqref{eq:R_gamma}, we have
$\bfX^{0,T}_0 = x_0$ and
\begin{align}
  \rmd \bfX^{0,T}_t &= \{-\alpha\beta_t \bfX_t -\tfrac{\sigma^2 \beta_t \exp[-2\alpha(B(T) - B(t))]}{\gamma(t,T)} \bfX^{0,T}_t + \tfrac{\sigma^2 \beta_t \exp[-\alpha(B(T) - B(t))]}{\gamma(t,T)} x_T\} \rmd t + \sigma \beta_t^{1/2} \rmd \bfB_t \\
              &= \{-\alpha\beta_t \bfX_t -  \tfrac{2\alpha \beta_t }{\exp[2\alpha(B(T) - B(t))] - 1} \bfX^{0,T}_t + \tfrac{2\alpha \beta_t }{\exp[-\alpha(B(T) - B(t))] - \exp[-\alpha(B(T) - B(t))]} x_T\} \rmd t + \sigma \beta_t^{1/2} \rmd \bfB_t \\
              &= \{-\alpha\beta_t \tfrac{\exp[2\alpha(B(T) - B(t))] + 1}{\exp[2\alpha(B(T) - B(t))] - 1} \bfX^{0,T}_t + \tfrac{2\alpha \beta_t }{\exp[-\alpha(B(T) - B(t))] - \exp[-\alpha(B(T) - B(t))]} x_T\} \rmd t + \sigma \beta_t^{1/2} \rmd \bfB_t \\
    &= \{-\alpha \beta_t\coth(\alpha (B(T) - B(t))) \bfX^{0,T}_t + \alpha \beta_t \csch(\alpha (B(T) - B(t))) x_T \} \rmd t + \sigma \beta_t^{1/2} \rmd \bfB_t .
\end{align}
In the formula, $\coth$ is the hyperbolic cotangent function defined as $\coth(x)=\frac{1}{\tanh(x)}=\frac{\cosh(x)}{\sinh(x)}$ and $\csch$ is the hyperbolic cosecant function defined as $\csch(x)=\frac{1}{\sinh(x)}$. Combining this result and \eqref{eq:R_gamma}, we get that 
\begin{equation}
  \label{eq:forward_sampler_ou}
    \bfX^{0,T}_0 = x_0 , \ \rmd \bfX^{0,T}_t = \{-\alpha \beta_t\coth(\alpha (B(T) - B(t))) \bfX^{0,T}_t + \alpha \beta_t \csch(\alpha (B(T) - B(t))) x_T \} \rmd t + \sigma \beta_t^{1/2} \rmd \bfB_t . 
\end{equation}
\eqref{eq:forward_sampler_ou} defines the \emph{forward sampler}. Finally, we derive the
\emph{backward sampler} by considering the time-reversal of the forward
unconditional process (initialized at $x_0$). Following \cite{haussmann1986time},
\begin{equation}
  \label{eq:time_reversal_ou}
  \bfY^{0,T}_0 = x_T , \qquad \rmd \bfY^{0,T}_t = \{ \alpha \beta_{T-t} \bfY^{0,T}_t  + \sigma^2 \beta_{T-t} \nabla \log \Qbb_{T-t|0}(\bfY^{0,T}_t|x_0)\} \rmd t + \sigma \beta_{T-t}^{1/2} \rmd \bfB_t . 
\end{equation}
In addition, we have that
\begin{equation}
  \bfX_t = \exp[-\alpha B(t)] x_0 + \tfrac{\sigma}{\sqrt{2\alpha}} (1 - \exp[-2\alpha B(t)])^{1/2}  \vareps_t , \qquad \vareps_t \sim \mathrm{N}(0, \Id) .  
\end{equation}
Hence, we get that for any $t \in \ccint{0,T}$ and $x \in \rset^d$
\begin{equation}
  \nabla \log \Qbb_{t|0}(x|x_0) = - 2\alpha (x - \exp[-\alpha B(t)]x_0)/ (\sigma^2 (1 - \exp[-2\alpha B(t)])) . 
\end{equation}
Combining this result and \eqref{eq:time_reversal_ou}, we get $\bfY^{0,T}_0 = x_T$ and 
\begin{align}
  \rmd \bfY^{0,T}_t &= \{ \alpha \beta_{T-t} \bfY^{0,T}_t - \tfrac{2\alpha \beta_{T-t}}{1 - \exp[-\alpha B(T-t)]}\bfY^{0,T}_t  + \tfrac{2\alpha \beta_{T-t}\exp[-\alpha B(T-t)]}{1 - \exp[-2\alpha B(T-t)]} x_0 \} \rmd t  + \sigma \beta_{T-t}^{1/2} \rmd \bfB_t \\
              &= \{ -\alpha \beta_{T-t} \tfrac{1 + \exp[-2\alpha B(T-t)]}{1 - \exp[-2\alpha B(T-t)]}\bfY^{0,T}_t  + \tfrac{2\alpha \beta_{T-t}}{\exp[\alpha B(T-t)] - \exp[-\alpha B(T-t)]} x_0 \} \rmd t  + \sigma \beta_{T-t}^{1/2} \rmd \bfB_t \\
  &= \{-\alpha \beta_{T-t}\coth(\alpha B(T-t)) \bfY^{0,T}_t + \alpha \beta_{T-t} \csch(\alpha B(T-t)) x_T \} \rmd t + \sigma \beta_{T-t}^{1/2} \rmd \bfB_t . 
\end{align}
Therefore 
\begin{equation}
  \label{eq:backward_sampler_ou}
  \bfY^{0,T}_0 = x_T , \ \rmd \bfY^{0,T}_t = \{-\alpha \beta_{T-t}\coth(\alpha B(T-t)) \bfY^{0,T}_t + \alpha \beta_{T-t} \csch(\alpha B(T-t)) x_T \} \rmd t + \sigma \beta_{T-t}^{1/2} \rmd \bfB_t . 
\end{equation}
Combining \eqref{eq:integral_sampler_ou},
\eqref{eq:forward_sampler_ou} and \eqref{eq:backward_sampler_ou}, we get
\begin{align}
  & \bfX^{0,T}_t = \lambda_t x_0 +\varphi_t (x_T  - \bfX_T) + \bfX_t . \\
  & \bfX^{0,T}_0 = x_0 , \qquad \rmd \bfX^{0,T}_t = \{\kappa_t^f \bfX^{0,T}_t + \Psi_t^f x_T\} \rmd t + \sigma \beta_t^{1/2} \rmd \bfB_t , \\
  & \bfY^{0,T}_0 = x_T , \qquad \rmd \bfY^{0,T}_t = \{\kappa_{T-t}^b \bfY^{0,T}_t + \Psi_{T-t}^b  x_0 \} \rmd t  + \sigma \beta_{T-t}^{1/2} \rmd \bfB_t ,
\end{align}
with
\begin{align}
  &\lambda_t = \tfrac{\sinh(\alpha(B(T) - B(t))}{\sinh(\alpha B(T))} , \qquad \varphi_t =  \tfrac{\sinh(\alpha B(t))}{\sinh(\alpha B(T))} , \\
  &\kappa_t^f = -\alpha \beta_{t}\coth(\alpha (B(T)-B(t))) , \qquad \Psi_t^f = \alpha \beta_{t}\csch(\alpha (B(T)-B(t))) , \\
  &\kappa_t^b = -\alpha \beta_{t}\coth(\alpha B(t)) , \qquad \Psi_t^b = \alpha \beta_{t}\csch(\alpha B(t)) .
\end{align}
Using that $\tanh(x) \sim x$ and $\sinh(x) \sim x$ for $x \to 0$, we recover the
Brownian motion setting by letting $\alpha \to 0$. Note that
\cite{albergo2023stochastic} show that given an \emph{integral sampler}, which
does not necessarily comes from a Markovian process, a \emph{forward sampler}
with the same marginals can be defined, although it does not necessarily
satisfies the fact that the \emph{paths} have the same distribution.

\section{Proofs}
\label{sec:proofs_appendix}

\subsection{Proof of \Cref{prop:markovian-projection}}

We refer the reader to \cite{chung2006markov,rogers2000diffusions} for an introduction to Doob
$h$-transform. Our theoretical treatment of the Doob $h$-transform closely
follows \cite{palmowski2002technique}.

First, we introduce the \emph{infinitesimal generator} $\mathcal{A}$ given for
any $f \in \rmC_c^\infty(\ccint{0,T} \times \rset^d, \rset)$,
$t \in \ccint{0,T}$ and $x \in \rset^d$ by
\begin{equation}
  \label{eq:infinitesimal_generator_Q}
  \mathcal{A}f(t,x) = \langle f_t(x), \nabla f(t, x) \rangle + \tfrac{\sigma_t^2}{2} \Delta f(t,x) + \partial_t f(t,x) . 
\end{equation}
The following assumption ensures that the diffusion associated with $\Qbb$ as well as its Markovian projections are well-defined.

\begin{assumption}
  \label{assum:nice_diffusion}
  $f$, $\sigma$ and
  $(t, x_t) \mapsto \CPELigne{\Pi_{T|t}}{\nabla \log
    \Qbb_{T|t}(\bfX_T|\bfX_t)}{\bfX_t=x_t}$ are locally Lipschitz and there
  exist $C > 0 $, $\psi \in \rmC(\ccint{0,T}, \rset_+)$ such that for any
  $t \in \ccint{0,T}$ and $x_0, x_t \in \rset^d$, we have
 \begin{align}
    &\normLigne{f_t(x_t)} \leq C(1 + \normLigne{x_t}) , \qquad C \geq \sigma_t \geq 1/C , \\
        &\normLigne{\CPELigne{\Pi_{T|t}}{\nabla \log \Qbb_{T|t}(\bfX_T|\bfX_t)}{\bfX_t=x_t}} \leq C\psi(t)(1 + \normLigne{x_t}) .
  \end{align}
\end{assumption}

We consider the following assumption, which will ensure that we can apply Doob
$h$-transform techniques.

\begin{assumption}
  \label{assum:good_function_palmowski}
  For any $x_0 \in \rset^d$, $\Pi_{T|0}$ is absolutely continuous
  w.r.t. $\Qbb_{T|0}$.  For any $x_0 \in \rset^d$, let $\varphi_{T|0}$ be given
  for any $x_T \in \rset^d$ by
  $\varphi_{T|0}(x_T|x_0) = \rmd \Pi_{T|0}(x_T|x_0) / \rmd
  \Qbb_{T|0}(x_T|x_0)$ and assume that for any $x_0 \in \rset^d$,
  $x_T \mapsto \varphi_{T|0}(x_T|x_0)$ is bounded. For any $x_0 \in \rset^d$,
  let $\varphi_{t|0}$ given for any $x_t \in \rset^d$ and $t \in \ccint{0,T}$ by
  \begin{equation}
    \label{eq:def_h_transform}
    \textstyle
    \varphi_{t|0}(x_t|x_0) = \int_{\rset^d} \varphi_{T|0}(x_T|x_0) \rmd \Qbb_{T|t} (x_T|x_t) .
  \end{equation}
  Finally, we assume that for any $x_0 \in \rset^d$,
  $(t, x_t) \mapsto 1/\varphi_{t|0}(x_t|x_0)$ and
  $(t, x_t) \mapsto \mathcal{A} \varphi_{t|0}(x_t|x_0)$ are bounded.
\end{assumption}

This means that for any $x_0 \in \rset^d$, $(t, x_t) \mapsto \varphi_t(x_t|x_0)$
is a \emph{good function} in the sense of \cite[Proposition
3.2]{palmowski2002technique}.  Note here that these assumptions could be relaxed
on a case-by-case basis. We leave this study for future work.

The following lemma is a direct consequence of
\Cref{assum:good_function_palmowski} and \eqref{eq:def_h_transform}. It ensures
that the $h$-function $\varphi_{t|0}$ satisfies the backward Kolmogorov
equation.

\begin{lemma}
  \label{lemma:backward_kolmogorov}
  Assume \tref{assum:good_function_palmowski}. Then,
  $\varphi \in \rmC^{1,2}(\coint{0,T} \times \rset^d, \rset)$ and
  $\mathcal{A} \varphi_{|0}=0$.
\end{lemma}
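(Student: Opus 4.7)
The plan is to interpret $\varphi_{t|0}(x_t|x_0)$ as a conditional expectation of a bounded terminal functional under the Markov reference measure $\Qbb$. Indeed, by \eqref{eq:def_h_transform} and the Markov property of $\Qbb$,
\begin{equation}
\varphi_{t|0}(x_t|x_0) = \CPELigne{\Qbb}{\varphi_{T|0}(\bfX_T|x_0)}{\bfX_t = x_t}.
\end{equation}
This is precisely the structure appearing in the Feynman--Kac representation of solutions to the backward Kolmogorov equation with terminal data $\varphi_{T|0}(\cdot|x_0)$, which suggests a two-step approach: first establish $\rmC^{1,2}$ regularity of $\varphi_{|0}$, then derive the PDE via a martingale argument.

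For the regularity step, I would invoke classical parabolic PDE theory. Under \Cref{assum:nice_diffusion}, the coefficients $f$ and $\sigma$ are locally Lipschitz with linear growth, and $\sigma$ is bounded above and away from zero (uniform ellipticity on compacts in time). Combined with the boundedness of $x_T \mapsto \varphi_{T|0}(x_T|x_0)$ supplied by \Cref{assum:good_function_palmowski}, standard interior Schauder-type estimates for uniformly parabolic operators with Hölder continuous coefficients (as developed e.g.\ in Friedman's monograph on parabolic equations) guarantee that the function $(t,x_t) \mapsto \varphi_{t|0}(x_t|x_0)$ lies in $\rmC^{1,2}(\coint{0,T} \times \rset^d, \rset)$. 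Alternatively, one can construct the transition density of $\Qbb$ via the parametrix method and differentiate under the integral sign in \eqref{eq:def_h_transform}.

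Once $\rmC^{1,2}$ regularity is secured, the PDE $\mathcal{A}\varphi_{|0}=0$ follows from a standard martingale argument. The tower property together with the Markov property of $\Qbb$ imply that, for each fixed $x_0$, the process $M_t := \varphi_{t|0}(\bfX_t|x_0)$ is a martingale under $\Qbb(\cdot | \bfX_0 = x_0)$, since for any $s \leq t$,
\begin{equation}
\CPELigne{\Qbb}{\varphi_{t|0}(\bfX_t|x_0)}{\bfX_s} = \CPELigne{\Qbb}{\CPELigne{\Qbb}{\varphi_{T|0}(\bfX_T|x_0)}{\bfX_t}}{\bfX_s} = \varphi_{s|0}(\bfX_s|x_0).
\end{equation}
Applying Itô's formula to $M_t$ using the SDE $\rmd \bfX_t = f_t(\bfX_t)\rmd t + \sigma_t \rmd \bfB_t$ gives
\begin{equation}
\rmd M_t = \mathcal{A}\varphi_{|0}(t,\bfX_t|x_0)\, \rmd t + \sigma_t \langle \nabla \varphi_{t|0}(\bfX_t|x_0), \rmd \bfB_t\rangle,
\end{equation}
where $\mathcal{A}$ is the generator \eqref{eq:infinitesimal_generator_Q}. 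The martingale property forces the bounded variation part to vanish almost surely, yielding $\mathcal{A}\varphi_{|0}(t,\bfX_t|x_0) = 0$; by continuity and the fact that $\Qbb$ has full support on $\coint{0,T} \times \rset^d$ (due to non-degenerate noise), this extends to all of $\coint{0,T} \times \rset^d$.

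The main obstacle is the regularity step. The martingale/Itô computation is routine once $\rmC^{1,2}$ is in hand, but justifying classical regularity from a merely bounded measurable terminal datum $\varphi_{T|0}(\cdot|x_0)$ relies on the smoothing effect of the uniformly parabolic operator $\mathcal{A}$. Here one must carefully exploit both the uniform ellipticity of $\sigma$ and local regularity of $f, \sigma$ from \Cref{assum:nice_diffusion}; a standard localization argument together with interior parabolic estimates suffices, but requires some care near the terminal time $T$, which is why the regularity is only asserted on the half-open interval $\coint{0,T}$ rather than up to $T$.
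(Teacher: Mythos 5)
Your argument is correct, and it matches the route the paper intends: the paper offers no written proof at all, merely stating that the lemma is ``a direct consequence of \tref{assum:good_function_palmowski} and \eqref{eq:def_h_transform}'' --- i.e.\ that $\varphi_{t|0}(x_t|x_0)=\CPELigne{\Qbb}{\varphi_{T|0}(\bfX_T|x_0)}{\bfX_t=x_t}$ is a conditional expectation of a terminal functional and therefore solves the backward Kolmogorov equation. Your tower-property/It\^o martingale computation is exactly the formalization of that one-line claim. Where you genuinely diverge is the regularity step: the paper effectively sweeps $\rmC^{1,2}$ regularity into \tref{assum:good_function_palmowski} itself, since assuming $(t,x_t)\mapsto\mathcal{A}\varphi_{t|0}(x_t|x_0)$ is bounded already presupposes that $\varphi_{|0}$ lies in the domain of the generator; your appeal to interior parabolic estimates (uniform ellipticity from $\sigma_t\geq 1/C$, locally Lipschitz hence locally H\"older coefficients, bounded terminal datum, regularity only on the half-open interval $\coint{0,T}$) supplies an actual justification that the paper omits. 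Two minor caveats: this regularity argument uses \tref{assum:nice_diffusion}, which is not listed among the lemma's hypotheses but is in force throughout the section (and is indeed invoked in the proof of \Cref{prop:markovian-projection} where the lemma is used); and the final passage from ``the drift vanishes $\Qbb$-a.s.\ along paths'' to ``$\mathcal{A}\varphi_{|0}=0$ everywhere'' needs positivity of the transition density, which you correctly flag and which holds under the non-degeneracy of $\sigma$. Neither point is a gap; your write-up is strictly more complete than the paper's.
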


Using \eqref{eq:infinitesimal_generator_Q}, we have that for any
$x_0 \in \rset^d$ and $f \in \rmC^\infty_c(\ccint{0,T} \times \rset^d, \rset)$,
$t \in \ccint{0,T}$ and $x_t \in \rset^d$
\begin{equation}
  \label{eq:doob_h_generator}
  ( \mathcal{A}(f \varphi_{|0}) - f \mathcal{A} \varphi_{|0})(t,x_t) / \varphi_{|0}(t,x_t) = \mathcal{A} f(t,x_t) + \sigma_t^2 \langle \nabla f(t,x_t) , \nabla \log \varphi_{t|0}(x_t|x_0) \rangle . 
\end{equation}

Finally, we
consider the following assumption, which will ensure that the Doob $h$-transform
is well-defined. 

\begin{assumption}
  \label{assum:doob_h_defined}
  For any $x_0 \in \rset^d$, there exists $C \geq 0$ such that for
  any $t \in \ccint{0,T}$ and $x_t \in \rset^d$,
  $\normLigne{\nabla \log \varphi_{t|0}(x_t|x_0)} \leq C(1 +\normLigne{x_0}+\normLigne{x_t})$. 
\end{assumption}

We are now ready to state and prove \Cref{prop:markovian-projection}. Note that
the Markovian projection is defined in \Cref{def:markovian_proj}. Finally, we
define $\calM$ the space of path measures such that
$\Pbb \in \calM$ if $\Pbb$ is associated with
$\rmd \bfX_t = \{f_t(\bfX_t) + v_t(\bfX_t) \} \rmd t + \sigma_t \rmd \bfB_t$,
with $\sigma,v$ locally Lipschitz. This restriction of Markov measures allows
us to apply the entropic version of the Girsanov theorem
\citep{leonard2012girsanov}. It has no impact on our methodology.

\begin{proposition*}
  Assume \tref{assum:nice_diffusion}, \tref{assum:good_function_palmowski},
  \tref{assum:doob_h_defined}. Let $\Mbb^\star = \projM(\Pi)$. Then, 
  \begin{align}    
    &\textstyle \Mbb^\star = \argmin_{\Mbb} \ensembleLigne{\KLLigne{\Pi}{\Mbb}}{\Mbb \in \calM} , \\
    & \textstyle \KLLigne{\Pi}{\Mbb^\star} = \tfrac{1}{2} \int_0^T \expeLigne{\Pi_{0,t}}{\normLigne{\sigma_t^2 \CPELigne{\Pi_{T|0,t}}{\nabla \log \Qbb_{T|t}(\bfX_T|\bfX_t)}{\bfX_0, \bfX_t} - v_t^\star}^2}/\sigma_t^2 \rmd t  .
  \end{align}
  In addition, we have that for any $t \in \ccint{0,T}$, $\Mbb^\star_t = \Pi_t$. In particular, $\Mbb^\star_T = \Pi_T$.
\end{proposition*}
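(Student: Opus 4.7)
The plan is to combine a Doob $h$-transform representation of the mixture-of-bridges measure $\Pi$ with Girsanov's theorem, and then optimize the resulting KL formula over the drift. The marginal preservation will follow from Gyöngy's mimicking principle.

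First I would identify the SDE satisfied by $\Pi$ conditional on $\bfX_0 = x_0$. By \tref{assum:good_function_palmowski} and \Cref{lemma:backward_kolmogorov}, $\varphi_{|0}$ is a good function in the sense of \cite{palmowski2002technique}: it satisfies $\mathcal{A}\varphi_{|0} = 0$, and conditioning on $\bfX_0 = x_0$ the mixture $\Pi_{|0}(\cdot | x_0) = \int \Pi_{T|0}(dx_T|x_0) \Qbb_{|0,T}(\cdot|x_0,x_T)$ is precisely the Doob $h$-transform of $\Qbb_{|0}(\cdot|x_0)$ with $h$-function $\varphi_{t|0}(x_t|x_0)$. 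Using the formula \eqref{eq:doob_h_generator} (together with \tref{assum:doob_h_defined}), this transform is realized by the SDE
\begin{equation}
d\bfX_t = \{f_t(\bfX_t) + \sigma_t^2 \nabla \log \varphi_{t|0}(\bfX_t|x_0)\} dt + \sigma_t d\bfB_t.
\end{equation}
Differentiating \eqref{eq:def_h_transform} under the integral and applying Bayes' rule (which yields the explicit density $\Pi_{T|0,t}(x_T|x_0,x_t) = \varphi_{T|0}(x_T|x_0)\Qbb_{T|t}(x_T|x_t)/\varphi_{t|0}(x_t|x_0)$) shows
\begin{equation}
\sigma_t^2 \nabla \log \varphi_{t|0}(x_t|x_0) = \sigma_t^2 \CPELigne{\Pi_{T|0,t}}{\nabla \log \Qbb_{T|t}(\bfX_T|\bfX_t)}{\bfX_0 = x_0, \bfX_t = x_t} =: v_t^\Pi(x_0, x_t).
\end{equation}

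Second, I would apply Girsanov (in its entropic form from \cite{leonard2012girsanov}) to any competitor $\Mbb \in \calM$ with drift $f_t + u_t(\bfX_t)$ and matching diffusion $\sigma_t$. Since $\Pi$ and $\Mbb$ share the same diffusion coefficient, this gives
\begin{equation}
\KLLigne{\Pi}{\Mbb} = \KLLigne{\Pi_0}{\Mbb_0} + \tfrac{1}{2}\int_0^T \sigma_t^{-2}\, \expeLigne{\Pi_{0,t}}{\normLigne{v_t^\Pi(\bfX_0, \bfX_t) - u_t(\bfX_t)}^2}\, dt.
\end{equation}
Both contributions are nonnegative. The first is minimized by taking $\Mbb_0 = \Pi_0$. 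For the integrand, since $u_t$ may depend only on $\bfX_t$, the pointwise $L^2$-minimizer is $u_t^\star(x_t) = \expeLigne{\Pi_{0|t}}{v_t^\Pi(\bfX_0, x_t)}$, and by the tower property of conditional expectation this collapses to $\sigma_t^2 \CPELigne{\Pi_{T|t}}{\nabla \log \Qbb_{T|t}(\bfX_T|\bfX_t)}{\bfX_t = x_t} = v_t^\star(x_t)$, matching \Cref{def:markovian_proj}. The claimed formula for $\KLLigne{\Pi}{\Mbb^\star}$ follows from the $L^2$-Pythagorean identity $\E[\|A-C\|^2] = \E[\|A-\E[A|B]\|^2] + \E[\|\E[A|B] - C\|^2]$ applied with $A = v_t^\Pi(\bfX_0,\bfX_t)$, $B = \bfX_t$, and $C = u_t(\bfX_t)$, taking $C = v_t^\star(\bfX_t)$.

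Finally, for the marginal identity $\Mbb^\star_t = \Pi_t$, I would invoke the Gyöngy mimicking theorem \cite{gyongy1986mimicking}: since the drift of $\Mbb^\star$ at $(t,x_t)$ is by construction the $\Pi$-conditional expectation of the semimartingale drift of $\Pi$ given $\bfX_t = x_t$, and since $\Mbb^\star_0 = \Pi_0$ and the diffusion coefficients agree, both $t \mapsto \Mbb^\star_t$ and $t \mapsto \Pi_t$ solve the same Fokker-Planck equation with identical initial condition; uniqueness under \tref{assum:nice_diffusion} yields $\Mbb_t^\star = \Pi_t$ for every $t$. The main obstacle is step one: rigorously identifying the drift of the non-Markov measure $\Pi$ via the Doob $h$-transform under weak integrability — this is exactly what \tref{assum:good_function_palmowski} and \tref{assum:doob_h_defined} are designed to enable, after which the remaining steps are essentially algebraic manipulations combined with standard $L^2$ optimization.
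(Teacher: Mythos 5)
Your proposal is correct and follows essentially the same route as the paper's proof: identify the conditional law $\Pi_{|0}$ as a Doob $h$-transform of $\Qbb_{|0}$ with drift $\sigma_t^2\nabla\log\varphi_{t|0}$, rewrite this drift via Bayes as $\sigma_t^2\CPELigne{\Pi_{T|0,t}}{\nabla\log\Qbb_{T|t}(\bfX_T|\bfX_t)}{\bfX_0,\bfX_t}$, apply the entropic Girsanov theorem of \cite{leonard2012girsanov} to obtain the KL as a drift-mismatch integral, minimize pointwise by conditional expectation, and conclude the marginal identity from uniqueness of Fokker--Planck solutions. The only cosmetic difference is that you keep the $\KLLigne{\Pi_0}{\Mbb_0}$ term explicit before setting $\Mbb_0=\Pi_0$, which the paper suppresses.
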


\begin{proof}
  First, we recall that $\Pi$ is given by $\Pi = \Qbb \varphi_{0,T}$ with
  $\varphi_{0,T} = \tfrac{\rmd \Pi_{0,T}}{\rmd \Qbb_{0,T}}$. In particular, we
  have $\Pi_{|0} = \Qbb_{|0} \varphi_{T|0}$, where
  $\varphi_{T|0} = \tfrac{\rmd \Pi_{T|0}}{\rmd \Qbb_{T|0}}$. Therefore, using
  \Cref{lemma:backward_kolmogorov}, \cite[Lemma 3.1, Lemma
  4.1]{palmowski2002technique}, the remark following \cite[Lemma
  4.1]{palmowski2002technique}, \tref{assum:nice_diffusion},
  \tref{assum:good_function_palmowski} and \tref{assum:doob_h_defined}, we get
  that $\Pi_{|0}$ is Markov and associated with the distribution of
  $(\bfX_t)_{t \in \ccint{0,T}}$ given for any $t \in \ccint{0,T}$ by 
  \begin{equation}
    \label{eq:non_markov}
    \textstyle \bfX_t = \int_0^t \{ f_s(\bfX_s) + \sigma_s^2 \nabla \log \varphi_{s|0}(\bfX_s|\bfX_0) \} \rmd s + \int_0^t \sigma_s \rmd \bfB_s ,
  \end{equation}
  where for any $t \in \ccint{0,T}$, $x_0, x_t \in \rset^d$ we recall that 
  \begin{equation}
    \label{eq:varphi_int_def}
    \textstyle \varphi_{t|0}(x_t|x_0) = \int_{\rset^d} \varphi_{T|0}(x_T|x_0) \rmd \Qbb_{T|t}(x_T|x_t) . 
  \end{equation}
  First, we have that for any $t \in \ccint{0,T}$, $x_t, x_0 \in \rset^d$
  \begin{equation}
    \textstyle \Qbb_{t|0}(x_t|x_0) \varphi_{t|0}(x_t|x_0) = \int_{\rset^d} \Qbb_{t|0,T}(x_t|x_T,x_0) \rmd \Pi_{T|0}(x_T|x_0) = \Pi_{t|0}(x_t|x_0) . 
  \end{equation}
  Therefore, we get that for any $t \in \ccint{0,T}$ and $x_t, x_0 \in \rset^d$
  \begin{equation}
    \label{eq:varphi_integrate}
    \varphi_{t|0}(x_t|x_0) = \tfrac{\rmd \Pi_{t|0}(x_t|x_0)}{\rmd \Qbb_{t|0}(x_t|x_0)} . 
  \end{equation}
    In addition, we have the following identity for any $t \in \ccint{0,T}$, $x_0, x_t, x_T \in \rset^d$
  \begin{equation}
    \Qbb_{T|0}(x_T|x_0) \Qbb_{t|0,T}(x_t|x_0,x_T) = \Qbb_{t|0}(x_t|x_0) \Qbb_{T|t}(x_T|x_t) .
  \end{equation}
  Using \eqref{eq:varphi_int_def}, this result and \eqref{eq:varphi_integrate},
  we get that for any $t \in \ccint{0,T}$ and $x_0,x_t \in \rset^d$
  \begin{align}
    \textstyle \nabla \log \varphi_{t|0}(x_t|x_0) &\textstyle= \int_{\rset^d} \tfrac{\Pi_{T|0}(x_T|x_0) \Qbb_{T|t}(x_T|x_t)}{\Qbb_{T|0}(x_T|x_0) \varphi_{t|0}(x_t|x_0)} \nabla \log \Qbb_{T|t}(x_T|x_t) \rmd x_T \\
                                                  &\textstyle= \int_{\rset^d} \tfrac{\Pi_{T|0}(x_T|x_0) \Qbb_{t|0,T}(x_t|x_0,x_T)}{\Qbb_{t|0}(x_t|x_0) \varphi_{t|0}(x_t|x_0)} \nabla \log \Qbb_{T|t}(x_T|x_t) \rmd x_T \\
    &\textstyle= \int_{\rset^d} \tfrac{\Pi_{t,T|0}(x_t,x_T|x_0)}{\Pi_{t|0}(x_t|x_0)} \nabla \log \Qbb_{T|t}(x_T|x_t) \rmd x_T \\    
                                                  &= \textstyle  \int_{\rset^d} \nabla \log \Qbb_{T|t}(x_T|x_t) \rmd \Pi_{T|t,0}(x_T|x_t,x_0) . 
  \end{align}
  Hence, combining this result and \eqref{eq:non_markov}, we get 
  \begin{equation}
    \label{eq:non_markov_2}
    \textstyle \bfX_t = \int_0^t \{ f_s(\bfX_s) + \sigma_s^2 \CPELigne{\Pi_{T|t, 0}}{\nabla \log \Qbb_{T|t}(\bfX_T|\bfX_t)}{\bfX_t, \bfX_0} \} \rmd s + \int_0^t \sigma_s \rmd \bfB_s .
  \end{equation}
  Let $\Mbb$ be Markov defined by
  $\rmd \bfX_t = \{f_t(\bfX_t) + v_t(\bfX_t) \} \rmd t + \sigma_t \rmd \bfB_t$,
  such that $\KLLigne{\Pi}{\Mbb} < +\infty$ with $\sigma,v$ locally
  Lipschitz. Using \cite[Theorem 2.3]{leonard2012girsanov}, we get that
  \begin{equation}
    \textstyle \KLLigne{\Pi}{\Mbb} = \tfrac{1}{2} \int_0^T \expeLigne{\Pi_{0,t}}{\| \sigma_t^2\CPELigne{\Pi_{T|t, 0}}{\nabla \log \Qbb_{T|t}(\bfX_T|\bfX_t)}{\bfX_t, \bfX_0} - v_t(\bfX_t) \|^2} / \sigma_t^2 \rmd t . 
  \end{equation}
  In addition, we have that for any $t \in \ccint{0,T}$,
  \begin{align}
    &\expeLigne{\Pi_{0,t}}{\| \sigma_t^2\CPELigne{\Pi_{T|t, 0}}{\nabla \log \Qbb_{T|t}(\bfX_T|\bfX_t)}{\bfX_t, \bfX_0} - v_t(\bfX_t)\|^2} \\
    & \geq \expeLigne{\Pi_{0,t}}{\| \sigma_t^2\CPELigne{\Pi_{T|t, 0}}{\nabla \log \Qbb_{T|t}(\bfX_T|\bfX_t)}{\bfX_t, \bfX_0} - v^\star_t(\bfX_t) \|^2} ,
  \end{align}
  where
$v^\star_t(x_t) = \sigma_t^2 \CPELigne{\Pi_{T|t}}{\nabla \log
    \Qbb_{T|t}(\bfX_T|\bfX_t)}{\bfX_t=x_t}$ which concludes the first part of
  the proof. For the second part of the proof, we show that for any
  $t \in \ccint{0,T}$, we have $\Mbb^\star_t = \Pi_t$. First, we have that $\Mbb^\star_t$
  and $\Pi_t$ satisfy the same Fokker-Planck equation, see \cite[Theorem
  2]{peluchettinon} for instance. We conclude by uniqueness of the solutions of
  the Fokker-Planck equation under \tref{assum:nice_diffusion} and
  \tref{assum:doob_h_defined}, see \cite{bogachev2021uniqueness} for instance.
\end{proof}

\subsection{Proof of \Cref{prop:reciprocal-projection}}
\begin{proof}
  By the additive property of KL divergence \citep{leonard2014some},
  $\KLLigne{\Pbb}{\Pi}=\KLLigne{\Pbb_{0,T}}{\Pi_{0,T}} +
  \expeLigne{\Pbb_{0,T}}{\KLLigne{\Pbb_{|0,T}}{\Pi_{|0,T}}}$. Restricting
  $\Pi_{|0,T}=\Qbridge$ directly gives that the KL minimizer is $\Pi^\star$
  with $\Pi^\star_{0,T}=\Pbb_{0,T}$, and thus
  $\Pi^\star=\P_{0,T} \Qbridge$ which we recall is short for 
  $\Pi^\star(\cdot) = \int_{\rset^d \times \rset^d} \Qbb_{|0,T}(\cdot|x_0, x_T) \P_{0,T}(\rmd x_0, \rmd x_T)$. 
\end{proof}

\subsection{Proof of \Cref{prop:schro-markov-recip}}

This result is a direct consequence of \cite[Theorem 2.12]{leonard2014survey},
which we recall here for completeness.

\begin{proposition*}
  Assume
  that $\Qbb \in \calM$, 
  that $\Qbb_0 = \Qbb_T = \bar{\Qbb}$, that for any
  $x_0, x_T \in \rset^d$,
  $\rmd \Qbb_{0,T} / \rmd (\bar{\Qbb} \otimes \bar{\Qbb})(x_0,x_T) \geq
  \exp[-A(x_0) - A(x_T)]$ with $A \geq 0$ measurable,
  $\int_{\rset^d \times \rset^d} \exp[-B(x_0) - B(x_T)] \rmd \Qbb(x_0,x_T) <
  +\infty$ with $B \geq 0$ measurable.
  Assume that there exists
  $t_0 \in \ooint{0,T}$ and $\msx$ measurable such that $\Qbb_{t_0}(\msx) > 0$
  and for all $x \in \msx$,
  $\Qbb_{0,T} \ll \Qbb_{0,T|t_0}(\cdot|\bfX_{t_0} =x)$.  In addition, assume
  that $\KLLigne{\pi_0}{\bar{\Qbb}} <+\infty$,
  $\KLLigne{\pi_T}{\bar{\Qbb}} <+\infty$,
  $\int_{\rset^d} (A+B)(x_0) \rmd \pi_0(x_0) < +\infty$,
  $\int_{\rset^d} (A+B)(x_T) \rmd \pi_T(x_T) < +\infty$.
  
  Then
  there exists a unique Schr\"odinger Bridge $\PSB$. In addition let $\Pbb$ be a
  Markov measure in the reciprocal class of $\Qbb$ such that
  $\Pbb_0 = \pi_0$ and $\Pbb_T = \pi_T$. Assume that
  $\KLLigne{\Pbb}{\Qbb} < +\infty$.  Then $\Pbb$ is the unique  Schr\"odinger Bridge $\PSB$.
\end{proposition*}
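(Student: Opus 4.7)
The plan is to derive \Cref{prop:schro-markov-recip} by applying the theorem of Léonard cited immediately below the statement, and I would sketch how one arrives at this result from first principles along the following lines.

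First, I would establish existence and uniqueness of $\PSB$ via the static formulation. Since $\PSB = \PiSB_{0,T} \Qbridge$ and $\KLLigne{\Pbb}{\Qbb} = \KLLigne{\Pbb_{0,T}}{\Qbb_{0,T}} + \expeLigne{\Pbb_{0,T}}{\KLLigne{\Pbb_{|0,T}}{\Qbb_{|0,T}}}$ by the disintegration identity for KL, the dynamic SB problem reduces to the static problem $\min \KLLigne{\Pi_{0,T}}{\Qbb_{0,T}}$ subject to the marginal constraints $\Pi_0=\pi_0, \Pi_T=\pi_T$. The constraint set is convex and $\KL$ is strictly convex where finite, so under the finiteness hypotheses ($\KLLigne{\pi_0}{\bar\Qbb}<\infty$, $\KLLigne{\pi_T}{\bar\Qbb}<\infty$, plus the tail bounds involving $A,B$), a unique minimizer $\PiSB_{0,T}$ exists. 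Define $\PSB = \PiSB_{0,T}\Qbridge$; by construction $\PSB \in \calR(\Qbb)$ and $\PSB_0=\pi_0, \PSB_T=\pi_T$.

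Second, I would establish the Schrödinger factorization: the minimizer $\PiSB_{0,T}$ satisfies $\rmd\PiSB_{0,T}/\rmd \Qbb_{0,T}(x_0,x_T) = f(x_0) g(x_T)$ for measurable $f,g$. This is the standard KKT/duality characterization for entropy minimization under moment-type constraints, and the absolute-continuity hypothesis $\rmd\Qbb_{0,T}/\rmd(\bar\Qbb\otimes\bar\Qbb) \geq \exp[-A(x_0)-A(x_T)]$ together with the irreducibility condition at time $t_0$ is what guarantees that the Schrödinger system admits a solution and that it is unique up to scalar multiplication. I would then argue that $\PSB$ is Markov: using the $h$-transform theory of \Cref{assum:good_function_palmowski}, the factorization of the Radon--Nikodym derivative into a function of $x_0$ times a function of $x_T$ yields, via the identity $\varphi_{t|0}(x_t|x_0) = \int \varphi_{T|0}(x_T|x_0)\rmd\Qbb_{T|t}(x_T|x_t)$, a drift depending only on $(t,x_t)$ (not on $x_0$), which is precisely the Markov property.

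Third, and this is the main obstacle, I would prove uniqueness among Markov measures in the reciprocal class with the prescribed marginals. Let $\Pbb$ be any such measure; since $\Pbb\in\calR(\Qbb)$, we have $\Pbb = \Pbb_{0,T}\Qbridge$, so it suffices to show $\Pbb_{0,T}=\PiSB_{0,T}$. The crucial step is the equivalence: a measure in the reciprocal class is Markov if and only if $\rmd\Pbb_{0,T}/\rmd\Qbb_{0,T}$ factorizes as $f(x_0)g(x_T)$. The forward direction uses that reciprocal + Markov forces, for any intermediate time $t$, the conditional $\Pbb_{0,T|t}$ to split as a product of $\Pbb_{0|t}$ and $\Pbb_{T|t}$, and combined with the bridge identity $\Qbb_{0,T|t}(x_0,x_T|x_t) = \Qbb_{0|t}(x_0|x_t)\Qbb_{T|t}(x_T|x_t)$ this yields the multiplicative structure of the density, provided the irreducibility condition at $t_0$ holds (to propagate the factorization globally). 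Finally, once $\rmd\Pbb_{0,T}/\rmd\Qbb_{0,T}=f(x_0)g(x_T)$ is established, the marginal constraints $\Pbb_0=\pi_0$, $\Pbb_T=\pi_T$ become a Schrödinger system whose solution is unique under the stated integrability assumptions, forcing $\Pbb_{0,T}=\PiSB_{0,T}$ and hence $\Pbb=\PSB$.

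The hardest part is the Markov-plus-reciprocal $\Rightarrow$ factorization implication; on a finite state space it is straightforward via matrix factorization, but in the continuous setting it requires the irreducibility hypothesis at $t_0$ to glue local factorizations into a global one. Rather than redo this carefully, I would simply invoke \cite[Theorem 2.12]{leonard2014survey} after verifying that each of its hypotheses is implied by the assumptions gathered in \Cref{assum:nice_diffusion}--\Cref{assum:doob_h_defined} together with the finiteness conditions on $\pi_0, \pi_T$.
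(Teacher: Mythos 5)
Your proposal is correct and ends up exactly where the paper does: the paper's proof is a one-line citation of \cite[Theorem 2.12(a),(b)]{leonard2014survey} together with \cite[Theorem 2.14]{leonard2014reciprocal}, the latter being precisely the ``Markov $+$ reciprocal $\Rightarrow$ $f(x_0)g(x_T)$ factorization'' step you single out as the hardest part. The only minor quibble is that there is nothing to verify against \Cref{assum:nice_diffusion}--\Cref{assum:doob_h_defined}: the proposition as stated in the appendix simply lists Léonard's hypotheses verbatim as its own assumptions.
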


\begin{proof}
  The first part of the proof is a consequence of \cite[Theorem
  2.12(a)]{leonard2014survey}. The second part is a consequence of \cite[Theorem
  2.12(b)]{leonard2014survey} and \cite[Theorem 2.14]{leonard2014reciprocal}.
\end{proof}

\subsection{Proof of \Cref{lemma:pythagorean_theorem}}

\begin{lemma*}
  Let $\Mbb \in \calM$ and $\Pi \in \calR(\Qbb)$ and assume \tref{assum:nice_diffusion},
  \tref{assum:good_function_palmowski}, \tref{assum:doob_h_defined}. If $\KLLigne{\Pi}{\Mbb}< +\infty$ and
  $\KLLigne{\projM(\Pi)}{\Mbb}< +\infty$ we have
  \begin{equation}
    \label{eq:pythagoran_one}
    \KLLigne{\Pi}{\Mbb} = \KLLigne{\Pi}{\projM(\Pi)} +  \KLLigne{\projM(\Pi)}{\Mbb} . 
  \end{equation}
  For any $\Pbb \in \pathmeas$, if $\KLLigne{\Pbb}{\Pi} < +\infty$, we have 
  \begin{equation}
    \label{eq:pythagoran_two}
    \KLLigne{\Pbb}{\Pi} = \KLLigne{\Pbb}{\projR{\Qbb}(\Pbb)} +  \KLLigne{\projR{\Qbb}(\Pbb)}{\Pi} . 
  \end{equation}  
\end{lemma*}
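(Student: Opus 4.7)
The plan is to establish the two Pythagorean identities by two largely independent arguments: the Markovian projection identity follows from the entropic Girsanov formula combined with a tower-property orthogonality argument, whereas the reciprocal projection identity reduces almost immediately to the chain rule (disintegration) of Kullback--Leibler divergence at the endpoints $\{0,T\}$.

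For the Markovian projection identity, I would reuse the representation of $\Pi$ as a (non-Markov) SDE established inside the proof of \Cref{prop:markovian-projection}: under $\Pi$, the drift takes the form $f_t(\bfX_t) + Z_t$ with $Z_t = \sigma_t^2 \CPELigne{\Pi_{T|0,t}}{\nabla \log \Qbb_{T|t}(\bfX_T|\bfX_t)}{\bfX_0,\bfX_t}$, while $\Mbb^\star = \projM(\Pi) \in \calM$ has drift $f_t + v_t^\star$ with $v_t^\star(\bfX_t) = \CPELigne{\Pi}{Z_t}{\bfX_t}$ by the tower property. Writing a generic $\Mbb \in \calM$ with drift $f_t + v_t$, an application of L\'eonard's entropic Girsanov formula yields
\begin{equation*}
\KLLigne{\Pi}{\Mbb} = \KLLigne{\Pi_0}{\Mbb_0} + \tfrac{1}{2}\int_0^T \expeLigne{\Pi}{\normLigne{Z_t - v_t(\bfX_t)}^2}/\sigma_t^2 \, \rmd t .
\end{equation*}
I would then expand $\normLigne{Z_t - v_t(\bfX_t)}^2 = \normLigne{Z_t - v_t^\star(\bfX_t)}^2 + \normLigne{v_t^\star(\bfX_t) - v_t(\bfX_t)}^2 + 2\langle Z_t - v_t^\star(\bfX_t),\, v_t^\star(\bfX_t) - v_t(\bfX_t)\rangle$. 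The cross term vanishes in $\Pi$-expectation because $\CPELigne{\Pi}{Z_t - v_t^\star(\bfX_t)}{\bfX_t} = 0$ while the second factor is $\bfX_t$-measurable. Finally, \Cref{prop:markovian-projection} gives $\Mbb^\star_t = \Pi_t$ for every $t$ (in particular $\Mbb^\star_0 = \Pi_0$), which identifies the first piece as $\KLLigne{\Pi}{\Mbb^\star}$ and the second as $\KLLigne{\Mbb^\star}{\Mbb}$, completing the identity.

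For the reciprocal projection identity, I would invoke the standard disintegration formula $\KLLigne{\mu}{\nu} = \KLLigne{\mu_{0,T}}{\nu_{0,T}} + \expeLigne{\mu_{0,T}}{\KLLigne{\mu_{|0,T}}{\nu_{|0,T}}}$, valid whenever the left-hand side is finite \citep{leonard2014some}. Since $\Pi \in \calR(\Qbb)$ forces $\Pi_{|0,T} = \Qbb_{|0,T}$, and by definition $\projR{\Qbb}(\Mbb)_{|0,T} = \Qbb_{|0,T}$ with $\projR{\Qbb}(\Mbb)_{0,T} = \Mbb_{0,T}$, the three relevant divergences decompose as
\begin{align*}
\KLLigne{\Mbb}{\Pi} &= \KLLigne{\Mbb_{0,T}}{\Pi_{0,T}} + \expeLigne{\Mbb_{0,T}}{\KLLigne{\Mbb_{|0,T}}{\Qbb_{|0,T}}}, \\
\KLLigne{\Mbb}{\projR{\Qbb}(\Mbb)} &= \expeLigne{\Mbb_{0,T}}{\KLLigne{\Mbb_{|0,T}}{\Qbb_{|0,T}}}, \\
\KLLigne{\projR{\Qbb}(\Mbb)}{\Pi} &= \KLLigne{\Mbb_{0,T}}{\Pi_{0,T}},
\end{align*}
and summing the last two reproduces the first.

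The principal technical obstacle lies in the Markovian case: because $\Pi$ is generically non-Markov, applying Girsanov requires the $h$-transform representation of $\Pi$ together with the measurability and integrability conditions behind it (essentially the assumptions used in the proof of \Cref{prop:markovian-projection}). The tower-property cancellation is the clean conceptual step; the subtler part is the bookkeeping to justify the application of L\'eonard's entropic Girsanov formula to the non-Markov reference $\Pi$ and to ensure that the intermediate quantity $\KLLigne{\Mbb^\star}{\Mbb}$ is itself finite once $\KLLigne{\Pi}{\Mbb} < \infty$.
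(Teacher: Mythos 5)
Your proposal is correct and follows essentially the same route as the paper: the first identity via L\'eonard's entropic Girsanov formula together with the tower-property orthogonality $\CPELigne{\Pi}{Z_t - v_t^\star(\bfX_t)}{\bfX_t}=0$ and the marginal preservation $\Mbb^\star_t=\Pi_t$ (the paper reaches the same cancellation by expanding all squares rather than stating the orthogonal decomposition, but it is the same argument), and the second identity via the endpoint disintegration of KL, which is equivalent to the Csisz\'ar three-point identity the paper invokes since $\rmd\projR{\Qbb}(\Pbb)/\rmd\Pi$ depends only on $(x_0,x_T)$.
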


\begin{proof}
  We start with the proof of \eqref{eq:pythagoran_one}. Similarly to
  \Cref{prop:markovian-projection}, where we have
  $\Mbb \in \calM$ to ensure that we can apply \cite[Theorem
  2.3]{leonard2012girsanov}, we get
  \begin{equation}
    \textstyle \KL(\Pi | \Mbb) = \tfrac{1}{2} \int_0^T \expeLigne{\Pi_{0,t}}{\normLigne{v_t(\bfX_t) - \sigma_t^2 \CPELigne{\Pi_{T|0,t}}{\nabla \log \Qbb_{T|t}(\bfX_T|\bfX_t)}{\bfX_0, \bfX_t}}^2} / \sigma_t^2 \rmd t .
  \end{equation}
  In addition, we have
  \begin{equation}
    \textstyle \KL(\projM(\Pi) | \Mbb) = \tfrac{1}{2} \int_0^T \expeLigne{\Pi_{t}}{\normLigne{v_t(\bfX_t) - \sigma_t^2 \CPELigne{\Pi_{T|t}}{\nabla \log \Qbb_{T|t}(\bfX_T|\bfX_t)}{\bfX_t}}^2} / \sigma_t^2 \rmd t .
  \end{equation}
  Finally, using \Cref{prop:markovian-projection}, we have that
  \begin{align}
    & \textstyle \KLLigne{\Pi}{\projM(\Pi)} \\
    &  \textstyle = \tfrac{1}{2} \int_0^T \expeLigne{\Pi_{0,t}}{\normLigne{ \sigma_t^2 \CPELigne{\Pi_{T|0,t}}{\nabla \log \Qbb(\bfX_T|\bfX_t)}{\bfX_0, \bfX_t} - \sigma_t^2 \CPELigne{\Pi_{T|t}}{\nabla \log \Qbb(\bfX_T|\bfX_t)}{\bfX_t} }^2} / \sigma_t^2 \rmd t  \\
    &  \textstyle =  \tfrac{1}{2} \int_0^T (\expeLigne{\Pi_{0,t}}{\normLigne{ \CPELigne{\Pi_{T|0,t}}{\nabla \log \Qbb(\bfX_T|\bfX_t)}{\bfX_0, \bfX_t}}^2} - \expeLigne{\Pi_{t}}{\normLigne{ \CPELigne{\Pi_{T|t}}{\nabla \log \Qbb(\bfX_T|\bfX_t)}{\bfX_t} }^2}) \sigma_t^2 \rmd t  .    
  \end{align}
  Using this result, we have
  \begin{align}
    &2 \textstyle \KLLigne{\Pi}{\projM(\Pi)} + 2\KLLigne{\projM(\Pi)}{\Mbb}  \\
    & = \textstyle \int_0^T (\expeLigne{\Pi_{0,t}}{\normLigne{ \CPELigne{\Pi_{T|0,t}}{\nabla \log \Qbb(\bfX_T|\bfX_t)}{\bfX_0, \bfX_t}}^2} - \expeLigne{\Pi_{t}}{\normLigne{\CPELigne{\Pi_{T|t}}{\nabla \log \Qbb(\bfX_T|\bfX_t)}{\bfX_t} }^2}) \sigma_t^2 \rmd t \\
    & \textstyle + \int_0^T \expeLigne{\Pi_{t}}{\normLigne{v_t(\bfX_t) / \sigma_t^2 - \CPELigne{\Pi_{T|t}}{\nabla \log \Qbb_{T|t}(\bfX_T|\bfX_t)}{\bfX_t}}^2} \sigma_t^2 \rmd t \\
    & = \textstyle \int_0^T (\expeLigne{\Pi_{0,t}}{\normLigne{ \CPELigne{\Pi_{T|0,t}}{\nabla \log \Qbb(\bfX_T|\bfX_t)}{\bfX_0, \bfX_t}}^2} - \expeLigne{\Pi_{t}}{\normLigne{\CPELigne{\Pi_{T|t}}{\nabla \log \Qbb(\bfX_T|\bfX_t)}{\bfX_t} }^2}) \sigma_t^2 \rmd t \\
    & \textstyle + \int_0^T (\expeLigne{\Pi_{t}}{\normLigne{v_t(\bfX_t) / \sigma_t^2}^2} + \expeLigne{\Pi_{t}}{\normLigne{\CPELigne{\Pi_{T|t}}{\nabla \log \Qbb_{T|t}(\bfX_T|\bfX_t)}{\bfX_t}}^2}) \sigma_t^2 \rmd t \\
    & \textstyle - 2 \int_0^T \expeLigne{\Pi_{t}}{\langle v_t(\bfX_t) / \sigma_t^2, \CPELigne{\Pi_{T|t}}{\nabla \log \Qbb_{T|t}(\bfX_T|\bfX_t)}{\bfX_t} \rangle} \sigma_t^2 \rmd t  \\
 & = \textstyle \int_0^T \expeLigne{\Pi_{0,t}}{\normLigne{ \CPELigne{\Pi_{T|0,t}}{\nabla \log \Qbb(\bfX_T|\bfX_t)}{\bfX_0, \bfX_t}}^2} \sigma_t^2 \rmd t  + \int_0^T \expeLigne{\Pi_{t}}{\normLigne{v_t(\bfX_t) / \sigma_t^2}^2} \sigma_t^2 \rmd t \\
    & \textstyle - 2 \int_0^T \expeLigne{\Pi_{t}}{\langle v_t(\bfX_t) / \sigma_t^2, \CPELigne{\Pi_{T|t}}{\nabla \log \Qbb_{T|t}(\bfX_T|\bfX_t)}{\bfX_t} \rangle} \sigma_t^2 \rmd t  \\
 & = \textstyle \int_0^T \expeLigne{\Pi_{0,t}}{\normLigne{ \CPELigne{\Pi_{T|0,t}}{\nabla \log \Qbb(\bfX_T|\bfX_t)}{\bfX_0, \bfX_t}}^2} \sigma_t^2 \rmd t  + \int_0^T \expeLigne{\Pi_{t}}{\normLigne{v_t(\bfX_t) / \sigma_t^2}^2} \sigma_t^2 \rmd t \\
& \textstyle - 2 \int_0^T \expeLigne{\Pi_{0,t}}{\langle v_t(\bfX_t) / \sigma_t^2, \CPELigne{\Pi_{T|0,t}}{\nabla \log \Qbb_{T|t}(\bfX_T|\bfX_t)}{\bfX_0, \bfX_t} \rangle} \sigma_t^2 \rmd t = 2 \KLLigne{\Pi}{\Mbb} ,       
  \end{align}
  which concludes the first part of the proof.

  For \eqref{eq:pythagoran_two}, define $\Pistar = \projR{\Qbb}(\Pbb) = \Pbb_{0,T} \Qbb_{|0,T}$. Using
  \cite[Equation 2.6]{csiszar1975divergence}, we have
  \begin{align}
    \KLLigne{\Pbb}{\Pi} &= \textstyle \KLLigne{\Pbb}{\Pistar} + \int_{\calC} \log(\tfrac{\rmd \Pistar}{\rmd \Pi}(\omega)) \rmd \Pbb(\omega) \\
    &\textstyle = \KLLigne{\Pbb}{\Pistar} + \int_{\rset^d \times \rset^d} \log(\tfrac{\rmd \Pistar_{0,T}}{\rmd \Pi_{0,T}}(x_0,x_1)) \rmd \Pbb_{0,T}(x_0,x_1) \\
    &\textstyle = \KLLigne{\Pbb}{\Pistar} + \int_{\rset^d \times \rset^d} \log(\tfrac{\rmd \Pistar_{0,T}}{\rmd \Pi_{0,T}}(x_0,x_1)) \rmd \Pistar_{0,T}(x_0,x_1)  = \KLLigne{\Pbb}{\Pistar} + \KLLigne{\Pistar}{\Pi} , 
  \end{align}
  which concludes the proof.
\end{proof}

\subsection{Proof of \Cref{prop:convergence_marginals}}

\begin{proposition*}
  Assume that the conditions of \Cref{prop:schro-markov-recip} and
  \Cref{lemma:pythagorean_theorem} apply for $\Pbb^n$ for every
  $n \in \nset$ and for the \schro Bridge $\PSB$,
  we have
  $\KLLigne{\Pbb^{n+1}}{\PSB} \leq \KLLigne{\Pbb^{n}}{\PSB} < \infty$, and $\lim_{n \to +\infty} \KLLigne{\Pbb^n}{\Pbb^{n+1}} = 0$.
\end{proposition*}

\begin{proof}
  We follow the technique of
  \cite{ruschendorf1995convergence} but for the \emph{reverse} Kullback--Leibler
  divergence. Applying \Cref{lemma:pythagorean_theorem}, we get for any $N \in \nset$
  \begin{equation}
    \textstyle \KLLigne{\Pbb^0}{\PSB} = \KLLigne{\Pbb^0}{\Pbb^1} + \KLLigne{\Pbb^1}{\PSB} = \sum_{i=0}^{N} \KLLigne{\Pbb^i}{\Pbb^{i+1}} + \KLLigne{\Pbb^{N+1}}{\PSB} ,
    \label{eq:pythagorean_sum}
  \end{equation}
  which concludes the proof.
\end{proof}

\subsection{Proof of \Cref{thm:convergence_imf}}
\label{subsec:proof_thm_convergence_imf}
\begin{theorem*}
     Assume that the conditions of \Cref{prop:schro-markov-recip} and
  \Cref{lemma:pythagorean_theorem} apply for $\Pbb^n$ for every
  $n \in \nset$ and for the \schro Bridge $\PSB$,
  the IMF sequence $(\Pbb^n)_{n \in \nset}$ admits a unique fixed point $\Pstar=\PSB$, and $\lim_{n \to +\infty} \KLLigne{\Pbb^n}{\Pstar} = 0$. 
\end{theorem*}

\begin{proof}
 By \Cref{prop:convergence_marginals}, $\KLLigne{\Pbb^n}{\PSB} \leq \KLLigne{\Pbb^0}{\PSB} < \infty$ for all $n \in \nset$. Now, using the coercivity of $\KLLigne{\cdot}{\PSB}$
 (this is where our analysis differs from the one of the IPF), we have that the IMF sequence $(\Pbb^n)_{n \in \nset}$ and its subsequences $(\Mbb^{n+1})_{n \in \nset}$ and $(\Pi^n)_{n \in \nset}$ are subsets of $\{\P \in \pathmeas : \KLLigne{\P}{\PSB} \leq \KLLigne{\Pbb^0}{\PSB} \}$ which is (relatively) compact. 
  Thus, $(\Mbb^{n+1})_{n \in \nset}$ contains a convergent subsequence $\Mbb^{n_j} \to \Mbb^\star$ as $j \to \infty$, and $(\Pi^{n_j})_{j \in \nset}$ contains a further convergent subsequence $\Pi^{n_{j_k}} \to \Pi^\star$ weakly as $k \to \infty$.
 As the Markov and the reciprocal classes are closed under weak convergence, $\Mbb^\star \in \calM$ and $\Pi^\star \in \calR(\Qbb)$. 
 Now, by the lower semi-continuity of KL divergence in the weak topology \citep[Theorem 19]{vanErven2014renyi}, $0 \leq \KLLigne{\Mbb^\star}{\Pi^\star} \leq \liminf_{k\to\infty} \KLLigne{\Mbb^{n_{j_k}}}{\Pi^{n_{j_k}}} = 0$. Hence, $\Mbb^\star = \Pi^\star$ which we denote as $\Pstar \in \calM \cap \calR(\Qbb)$. Also, $\Pstar$ satisfies $\Pstar_0=\pi_0$ and $\Pstar_T=\pi_T$ as is satisfied by all $\Pbb^n$. By \Cref{prop:schro-markov-recip}, $\Pstar$ is the unique Schrödinger Bridge $\PSB$. Finally, $\lim_{n \to +\infty} \KLLigne{\Pbb^n}{\Pstar} = 0$ follows using $\lim_{k\to\infty} \KLLigne{\Mbb^{n_{j_k}}}{\Pstar} = 0$ and the monotonicity of $\KLLigne{\Pbb^{n}}{\PSB}$ by \Cref{prop:convergence_marginals}. 
\end{proof}

\subsection{Proof of \Cref{prop:forward_backward_markov_proj}}
\begin{proof}
The proof is similar to the one of \Cref{prop:markovian-projection}.
\end{proof}

In particular, the time-reversal of $\Qbb_{|0,T}(\cdot|x_0,x_T)$ is associated with
\begin{equation}
  \rmd \bfY_t^{0,T} = \{-f_{T-t}(\bfY_t^{0,T}) + \sigma_{T-t}^2 \nabla \log \Qbb_{T-t|0}(\bfY_t^{0,T}|x_0)\} \rmd t + \sigma_{T-t} \rmd \bfB_t, \qquad \bfY_0^{0,T} = x_T.
  \label{eq:bridge_backward}
\end{equation}
One can view both \eqref{eq:forward_markov_proj} \eqref{eq:backward_markov_proj} as SDEs with drift defined as the conditional expectation of the drift of \eqref{eq:bridge_forward} \eqref{eq:bridge_backward} under $\Pi_{0,T|t}$ in the forward and backward directions respectively. 

\subsection{Proof of \Cref{prop:dsb_dsbm}}
\begin{proof}
We proceed by induction. 
Firstly, for $\Pi^0_{0,T}=\Q_{0,T}$, $\Pi^0=\tilde{\Pbb}^0=\Qbb$ at initialization. 
We can also define $\Mbb^0=\Qbb$, such that $\Mbb^0=\tilde{\Pbb}^0$ and $\Pi^n=\projR{\Qbb}(\Mbb^n)$ for all $n \in \nset$. 
By \cite[Section 3.5]{debortoli2021diffusion}, the optimal DSB sequence $\tilde{\Pbb}^n$ is Markov and $\tilde{\Pbb}^n = \tilde{\Pbb}_{0,T}^n \Qbridge$, where $\tilde{\Pbb}_{0,T}^n$ is the IPF sequence of the static SB problem. In other words, $\tilde{\Pbb}^n \in \calM \cap \calR(\Qbb) $. 

Suppose $\Mbb^{2n+1}=\tilde{\Pbb}^{2n+1}$. By definition, $\Mbb^{2n+2}_0=\tilde{\Pbb}^{2n+2}_0=\pi_0$, i.e. both forward processes are initialized at $\pi_0$. In DSB, by \cite{debortoli2021diffusion}, $\tilde{\Pbb}^{2n+2}$ is defined as the time-reversal of $\tilde{\Pbb}^{2n+1}$, such that $\tilde{\Pbb}^{2n+2}_{|0}=\tilde{\Pbb}^{2n+1}_{|0}$. Hence, $\tilde{\Pbb}^{2n+2} = \pi_0 \tilde{\Pbb}^{2n+1}_{|0}$. 

In DSBM, we first perform reciprocal projection $\Pi^{2n+1} = \projR{\Qbb}{(\Mbb^{2n+1})} = \Mbb_{0,T}^{2n+1} \Qbridge$. Since $\Mbb^{2n+1}=\tilde{\Pbb}^{2n+1} \in \calR(\Qbb)$, however, we have that $\Pi^{2n+1}=\Mbb^{2n+1}$. Furthermore, since $\Mbb^{2n+1}=\tilde{\Pbb}^{2n+1} \in \calM$, $\projM{(\Pi^{2n+1})}=\projM{(\Mbb^{2n+1})}=\Mbb^{2n+1}$. Thus, $\Mbb^{2n+2}$ given by \eqref{eq:approximate_markovian_proj_forward} is such that $\Mbb^{2n+2}_0=\pi_0$ and $\Mbb_{|0}^{2n+2}=\projM{(\Pi^{2n+1})}_{|0}=\Mbb_{|0}^{2n+1}$. We conclude that $\Mbb^{2n+2}= \pi_0 \Mbb_{|0}^{2n+1} = \tilde{\Pbb}^{2n+2}$. Similar arguments holds for the the reverse projection \eqref{eq:approximate_markovian_proj_backward}. Therefore, $\Mbb^n = \tilde{\Pbb}^n$ for all $n \in \nset$. 
\end{proof}

\subsection{The set of Markov measures is not convex}

The result of \Cref{lemma:pythagorean_theorem} should be compared with the information geometry result of
\cite[Theorem 2.2]{csiszar1975divergence}, which states that if $\calC$ is a convex set and $\Pbb \in \calC$, then under mild
conditions,
$\KLLigne{\Pbb}{\Qbb} = \KLLigne{\Pbb}{\proj_{\calC}(\Qbb)} +
\KLLigne{\proj_{\calC}(\Qbb)}{\Qbb}$, where
$\proj_{\calC}(\Qbb) = \argmin_{\Pbb} \ensembleLigne{\KLLigne{\Pbb}{\Qbb}}{\Pbb \in
  \calC}$ is the projection of $\Qbb$ on $\calC$. Note that, contrary to \Cref{lemma:pythagorean_theorem},
\cite[Theorem 2.2]{csiszar1975divergence} is given for the \emph{forward}
Kullback--Leibler divergence whereas \Cref{lemma:pythagorean_theorem} is given for the \emph{reverse} KL divergence. In addition, \cite[Theorem
2.2]{csiszar1975divergence} requires the projection set $\calC$ to be convex
which is not satisfied for the space of Markov measures $\calM$.
We give a simple
counter-example proving that the set of Markov measures is not convex.  

Let $p_1(x_0,x_1,x_2) = p_1(x_0)p_1(x_1|x_0)p_1(x_2|x_1)$ and
$p_2(x_0,x_1,x_2) = p_2(x_0)p_2(x_1|x_0)p_2(x_2|x_1)$ on $\{0,1\}^3$ such that
\begin{equation}
  p_1(x_0 = 1) = \alpha_0, \qquad p_1(x_1=1|x_0) = \alpha_1 , \qquad p_1(x_2=1|x_1) = \alpha_2 . 
\end{equation}
Additionally, we set 
\begin{equation}
  p_2(x_0 = 1) = \beta_0, \qquad p_2(x_1=1|x_0) = \beta_1 , \qquad p_2(x_2=1|x_1) = \beta_2 . 
\end{equation}
Finally, we set $q = (1/2)p_1 + (1/2)p_2$.  Consider
$q(x_2=1|x_1=1,x_0=1) = q(x_2=1,x_1=1,x_0=1)/q(x_1=1,x_0=1)$ and
$q(x_2=1|x_1=1) = q(x_2=1,x_1=1)/q(x_1=1)$. Let
\begin{align}
  \Delta &=  4[q(x_2=1,x_1=1,x_0=1)q(x_1=1) - q(x_2=1,x_1=1)q(x_1=1,x_0=1)] \\
         &= (\alpha_0\alpha_1\alpha_2 + \beta_0 \beta_1 \beta_2)(\alpha_1 + \beta_1) - (\alpha_1\alpha_2 + \beta_1\beta_2)(\alpha_0\alpha_1 + \beta_0\beta_1) \\
         &= \alpha_0 \alpha_1 \beta_1 \alpha_2 +\beta_0 \alpha_1 \beta_1 \beta_2 - \beta_0 \alpha_1 \beta_1 \alpha_2 - \alpha_0 \alpha_1 \beta_1 \beta_2 \\
         &=\alpha_1\beta_1 \beta_2(\beta_0 - \alpha_0) + \alpha_1\beta_1\alpha_2 (\alpha_0 - \beta_0) \\
  &=\alpha_1\beta_1(\beta_0 - \alpha_0)(\beta_2 - \alpha_2) .
\end{align}
$q$ is Markov if and only if $\Delta =0$. Therefore $q$ is not Markov as soon as
$\alpha_0 \neq \beta_0$ and $\alpha_2 \neq \beta_2$.

\subsection{Impact of the resolution on the entropic regularization}
\label{sec:effect-resolution}

In this section, we show how the resolution of the input affects the entropic
regularization. First, we consider the Schr\"odinger bridge problem between
$\pi_0$ and $\pi_1$ ($T=1$) with $\Qbb$ associated with
$(\sigma \bfB_t)_{t \in \ccint{0,1}}$. In that case the \emph{static}
Schr\"odinger Bridge is given by $\Pbb_{0,1}^\star$ such that
\begin{equation}
  \textstyle
 \Pbb_{0,1}^\star = \argmin \ensembleLigne{\int_{\rset^d \times \rset^d}\normLigne{x_0 - x_1}^2 \rmd \Pbb(x_0,x_1) - \vareps \KLLigne{\Pbb}{\pi_0\otimes \pi_1}}{\Pbb_0 = \pi_0, \ \Pbb_1 = \pi_1}
\end{equation}
where $\vareps = 2\sigma^2$. 
We now describe the solution of a higher dimensional problem given by the
\emph{upsampling} of the marginals. Let $f \in \nset$ with $f \geq 1$ and
$\mathrm{down}: \ \rset^{fd} \to \rset^d$ such that for any
$k \in \{1, \dots, d\}$, and $x \in \rset^{f d}$, $\mathrm{down}(x) = \bar{x}$
with $\bar{x} \in \rset^d$ and $\bar{x}_k = x_{kf}$. We also denote
$\mathrm{up}: \ \rset^{d} \to \rset^{fd}$ such that for any
$k \in \{1, \dots, d\}$, and $\bar{x} \in \rset^{d}$, $\mathrm{up}(\bar{x}) = x$
with $x \in \rset^{fd}$ and $\bar{x}_k = x_{(k-1)f+j}$, for
$j \in \{1, \dots, d\}$. Note that $\mathrm{down} \circ \mathrm{up}= \Id$.

We denote $\pi_0^{\mathrm{up}} = \mathrm{up}_{\#} \pi_0$ and
$\pi_1^{\mathrm{up}} = \mathrm{up}_{\#} \pi_1$. We extend $\mathrm{up}$ and
$\mathrm{down}$ to $\rset^d \times \rset^d$ and $\rset^{fd} \times \rset^{fd}$
by letting for any $\bar{x}, \bar{y} \in \rset^d$ and $x, y \in \rset^{fd}$
\begin{equation}
  \mathrm{up}(\bar{x},\bar{y}) = (\mathrm{up}(\bar{x}), \mathrm{up}(\bar{y})) , \qquad \mathrm{down}(x,y) = (\mathrm{down}(x), \mathrm{down}(y)) . 
\end{equation}

First, note that since $\mathrm{up} \circ \mathrm{down} = \Id$ on
$\mathrm{up}(\rset^d)$ we get that for any $\Pbb$ supported on
$\mathrm{up}(\rset^d) \times \mathrm{up}(\rset^d)$, using \cite[Theorem
4.1]{kullback1997information}
\begin{equation}
  \KLLigne{\Pbb}{\pi_0^{\mathrm{up}} \otimes \pi_1^{\mathrm{up}}} = \KLLigne{\mathrm{down}_{\#} \Pbb}{\pi_0 \otimes \pi_1} . 
\end{equation}
Second let $\Pbb$ such that $\Pbb_0 = \pi_0^{\mathrm{up}}$ and
$\Pbb_1 = \pi_1^{\mathrm{up}}$. Then, $\Pbb$ is supported on
$\mathrm{up}(\rset^d) \times \mathrm{up}(\rset^d)$. Therefore, for any $\Pbb$
such that $\Pbb_0 = \pi_0^{\mathrm{up}}$ and $\Pbb_1 = \pi_1^{\mathrm{up}}$
\begin{equation}
  \KLLigne{\Pbb}{\pi_0^{\mathrm{up}} \otimes \pi_1^{\mathrm{up}}} = \KLLigne{\mathrm{down}_{\#} \Pbb}{\pi_0 \otimes \pi_1} . 
\end{equation}
Finally, for any $\Pbb$ such that $\Pbb_0 = \pi_0^{\mathrm{up}}$ and
$\Pbb_1 = \pi_1^{\mathrm{up}}$ we have
\begin{equation}
  \textstyle \int_{\rset^{fd} \times \rset^{fd}} \normLigne{x_0 - x_1}^2 \rmd \Pbb(x_0, x_1) = f^2 \int_{\rset^d \times \rset^d} \normLigne{\bar{x}_0 - \bar{x}_1}^2 \rmd (\mathrm{down}_{\#}\Pbb)(\bar{x}_0, \bar{x}_1) . 
\end{equation}
Therefore, for any $\Pbb$ such that $\Pbb_0 = \pi_0^{\mathrm{up}}$ and
$\Pbb_1 = \pi_1^{\mathrm{up}}$ we have
\begin{align}
  &\textstyle \int_{\rset^{fd} \times \rset^{fd}}\normLigne{x_0 - x_1}^2 \rmd \Pbb(x_0,x_1) - \vareps \KLLigne{\Pbb}{\pi_0^{\mathrm{up}}\otimes \pi_1^{\mathrm{up}}} \\
  & \qquad \qquad \textstyle = f^2 (\int_{\rset^{fd} \times \rset^{fd}}\normLigne{\bar{x}_0 - \bar{x}_1}^2 \rmd (\mathrm{down}_{\#} \Pbb)(\bar{x}_0,\bar{x}_1) - (\vareps/f^2) \KLLigne{\mathrm{down}_{\#} \Pbb}{\pi_0\otimes \pi_1}) . 
\end{align}

Therefore, we have the following result.

\begin{proposition}
  Let $\vareps > 0$.  $\Pbb^\star$ is the solution of the static Schr\"odinger
  bridge with marginals $\pi_0^{\mathrm{up}}$, $\pi_1^{\mathrm{up}}$ and regularization $\vareps$ if and
  only if $\mathrm{down}_{\#} \Pbb^\star$ is the solution of the static Schr\"odinger
  bridge with marginals $\pi_0$, $\pi_1$ and regularization $\vareps/f^2$.
\end{proposition}

This means in particular that the Schr\"odinger Bridge is not invariant via
upsampling. In \Cref{sec:celeba-transf-exper}, we confirm these results visually
upon noting that for the same $\sigma$, running DSBM at resolution
$64 \times 64$ and $128 \times 128$ gives different results, even after
downsampling of the $128 \times 128$ results.

\section{Convergence of IMF in the Gaussian setting}
\label{sec:imf_gaussian_appendix}
In this section, we study the IMF in an one-dimensional Gaussian case. We consider $T=1$,
$\Pi_0 = \Pi_1 = \mathrm{N}(0, (1/2\beta^2))$ and $\Qbb$ associated with
$(\sigma \bfB_t)_{t \in \ccint{0,1}}$ where $\sigma >0$. In what follows, we let
\begin{equation}
  \label{eq:matrix_sigma_0}
  \Sigma^0 = (1/2\beta^2) \left(
      \begin{matrix}
        1 & c^2 \\
        c^2 & 1
      \end{matrix}
    \right) ,
  \end{equation}
  where $c \in \ccint{0,1}$.  We also denote
  $\bar{\sigma}^2 = 2 \sigma^2 \beta^2 $.  We start with the following result
  which gives an explicit expression of some marginals of the reciprocal
  projection. 

  \begin{lemma}
    \label{lemma:joint_gaussian}
    Let $\Pi_{0,1}^0 = \mathrm{N}(0, \Sigma^0)$ with $\Sigma^0$ given by
    \eqref{eq:matrix_sigma_0}. Let
    $\Pi^0 = \Pi_{0,1}^0 \Qbb_{|0,1}$. For any $t \in \ccint{0,1}$, we
    have that $\Pi_{0,1,t}^0 = \mathrm{N}(0, \Sigma)$ with
    \begin{equation}
      \Sigma = (1/2\beta^2) \left( \begin{matrix}
        1 & c^2 & a_k^t\\
                        c^2 & 1 & a_k^{1-t} \\
                        a_k^t & a_k^{1-t} & b_k^t 
      \end{matrix} \right) ,
  \end{equation}
  where we have
  \begin{equation}
    a_k^t = 1-t + tc^2 , \qquad b_k^t = 1 + t(1-t)(2(c^2-1)+\bar{\sigma}^2) .
  \end{equation}
\end{lemma}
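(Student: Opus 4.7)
\textbf{Proof plan for \Cref{lemma:joint_gaussian}.}

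The strategy is to use the explicit Brownian bridge representation and then reduce the claim to a covariance computation for a Gaussian vector. Since $\Qbb$ is associated with $(\sigma \bfB_t)_{t\in\ccint{0,1}}$, by \eqref{eq:interpolation_stochastic} with $T=1$ the bridge $\Qbb_{|0,1}(\cdot | x_0, x_1)$ yields
\begin{equation}
\bfX_t^{0,1} \;=\; (1-t)\, x_0 \;+\; t\, x_1 \;+\; \sigma\,(\bfB_t - t \bfB_1),
\end{equation}
where $\bfB_t - t\bfB_1 \sim \mathrm{N}(0, t(1-t))$ and is independent of $(\bfX_0, \bfX_1)$. Hence, under $\Pi^0 = \Pi_{0,1}^0 \Qbb_{|0,1}$, one can write
\begin{equation}
\bfX_t \;=\; (1-t)\, \bfX_0 \;+\; t\, \bfX_1 \;+\; \sigma\, \bfZ_t, \qquad \bfZ_t \sim \mathrm{N}(0, t(1-t)),
\end{equation}
with $\bfZ_t$ independent of $(\bfX_0,\bfX_1) \sim \mathrm{N}(0, \Sigma^0)$.

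From this representation, the triple $(\bfX_0, \bfX_1, \bfX_t)$ is an affine function of the jointly Gaussian vector $(\bfX_0, \bfX_1, \bfZ_t)$ with zero mean, so $(\bfX_0, \bfX_1, \bfX_t)$ is itself centered Gaussian and it only remains to compute its covariance matrix entry by entry. The top-left $2\times 2$ block is $\Sigma^0$ by assumption. For the cross terms, using independence of $\bfZ_t$ from $(\bfX_0,\bfX_1)$, I would compute
\begin{equation}
\mathrm{Cov}(\bfX_0, \bfX_t) \;=\; (1-t)\,\mathrm{Var}(\bfX_0) + t\,\mathrm{Cov}(\bfX_0, \bfX_1) \;=\; \tfrac{1}{2\beta^2}\bigl((1-t) + t c^2\bigr) \;=\; \tfrac{a_k^t}{2\beta^2},
\end{equation}
and by the symmetric computation $\mathrm{Cov}(\bfX_1, \bfX_t) = \tfrac{1}{2\beta^2}(t + (1-t)c^2) = \tfrac{a_k^{1-t}}{2\beta^2}$. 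For the marginal variance,
\begin{equation}
\mathrm{Var}(\bfX_t) \;=\; (1-t)^2\,\mathrm{Var}(\bfX_0) + t^2\,\mathrm{Var}(\bfX_1) + 2t(1-t)\,\mathrm{Cov}(\bfX_0,\bfX_1) + \sigma^2 t(1-t),
\end{equation}
which after plugging in the entries of $\Sigma^0$, using $(1-t)^2 + t^2 = 1 - 2t(1-t)$, and factoring $1/(2\beta^2)$ with $\bar{\sigma}^2 = 2 \sigma^2 \beta^2$, rearranges to $\tfrac{1}{2\beta^2}\bigl[1 + t(1-t)(2(c^2-1) + \bar{\sigma}^2)\bigr] = \tfrac{b_k^t}{2\beta^2}$.

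There is no real obstacle: once the Brownian bridge representation is in place, the proof is purely a variance/covariance bookkeeping exercise. The only point worth double-checking is that $\bfB_t - t\bfB_1$ is independent of $(\bfX_0, \bfX_1)$, which holds because under $\Pi^0$ the bridge noise is drawn independently of the endpoints by construction of the mixture-of-bridges measure.
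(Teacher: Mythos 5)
Your proposal is correct and follows essentially the same route as the paper's proof: both use the explicit Brownian bridge representation $\bfX_t^{0,1} = (1-t)\bfX_0 + t\bfX_1 + \sigma(t(1-t))^{1/2}\bfZ$ with $\bfZ$ independent of the endpoints, and then compute the covariance entries term by term. The variance bookkeeping matches the paper's computation exactly.
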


\begin{proof}
  Let $t \in \ccint{0,1}$ and $\bfX_t^{0,1} \sim \Pi^0_{t|0,1} = \Qbb_{t|0,1}$.  Using
  \cite[Theorem 3.3]{barczy2013representations}, we get that
  \begin{equation}
    \bfX_t^{0,1} = (1-t)\bfX_0 + t \bfX_1 + \sigma (t (1-t))^{1/2} \bfZ , 
  \end{equation}
  with $\bfZ \sim \mathrm{N}(0, \Id)$ independent from $(\bfX_0, \bfX_1)$. Hence, we get that $\expeLigne{}{\bfX_t^{0,1}} = 0$, and 
  \begin{equation}
    \mathrm{Cov}(\bfX_0, \bfX_t^{0,1}) = \expeLigne{}{\bfX_t^{0,1} \bfX_0} = (1-t)/(2 \beta^2) + t c^2 / (2\beta^2) = a_k^t/ (2\beta^2). 
  \end{equation}
  Similarly, we get that $\mathrm{Cov}(\bfX_1, \bfX_t^{0,1}) = a_k^{1-t} / (2\beta^2)$. Finally, we get that
  \begin{align}
    \mathrm{Var}(\bfX_t^{0,1}) &= \expeLigne{}{((1-t)\bfX_0 + t \bfX_1)^2} + \sigma^2 t(1-t) \\
                         &= (1-t)^2/(2\beta^2) + 2t(1-t)c^2/(2\beta^2) + t^2 / (2\beta^2) + \bar{\sigma}^2t(1-t)/(2\beta^2) \\
                         &=  (1 - 2t + 2t^2 + 2t(1-t)c^2 + \bar{\sigma}^2t(1-t))/(2\beta^2) \\
    &=(1 + t(1-t)(2(c^2-1) + \bar{\sigma}^2)) / (2 \beta^2) ,
  \end{align}
  which concludes the proof.
\end{proof}

Leveraging \Cref{lemma:joint_gaussian}, we can give an explicit expression of
the drift term in the Markovian projection.

\begin{lemma}
  Let $\Pi_{0,1}^0 = \mathrm{N}(0, \Sigma^0)$ with $\Sigma^0$ given by
  \eqref{eq:matrix_sigma_0}. Let $\Pi^0 = \Pi_{0,1}^0 \Qbb_{|0,1}$.  For
  any $t \in \ccint{0,1}$ and $x_t \in \rset^d$, we have that
  \begin{equation}
    \sigma^2 \mathbb{E}_{\Pi_{1|t}^0}[\nabla \log \Qbb_{1|t} | \bfX_t = x_t] = \tfrac{(1-2t)(c^2-1) - \bar{\sigma}^2t}{1+t(1-t)(2(c^2-1) + \bar{\sigma}^2)} x_t . 
  \end{equation}
  Hence, the Markovian projection of $\Pi^0$, denoted $\Mbb^1$ is associated
  with $(\bfX_t)_{t \in \ccint{0,1}}$ with
  \begin{equation}
    \label{eq:sde_projection}
    \bfX_0 \sim \Pi_0, \qquad \rmd \bfX_t = \tfrac{(1-2t)(c^2-1) - \bar{\sigma}^2t}{1+t(1-t)(2(c^2-1) + \bar{\sigma}^2)} \bfX_t + \sigma \rmd \bfB_t . 
  \end{equation}
\end{lemma}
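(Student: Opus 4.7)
The plan is to reduce the computation to a standard Gaussian conditioning calculation, using the joint covariance structure already worked out in the preceding \Cref{lemma:joint_gaussian}. The key observation is that for the Brownian reference $\Qbb$ associated with $(\sigma \bfB_t)_{t\in\ccint{0,1}}$, the transition density $\Qbb_{1|t}(\cdot|x_t)$ is $\mathrm{N}(x_t,\sigma^2(1-t))$, so a direct differentiation gives
\begin{equation}
\nabla_{x_t}\log \Qbb_{1|t}(x_1|x_t) = \frac{x_1-x_t}{\sigma^2(1-t)}.
\end{equation}
Taking the conditional expectation under $\Pi^0_{1|t}$ and multiplying by $\sigma^2$ yields
\begin{equation}
\sigma^2 \mathbb{E}_{\Pi^0_{1|t}}[\nabla \log \Qbb_{1|t}|\bfX_t=x_t] \;=\; \frac{\mathbb{E}_{\Pi^0_{1|t}}[\bfX_1|\bfX_t=x_t]-x_t}{1-t},
\end{equation}
so the problem reduces to computing the conditional mean $\mathbb{E}_{\Pi^0_{1|t}}[\bfX_1|\bfX_t=x_t]$.

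Next I would invoke \Cref{lemma:joint_gaussian}, which describes $\Pi_{0,1,t}^0 = \mathrm{N}(0,\Sigma)$ explicitly. Since the joint is centered Gaussian, standard conditioning gives
\begin{equation}
\mathbb{E}_{\Pi^0_{1|t}}[\bfX_1|\bfX_t=x_t] = \frac{\mathrm{Cov}(\bfX_1,\bfX_t)}{\mathrm{Var}(\bfX_t)}\, x_t = \frac{a_k^{1-t}}{b_k^t}\,x_t,
\end{equation}
with $a_k^{1-t} = t+(1-t)c^2$ and $b_k^t = 1+t(1-t)(2(c^2-1)+\bar\sigma^2)$ as in the preceding lemma. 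Substituting into the previous display, the drift becomes $\bigl(a_k^{1-t}-b_k^t\bigr)/\bigl((1-t) b_k^t\bigr)\cdot x_t$.

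The remaining work is the algebraic simplification. I would expand
\begin{equation}
a_k^{1-t}-b_k^t = t+(1-t)c^2 - 1 - t(1-t)\bigl(2(c^2-1)+\bar\sigma^2\bigr),
\end{equation}
factor $(1-t)$ out of $t-1+(1-t)c^2 = (1-t)(c^2-1)$, and regroup the remaining terms to obtain $a_k^{1-t}-b_k^t = (1-t)\bigl[(c^2-1)(1-2t) - \bar\sigma^2 t\bigr]$. The factor $(1-t)$ then cancels with the $(1-t)$ in the denominator, leaving the claimed drift coefficient $\bigl[(1-2t)(c^2-1)-\bar\sigma^2 t\bigr]/b_k^t$. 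The main pitfall is simply keeping the $a_k^{1-t}$ versus $a_k^t$ conventions straight and not mixing up signs when factoring $(1-t)$; no conceptual obstacle arises.

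Finally, for the SDE representation of $\Mbb^1 = \projM(\Pi^0)$, I would apply \Cref{def:markovian_proj} directly: with $f_t = 0$ and diffusion coefficient $\sigma_t=\sigma$, the Markovian projection is the SDE $\rmd\bfX_t = v_t^\star(\bfX_t)\rmd t + \sigma\rmd\bfB_t$ with $v_t^\star(x_t) = \sigma^2 \mathbb{E}_{\Pi^0_{1|t}}[\nabla \log \Qbb_{1|t}|\bfX_t=x_t]$, and initial condition $\bfX_0 \sim \Mbb^1_0 = \Pi^0_0 = \Pi_0$ by \Cref{prop:markovian-projection}. Plugging in the drift derived above yields \eqref{eq:sde_projection}.
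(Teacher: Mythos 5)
Your proposal is correct and follows essentially the same route as the paper's proof: both reduce the drift to $(\mathbb{E}_{\Pi^0_{1|t}}[\bfX_1|\bfX_t=x_t]-x_t)/(1-t)$, read off $\mathbb{E}_{\Pi^0_{1|t}}[\bfX_1|\bfX_t=x_t]=(a_k^{1-t}/b_k^t)\,x_t$ from the joint Gaussian covariance of \Cref{lemma:joint_gaussian}, and perform the same factorization of $(1-t)$ in $a_k^{1-t}-b_k^t$. The only cosmetic difference is that you compute $\nabla\log\Qbb_{1|t}$ directly from the Gaussian transition density and state the conditioning formula explicitly, whereas the paper cites the bridge SDE representation of Barczy--Kern for the same identity.
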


\begin{proof}
  Using \cite[Theorem 3.2]{barczy2013representations}, we get that $\Qbb_{|0,1}$
  is associated with
  \begin{equation}
    \label{eq:bridge_sde}
  \rmd \bfX_t^{0,1} = \sigma^2 \nabla \log \Qbb_{1|t}(x_1|\bfX_t) \rmd t + \sigma \rmd \bfB_t , \qquad \bfX_0^{0,1} = x_0 ,  
  \end{equation}
  where for any $t \in \coint{0,1}$, we have and $x_t \in \rset^d$,
  $\sigma^2 \nabla \log \Qbb_{1|t}(x_1|x_t) = (x_1 - x_t)/(1-t)$. Therefore, we
  get that $\Mbb^1$ is associated with $(\bfX_t)_{t \in \ccint{0,1}}$ such that
  \begin{equation}
    \label{eq:mixture_bridge}
  \rmd \bfX_t = \mathbb{E}_{\Pi^0_{1|t}}[\sigma^2 \nabla \log \Qbb_{1|t}(\bfX_1|\bfX_t)|\bfX_t] \rmd t + \sigma \rmd \bfB_t , \qquad \bfX_0 = x_0 .
\end{equation}
Therefore, we get that
  \begin{equation}
    \label{eq:mixture_bridge}
  \rmd \bfX_t = (\mathbb{E}_{\Pi^0_{1|t}}[\bfX_1|\bfX_t] - x_t)/(1-t) \rmd t + \sigma \rmd \bfB_t , \qquad \bfX_0 = x_0 .
\end{equation}
Using \Cref{lemma:joint_gaussian}, we have that for any $t \in \ccint{0,1}$ and
$x_t \in \rset^d$,
$\mathbb{E}_{\Pi^0_{1|t}}[\bfX_1|\bfX_t=x_t] = a_k^{1-t} / b_k^t x_t$.
In addition, we have for any $t \in \ccint{0,1}$
\begin{align}
  a_k^{1-t}/b_k^t - 1 &= (t + (1-t)c^2 - 1 - t(1-t)(2(c^2-1) + \bar{\sigma}^2)) / (1 + t(1-t)(2(c^2-1) + \bar{\sigma}^2)) \\
                      &= ((1-t)(c^2-1) - t(1-t)(2(c^2-1) + \bar{\sigma}^2)) / (1 + t(1-t)(2(c^2-1) + \bar{\sigma}^2)) \\
                      &= (1-t)((c^2-1) - t(2(c^2-1) + \bar{\sigma}^2)) / (1 + t(1-t)(2(c^2-1) + \bar{\sigma}^2)) \\
  &= (1-t)((1-2t)(c^2-1) - t\bar{\sigma}^2)) / (1 + t(1-t)(2(c^2-1) + \bar{\sigma}^2)) ,
\end{align}
which concludes the proof.
\end{proof}

Note that since $\sigma > 0$ and $c^2 \in \ccint{0,1}$, we get that for any
$t \in \ccint{0,1}$, $1 +t(1-t)(2(c^2-1) + \bar{\sigma}^2) > 0$ and therefore
the drift is well-defined, smooth and sublinear. In particular,
\eqref{eq:sde_projection} admits a unique strong solution.
In what follows, we denote $G : \ \ccint{0,1} \times \ccint{0,1} \to \rset$ given for any $t \in \ccint{0,1}$ by
\begin{equation}
\textstyle  G(t,c^2) = \int_0^t \tfrac{(1-2s)(c^2-1) - \bar{\sigma}^2s}{1+s(1-s)(2(c^2-1) + \bar{\sigma}^2)} \rmd s  .
\end{equation}

We have the following useful lemma.

\begin{lemma}
  Let $c \in \ccint{0,1}$, $\bar{\sigma} > 0$ and $p = 2c^2 + \bar{\sigma}^2$.
  We distinguish three cases:
  \begin{enumerate}[label=(\alph*), wide]
  \item If $p < 2$, then
    \begin{equation}
      G(1,c^2) = -\bar{\sigma}^2(4 - p^2)^{-1/2} \tan^{-1}((4-p^2)^{1/2}/p) . 
    \end{equation}
  \item If $p = 2$, then
    \begin{equation}
      G(1,c^2) = -\bar{\sigma}^2 / 2 . 
    \end{equation}
  \item If $p > 2$, then
    \begin{equation}
      G(1,c^2) = -\bar{\sigma}^2  (p^2 - 4)^{-1/2} \tanh^{-1}((p^2 - 4)^{1/2}/p) . 
    \end{equation}        
  \end{enumerate}
\end{lemma}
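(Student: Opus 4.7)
The plan is to reduce the integral defining $G(1,c^2)$ to a single rational integral by an algebraic manipulation of the numerator, and then compute the resulting integral in the three cases corresponding to the sign of $q := p - 2 = 2(c^2-1)+\bar\sigma^2$. Write $D(s) = 1 + s(1-s)q = 1 + qs - qs^2$ so that $D'(s) = q(1-2s)$. The key observation is that the numerator
\begin{equation}
N(s) := (1-2s)(c^2-1) - \bar\sigma^2 s
\end{equation}
decomposes as $N(s) = \tfrac12 D'(s) - \tfrac{\bar\sigma^2}{2}$, since matching the coefficient of $s$ gives $\alpha q = (2(c^2-1)+\bar\sigma^2)/2 = q/2$, i.e. $\alpha = 1/2$, and the constant term then forces $\beta = -\bar\sigma^2/2$. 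Because $D(0) = D(1) = 1$, the $\log D$ contribution vanishes at the endpoints and
\begin{equation}
G(1,c^2) = \tfrac12\bigl[\log D(s)\bigr]_0^1 - \frac{\bar\sigma^2}{2}\int_0^1 \frac{ds}{D(s)} = -\frac{\bar\sigma^2}{2}\int_0^1 \frac{ds}{1 + qs(1-s)}.
\end{equation}

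Next, complete the square using $u = s - 1/2$: $D = A - Bu^2$ with $A = 1 + q/4 = (p+2)/4$ and $B = q = p-2$, so $AB = (p^2-4)/4$ and the domain of integration becomes $u \in [-1/2,1/2]$. The three cases follow by elementary integration.

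For $p > 2$ ($q > 0$), both $A$ and $B$ are positive, so $\int du/(A - Bu^2) = (AB)^{-1/2}\tanh^{-1}(u\sqrt{B/A})$; evaluation at $\pm 1/2$ gives $\int_0^1 ds/D = 4(p^2-4)^{-1/2}\tanh^{-1}\bigl(\sqrt{(p-2)/(p+2)}\bigr)$. For $p < 2$ ($q < 0$), $-B > 0$ and the integral becomes $\int du/(A + |B|u^2) = (A|B|)^{-1/2}\tan^{-1}(u\sqrt{|B|/A})$, giving $\int_0^1 ds/D = 4(4-p^2)^{-1/2}\tan^{-1}\bigl(\sqrt{(2-p)/(p+2)}\bigr)$. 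For $p = 2$, $D \equiv 1$ and the integral is just $1$, yielding $G(1,c^2) = -\bar\sigma^2/2$ immediately.

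The remaining step is to reconcile the arguments of the inverse (hyperbolic) tangent with those stated in the lemma. For the trigonometric case, set $\theta = \tan^{-1}\bigl(\sqrt{(2-p)/(p+2)}\bigr)$; then by the double-angle formula
\begin{equation}
\tan(2\theta) = \frac{2\tan\theta}{1-\tan^2\theta} = \frac{2\sqrt{(2-p)/(p+2)}}{2p/(p+2)} = \frac{\sqrt{(p+2)(2-p)}}{p} = \frac{\sqrt{4-p^2}}{p},
\end{equation}
so $2\theta = \tan^{-1}(\sqrt{4-p^2}/p)$ in the relevant range $0 < p < 2$. An analogous half-angle identity for $\tanh^{-1}$ (using $\tanh^{-1}(x) = \tfrac12\log\tfrac{1+x}{1-x}$ and the identity $(\sqrt{p+2}+\sqrt{p-2})^2 = 2p + 2\sqrt{p^2-4}$) shows $2\tanh^{-1}\bigl(\sqrt{(p-2)/(p+2)}\bigr) = \tanh^{-1}(\sqrt{p^2-4}/p)$. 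Substituting these simplifications and multiplying by the factor $-\bar\sigma^2/2$ gives exactly the three expressions stated. The only subtle point is this final identity-matching step; everything else is routine bookkeeping once the decomposition $N = \tfrac12 D' - \tfrac{\bar\sigma^2}{2}$ is noticed.
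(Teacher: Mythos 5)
Your proof is correct, and all the key steps check out: the decomposition $N(s)=\tfrac12 D'(s)-\tfrac{\bar\sigma^2}{2}$ of the numerator, the vanishing of the logarithmic term because $D(0)=D(1)=1$ (with $D>0$ on $[0,1]$ guaranteeing the antiderivative is valid), the completion of the square giving $AB=(p^2-4)/4$, and the double-angle identities $2\tan^{-1}\bigl(\sqrt{(2-p)/(p+2)}\bigr)=\tan^{-1}(\sqrt{4-p^2}/p)$ and $2\tanh^{-1}\bigl(\sqrt{(p-2)/(p+2)}\bigr)=\tanh^{-1}(\sqrt{p^2-4}/p)$, the former being legitimate since $p>0$ keeps $2\theta$ inside $(0,\pi/2)$. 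The paper states this lemma without any proof, treating it as a routine integration, so there is no argument to compare against; your write-up supplies exactly the computation the authors left implicit.
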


This lemma is useful combined with the following proposition, which gives the analytical update formula of Gaussian IMF covariances as a function of $G(1,c^2)$. 

\begin{proposition}
   \label{prop:imf_iterate_gaussian}
   Let $\Pi^0_{0,1} = \mathrm{N}(0, \Sigma^0)$ with $\Sigma^0$ given by
   \eqref{eq:matrix_sigma_0}. Then $\Pi^1_{0,1} = \mathrm{N}(0, \Sigma^1)$ with
   \begin{equation}
       \Sigma^1 = (1/2\beta^2) \left(
      \begin{matrix}
        1 & c_1^2 \\
        c_1^2 & 1
      \end{matrix}
    \right) , \qquad c_1^2 = f(c_0^2) , 
  \end{equation}
  with $f: \ \ccint{0,1} \to \ccint{0,1}$ given for any $c \in \ccint{0,1}$ by
  \begin{equation}
    f(c^2) = \exp[G(1,c^2)] . 
  \end{equation}
\end{proposition}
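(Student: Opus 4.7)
My plan is to unfold the definitions, solve the associated linear SDE in closed form, and then read off the covariance structure of $\Pi^1_{0,1}$. By the definition of the reciprocal projection (\Cref{def:reciprocal_projection}), we have $\Pi^1_{0,1} = \Mbb^1_{0,1}$, so it suffices to compute the joint distribution at times $0$ and $1$ of the Markovian projection $\Mbb^1 = \projM(\Pi^0)$. The previous lemma gives that $\Mbb^1$ is associated with the SDE \eqref{eq:sde_projection}, which is linear with time-dependent scalar drift $a(t) = \frac{(1-2t)(c_0^2-1) - \bar{\sigma}^2 t}{1+t(1-t)(2(c_0^2-1) + \bar{\sigma}^2)}$. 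Note that its antiderivative from $0$ to $t$ is exactly $G(t, c_0^2)$.

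Since $\bfX_0 \sim \mathrm{N}(0, 1/(2\beta^2))$ is Gaussian and the SDE is linear with deterministic coefficients, the solution
\begin{equation}
\textstyle \bfX_t = \exp[G(t,c_0^2)] \bfX_0 + \sigma \int_0^t \exp[G(t,c_0^2) - G(s,c_0^2)] \rmd \bfB_s
\end{equation}
is a Gaussian process, and $(\bfX_0, \bfX_1)$ is jointly Gaussian with zero mean. I will then compute each entry of $\Sigma^1 = \mathrm{Cov}(\bfX_0, \bfX_1)$. The covariance entry is immediate from the It\^o isometry and the independence of $\bfX_0$ and the Brownian increments:
\begin{equation}
\mathrm{Cov}(\bfX_0, \bfX_1) = \exp[G(1,c_0^2)] \, \mathrm{Var}(\bfX_0) = \exp[G(1,c_0^2)] / (2\beta^2) .
\end{equation}

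For the diagonal entries, the marginal $\bfX_0$ is $\mathrm{N}(0, 1/(2\beta^2))$ by construction. For the marginal of $\bfX_1$, I would invoke \Cref{prop:markovian-projection}, which states that the Markovian projection preserves all time marginals: $\Mbb^1_1 = \Pi^0_1$. But $\Pi^0_1$ is the $x_1$-marginal of $\Pi^0_{0,1} = \mathrm{N}(0, \Sigma^0)$, which is $\mathrm{N}(0, 1/(2\beta^2))$. Thus $\mathrm{Var}(\bfX_1) = 1/(2\beta^2)$. Setting $c_1^2 = 2\beta^2 \mathrm{Cov}(\bfX_0,\bfX_1) = \exp[G(1,c_0^2)]$ gives the claimed matrix $\Sigma^1$.

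The only substantive point is to make sure this is consistent, i.e. that $c_1^2 \in \ccint{0,1}$, so that the stated matrix is a valid covariance. Non-negativity is immediate from the exponential. The upper bound $c_1^2 \le 1$ follows because $c_1^2$ is the normalized covariance of two Gaussians of equal variance, hence a correlation coefficient in $\ccint{-1,1}$; combined with positivity this gives $c_1^2 \in \ccint{0,1}$ and equivalently $G(1,c_0^2) \le 0$ (which one could also verify directly from the explicit formulas for $G(1,c_0^2)$ in the preceding lemma). I do not anticipate a genuine obstacle: the main content is the identification of $A(t) = G(t,c_0^2)$ and the invocation of marginal preservation from \Cref{prop:markovian-projection}; everything else is routine solution of a one-dimensional linear Gaussian SDE.
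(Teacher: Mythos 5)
Your proof is correct and follows essentially the same route as the paper's: solve the linear SDE \eqref{eq:sde_projection} to write $\bfX_1 = \exp[G(1,c_0^2)]\bfX_0 + \bfM_1$ with $\bfM_1$ zero-mean Gaussian independent of $\bfX_0$, read off the covariance, and use preservation of the time marginals to get $\mathrm{Var}(\bfX_1)=1/(2\beta^2)$. You are slightly more explicit than the paper in justifying the variance claim via \Cref{prop:markovian-projection} and in checking $c_1^2\in\ccint{0,1}$, but these are elaborations of the same argument rather than a different approach.
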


\begin{proof}
  We have that $\bfX_1 = \exp[G(1,c^2)] \bfX_0 + \bfM_1$, where $\bfM_1$ is a
  Gaussian random variable with zero mean independent from $\bfX_0$. Therefore,
  we get that $\mathrm{Cov}(\bfX_0, \bfX_1) = \exp[G(1,c^2)]/(2 \beta^2)$. In
  addition, we have that $\expeLigne{}{\bfX_1^2} = \expeLigne{}{\bfX_0^2} = 1/2\beta^2$,
  which concludes the proof. 
\end{proof}

Iterating the procedure in \Cref{prop:imf_iterate_gaussian}, we obtain a sequence of IMF covariances $(c_n^2)_{n \in \nset}$ satisfying $c_{n+1}^2=f(c_n^2)$. 
Finally, we show that this iterative procedure recovers the true SB coupling $\PiSB_{0,1}=\mathrm{N}(0, \Sigma^\textup{SB})$ as a fixed point. 
The formula of $\Sigma^\textup{SB}$ is given e.g.\ in  \cite[Equation (2)]{bunne2022gaussian} which we use below. 
\begin{proposition}
    Let $\PiSB_{0,1}=\mathrm{N}(0, \Sigma^\textup{SB})$ be the true static SB solution, with 
   \begin{equation}
       \Sigma^\textup{SB} = (1/2\beta^2) \left(
      \begin{matrix}
        1 & c_\textup{SB}^2 \\
        c_\textup{SB}^2 & 1
      \end{matrix}
    \right) , \qquad  c_\textup{SB}^2 = \frac{1}{2}(\sqrt{4+\bar{\sigma}^4} -\bar{\sigma}^2). 
  \end{equation}
  Then $\PiSB_{0,1}$ is a fixed point of the iterative procedure in \Cref{prop:imf_iterate_gaussian}, i.e. $f(c_\textup{SB}^2)=c_\textup{SB}^2$.
\end{proposition}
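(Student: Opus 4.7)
My plan is to verify directly that $c_\textup{SB}^2$ is a root of $c^2 = f(c^2) = \exp[G(1,c^2)]$, i.e.\ that $G(1,c_\textup{SB}^2) = \log c_\textup{SB}^2$, by specializing the closed-form expressions from the lemma preceding the proposition. The first step is to evaluate the parameter $p = 2c^2 + \bar{\sigma}^2$ at $c^2 = c_\textup{SB}^2$; substituting the explicit value $c_\textup{SB}^2 = \tfrac12(\sqrt{4+\bar{\sigma}^4}-\bar{\sigma}^2)$ gives $p = \sqrt{4+\bar{\sigma}^4} \geq 2$, with equality iff $\bar{\sigma}=0$.

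Assuming $\bar{\sigma}>0$ (so $p>2$), case (c) of the lemma applies and gives
\begin{equation*}
G(1,c_\textup{SB}^2) \;=\; -\bar{\sigma}^2 (p^2-4)^{-1/2}\tanh^{-1}\!\bigl((p^2-4)^{1/2}/p\bigr).
\end{equation*}
The identity $p^2 - 4 = \bar{\sigma}^4$ then collapses this to
\begin{equation*}
G(1,c_\textup{SB}^2) \;=\; -\tanh^{-1}\!\bigl(\bar{\sigma}^2/\sqrt{4+\bar{\sigma}^4}\bigr).
\end{equation*}

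The remaining step is purely algebraic. I would use the identity $\tanh^{-1}(x) = \tfrac12\log\tfrac{1+x}{1-x}$ together with the conjugate identity $(\sqrt{4+\bar{\sigma}^4}-\bar{\sigma}^2)(\sqrt{4+\bar{\sigma}^4}+\bar{\sigma}^2) = 4$ to rewrite
\begin{equation*}
-\tanh^{-1}\!\bigl(\bar{\sigma}^2/\sqrt{4+\bar{\sigma}^4}\bigr)
\;=\; \tfrac12\log\tfrac{\sqrt{4+\bar{\sigma}^4}-\bar{\sigma}^2}{\sqrt{4+\bar{\sigma}^4}+\bar{\sigma}^2}
\;=\; \tfrac12\log\tfrac{(\sqrt{4+\bar{\sigma}^4}-\bar{\sigma}^2)^2}{4}
\;=\; \log c_\textup{SB}^2.
\end{equation*}
Exponentiating yields $f(c_\textup{SB}^2) = c_\textup{SB}^2$, as required. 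The degenerate case $\bar{\sigma}=0$ gives $c_\textup{SB}^2 = 1$ and $p=2$, and case (b) of the lemma returns $G(1,1) = 0$, so $f(1)=1$; alternatively one can conclude by continuity of $f$ in $\bar{\sigma}$.

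There is no genuine obstacle in this argument — everything reduces to a single specialization and an elementary $\log/\tanh^{-1}$ manipulation. The only point that requires a little care is recognizing the conjugate trick that turns $-\tanh^{-1}(\bar{\sigma}^2/\sqrt{4+\bar{\sigma}^4})$ into $\log c_\textup{SB}^2$, which is exactly the feature that selects the \schro covariance $c_\textup{SB}^2$ from the level sets of $G(1,\cdot)$.
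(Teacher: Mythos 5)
Your proposal is correct and follows essentially the same route as the paper's proof: compute $p_\textup{SB}=\sqrt{4+\bar{\sigma}^4}$, split into the cases $\bar{\sigma}=0$ and $\bar{\sigma}>0$, and reduce $G(1,c_\textup{SB}^2)$ to $-\tanh^{-1}(\bar{\sigma}^2/\sqrt{4+\bar{\sigma}^4})=\log c_\textup{SB}^2$. The only difference is that you spell out the $\tanh^{-1}$-to-$\log$ conjugate manipulation that the paper leaves implicit, which is a welcome addition but not a different argument.
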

\begin{proof}
    By straightforward calculations, $p_\textup{SB}=2c_\textup{SB}^2 + \bar{\sigma}^2 = \sqrt{4+\bar{\sigma}^4}$. If $\bar{\sigma}=0$, $p_\textup{SB}=2$ and thus $G(1,c_\textup{SB}^2)= -\bar{\sigma}^2 / 2=0$. Hence $f(c_\textup{SB}^2) = \exp (G(1,c_\textup{SB}^2)) = 1 =c_\textup{SB}^2$. If $\bar{\sigma}>0$, we have $p_\textup{SB} > 2$ and $G(1,c_\textup{SB}^2)=-\tanh^{-1}(\bar{\sigma}^2/\sqrt{4+\bar{\sigma}^4})$. Hence, $f(c_\textup{SB}^2) = \exp (G(1,c_\textup{SB}^2)) = \frac{1}{2}(\sqrt{4+\bar{\sigma}^4} -\bar{\sigma}^2) =c_\textup{SB}^2$. We thus correctly recover the true static SB solution $\PiSB_{0,1}$ as fixed point of IMF. 
\end{proof}

We visualize the convergence of this fixed point procedure for a variety of parameter settings in \Cref{fig:analytic_convergence_appendix}. The convergence appears to be very fast in only two or three iterations. 

\begin{figure}[h]
    \centering
    \subfloat[$\bar{\sigma}=0.1, c_0^2=0$]{\includegraphics[width=0.23\linewidth]{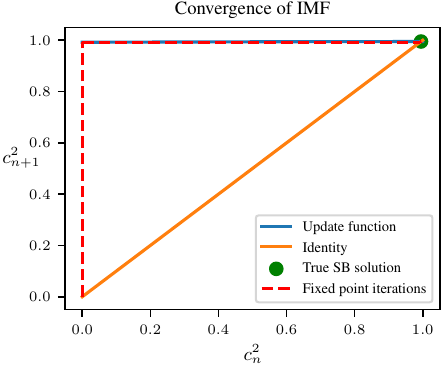}}\hfill
    \subfloat[$\bar{\sigma}=1, c_0^2=0$]{\includegraphics[width=0.23\linewidth]{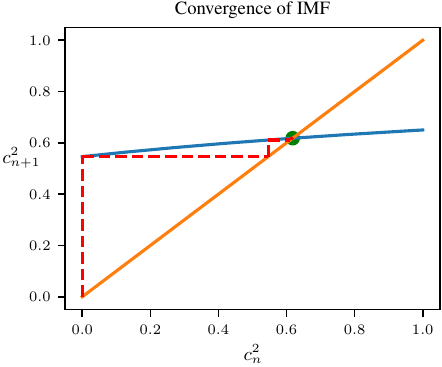}}\hfill
    \subfloat[$\bar{\sigma}=1, c_0^2=1$]{\includegraphics[width=0.23\linewidth]{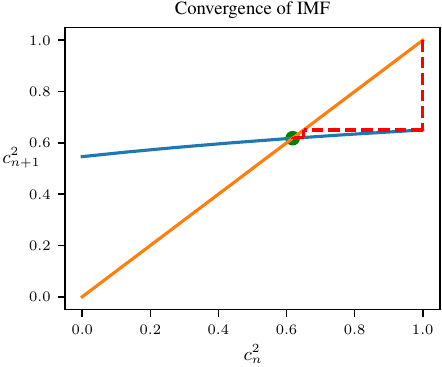}}\hfill
    \subfloat[$\bar{\sigma}=5, c_0^2=1$]{\includegraphics[width=0.23\linewidth]{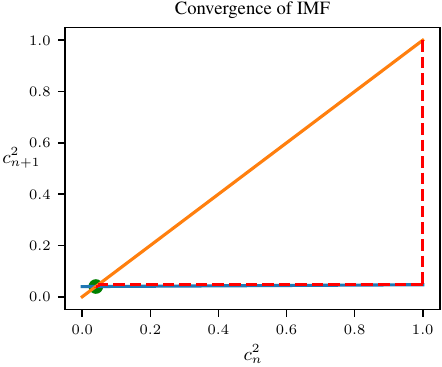}}
    \caption{Convergence of IMF in the analytic case given by \Cref{prop:imf_iterate_gaussian}. }
    \label{fig:analytic_convergence_appendix}
\end{figure}

\section{Discrete-Time Markovian Projection}
\label{sec:discrete_time_mp_appendix}
We derive in this section a discrete-time version of the Markovian
projection and show that, in some limiting case, we recover the continuous-time
projection. In the discrete case, we let \begin{align}
  \label{eq:def_q_discret}
  \textstyle
  \pi(x_{0:N}) &\textstyle=\pi(x_0, x_N) \prod_{k=0}^{N-1} q_{k+1|0,k,N}(x_{k+1}|x_0, x_k, x_N) \\
  &\textstyle= \pi(x_0, x_N)\prod_{k=0}^{N-2} q_{k+1|k,N}(x_{k+1}|x_k, x_N)  .
\end{align}
We consider a Markovian measure $p$ given by
$p(x_{0:N}) = p(x_0) \prod_{k=0}^{N-1} p_{k+1|k}(x_{k+1} | x_k)$.  Now let us
compute $\KLLigne{\pi}{p}$. We have
\begin{align}
  \KLLigne{\pi(x_{0:N})}{p(x_{0:N})} &= \textstyle \sum_{k=0}^{N-2} \int_{\rset^d \times \rset^d} \KLLigne{q_{k+1|k,N}}{p_{k+1|k}} \pi_{k,N}(x_k,x_N) \rmd x_k \rmd x_N \\
  & \qquad \textstyle + \KLLigne{\pi_0}{p_0} + \int_{\rset^d} \KLLigne{\pi(x_N|x_0)}{p(x_{N}|x_{N-1})} \pi(x_0, x_{N-1}) \rmd x_0 \rmd x_{N-1} .
\end{align}
In what follows, we denote
\begin{align}
  & \mathcal{L}_0 = \KLLigne{\pi_0}{p_0} , \textstyle  \mathcal{L}_{N} = \int_{\rset^d} \KLLigne{\pi(x_N|x_0)}{p(x_{N}|x_{N-1})} \pi(x_0, x_{N-1}) \rmd x_0 \rmd x_{N-1},  \\
  & \textstyle  \mathcal{L}_{k+1} = \int_{\rset^d \times \rset^d} \KLLigne{q_{k+1|k,N}}{p_{k+1|k}} \pi_{k,N}(x_k,x_N) \rmd x_k \rmd x_N , \end{align}
We have the following proposition.
\begin{proposition}
  The minimizer $p_{k+1|k}$ of $\mathcal{L}_{k+1}$ is given by
  \begin{equation}
    \label{eq:definition_discrete}
    \textstyle p_{k+1|k}(x_{k+1} | x_k) = \int_{\rset^d} q_{k+1|k,N}(x_{k+1}|x_k, x_N)  \pi_{N|k}(x_N|x_k) \rmd x_N .
  \end{equation}
  If $p_0 = q_0$, then  for any $k \in \{0, \dots, N-1\}$,
  $p_k = \pi_k$.  In addition, assume that
  $p_{k+1|k}(x_{k+1}|x_k) = \exp[-\normLigne{x_{k+1} - x_k - \gamma
    f(x_k)}^2/(2\gamma)] / (2 \uppi \gamma)^{-d/2}$ and
  $ q_{k+1|k,N}(x_{k+1}|x_k, x_N) = \exp[-\normLigne{x_{k+1} - x_k - \gamma
    f(x_k, x_N)}^2/(2\gamma)] / (2 \uppi \gamma)^{d/2}$.  Finally, assume that
  $\normLigne{x_{k+1} - x_k} \leq \gamma^{1/2}$.  Then, we have that
  \begin{equation}
    \label{eq:limit_continuous}
    \textstyle f(x_k) = \int_{\rset^d} f(x_k, x_N) \pi(x_N|x_k) \rmd x_N + o(\gamma^{1/2}).
  \end{equation}
\end{proposition}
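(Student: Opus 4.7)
The plan is to recognize each $\mathcal{L}_{k+1}$ as, up to an additive constant in $p_{k+1|k}$, a cross-entropy. Expanding
\[
\mathcal{L}_{k+1} = -\int p_{k+1|k}^{\mathrm{target}}(x_{k+1}|x_k)\,\log p_{k+1|k}(x_{k+1}|x_k)\,\pi_k(x_k)\,\rmd x_{k+1} \rmd x_k + \text{const},
\]
where
\(
p_{k+1|k}^{\mathrm{target}}(x_{k+1}|x_k) = \int q_{k+1|k,N}(x_{k+1}|x_k,x_N)\,\pi_{N|k}(x_N|x_k)\,\rmd x_N
\)
arises from marginalizing $x_N$ against the conditional $\pi_{N|k}$. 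Pointwise in $x_k$, the cross-entropy is minimized by $p_{k+1|k}(\cdot|x_k) = p_{k+1|k}^{\mathrm{target}}(\cdot|x_k)$, giving \eqref{eq:definition_discrete}. For the marginal preservation, I would proceed by induction on $k$. Assuming $p_k = \pi_k$ and using \eqref{eq:definition_discrete},
\[
p_{k+1}(x_{k+1}) = \int p_{k+1|k}(x_{k+1}|x_k)\pi_k(x_k)\rmd x_k = \iint q_{k+1|k,N}(x_{k+1}|x_k,x_N)\pi_{k,N}(x_k,x_N)\rmd x_k \rmd x_N.
\]
Using the reciprocal structure \eqref{eq:def_q_discret} and integrating out $x_k,x_N$, the right-hand side is precisely $\pi_{k+1}(x_{k+1})$.

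\textbf{Part 2 (continuous-time limit).} The plan is to substitute the two Gaussian forms into \eqref{eq:definition_discrete} and take the logarithm of both sides. Expanding the Gaussian exponent,
\[
-\tfrac{\|x_{k+1}-x_k-\gamma f(x_k)\|^2}{2\gamma} = -\tfrac{\|x_{k+1}-x_k\|^2}{2\gamma} + \langle f(x_k), x_{k+1}-x_k\rangle - \tfrac{\gamma}{2}\|f(x_k)\|^2,
\]
and similarly for $f(x_k,x_N)$, the free-diffusion factor $\exp[-\|x_{k+1}-x_k\|^2/(2\gamma)]$ cancels on both sides of \eqref{eq:definition_discrete}, reducing the identity to
\[
e^{\langle f(x_k),h\rangle - \tfrac{\gamma}{2}\|f(x_k)\|^2} = \int e^{\langle f(x_k,x_N),h\rangle - \tfrac{\gamma}{2}\|f(x_k,x_N)\|^2}\,\pi_{N|k}(x_N|x_k)\,\rmd x_N,
\]
with $h = x_{k+1}-x_k$ and $\|h\|\leq \gamma^{1/2}$. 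Taking logarithms on both sides and noting that the exponents are $O(\gamma^{1/2})$, a cumulant (or Taylor) expansion on the right gives
\[
\langle f(x_k), h\rangle = \big\langle \textstyle\int f(x_k,x_N)\pi_{N|k}(x_N|x_k)\rmd x_N,\, h\big\rangle + O(\gamma).
\]
Finally, I would vary $h$ over a ball of radius $\gamma^{1/2}$: dividing by $\gamma^{1/2}$ and taking suprema over unit directions yields \eqref{eq:limit_continuous}.

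\textbf{Main obstacle.} The delicate step is the asymptotic bookkeeping in the logarithmic expansion. The quadratic terms $\gamma\|f\|^2/2$, the variance correction $\tfrac{1}{2}\mathrm{Var}_{\pi_{N|k}}(\langle f(x_k,x_N),h\rangle) = O(\gamma)$, and the higher-order cumulants in $\log\mathbb{E}[e^Y]$ must each be shown to contribute $o(\gamma^{1/2})$ once one divides by $\|h\|\leq\gamma^{1/2}$. A clean way is to assume $f(x_k,\cdot)$ bounded (or at least square-integrable against $\pi_{N|k}$) so that all second moments are uniformly controlled; then the remainder terms in the cumulant expansion are genuinely of order $\gamma$, and the conclusion $O(\gamma^{1/2})$, hence $o(1)\cdot\gamma^{1/2}$-compatible with \eqref{eq:limit_continuous}, follows.
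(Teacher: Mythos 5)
Your proposal is correct and, where a comparison is possible, takes the same route as the paper: the paper's own proof only writes out the marginal-preservation claim, by exactly the induction you give ($p_{k+1}(x_{k+1})=\int p_k(x_k)p_{k+1|k}(x_{k+1}|x_k)\,\rmd x_k$, substitute \eqref{eq:definition_discrete}, use $\pi(x_k,x_{k+1},x_N)=\pi(x_k,x_N)q_{k+1|k,N}(x_{k+1}|x_k,x_N)$, and integrate out $x_k,x_N$), and declares \eqref{eq:definition_discrete} and \eqref{eq:limit_continuous} "straightforward and left to the reader." Your cross-entropy/Gibbs argument for the minimizer and your Gaussian-cancellation-plus-Taylor-expansion argument for the limit are the natural ways to fill those gaps, and both are sound. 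One small caveat on the last part, which you half-acknowledge but do not resolve: carried out honestly, your expansion leaves residual terms $\tfrac{\gamma}{2}(\|f(x_k)\|^2-\E[\|f(x_k,\bfX_N)\|^2])$ and the second-moment mismatch $\tfrac12(\E[\langle f(x_k,\bfX_N),h\rangle^2]-\langle f(x_k),h\rangle^2)$, both of which are genuinely $O(\gamma)$ and do not cancel in general; after dividing by $\|h\|=\gamma^{1/2}$ this yields $f(x_k)=\int f(x_k,x_N)\pi(x_N|x_k)\,\rmd x_N+O(\gamma^{1/2})$, not $o(\gamma^{1/2})$ — your closing claim that $O(\gamma^{1/2})$ is "$o(1)\cdot\gamma^{1/2}$-compatible" is not a valid inference. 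Since the paper gives no proof of this part and the statement is clearly informal (note also the sign typo in one of the normalizing constants), this reads as an imprecision in the stated remainder order rather than a flaw in your argument, but it is worth stating the conclusion you actually obtain.
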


\begin{proof}
  The proofs of \eqref{eq:definition_discrete} and \eqref{eq:limit_continuous}
  are straightforward and left to the reader. We now prove that if $p_0 = \pi_0$, then for any
  $k \in \{1, \dots, N\}$, $p_k = \pi_k$. First, we have that for any
  $k \in \{0, \dots, N-1\}$,
  \begin{equation}
    \pi(x_k, x_{k+1},x_N) = \pi(x_k,x_N)q(x_{k+1}|x_k,x_N) . 
  \end{equation}
  Assume now that $p_k=\pi_k$, then we have
  \begin{align}
    p_{k+1}(x_{k+1}) &\textstyle = \int_{\rset^d} p_k(x_k) p_{k+1|k}(x_{k+1}|x_k) \rmd x_k\\
     &= \textstyle \int_{\rset^d \times \rset^d} p_k(x_k) q_{k+1|k,N}(x_{k+1}|x_k,x_N) \pi_{N|k}(x_N|x_k) \rmd x_k \rmd x_N \\
     &= \textstyle \int_{\rset^d \times \rset^d} \pi_k(x_k) q_{k+1|k,N}(x_{k+1}|x_k,x_N) \pi_{N|k}(x_N|x_k) \rmd x_k \rmd x_N \\
      &= \textstyle \int_{\rset^d \times \rset^d} \pi_{k,k+1,N}(x_k,x_{k+1},x_N) \rmd x_k \rmd x_N = \pi_{k+1}(x_{k+1}),
  \end{align}
  which concludes the proof.
\end{proof}

In particular, in the previous proposition, if
$f(x_k, x_N) = \nabla \log q(x_N|x_k)$, i.e. we have a discretization of the
bridge then
$f(x_k) = \int_{\rset^d} \nabla \log q(x_N|x_K) \pi(x_N|x_k) \rmd x_N$, which
recovers the Markovian projection in continuous-time.

\section{Comparing DSBM-IPF and DSB}
\label{sec:dsbm_vs_dsb_appendix}
We analyze further the differences between DSBM-IPF proposed here and DSB proposed in \cite{debortoli2021diffusion} and related algorithms in \cite{vargas2021solving,chen2021likelihood}. All algorithms solve the SB problem using the IPF iterates. However, DSB-type algorithms solve for the IPF iterates using \emph{time-reversals}, whereas DSBM solves for the iterates using \emph{Markovian} and \emph{reciprocal projections}. A comparison between these two methodologies is made in \Cref{sec:pract-meth}. 

We investigate here further benefit (iii) of DSBM in \Cref{sec:pract-meth}, i.e. the benefit of explicitly projecting onto the reciprocal class of $\Qbb$. 
Intuitively speaking, we directly incorporate the reference measure $\Q$ in the training procedure as our inductive bias. 
More formally, suppose we have at the current IPF iteration $\Mbb^{2n}$ (Markov diffusion in the forward direction) and want to learn $\Mbb^{2n+1}$ (Markov diffusion in the backward direction). Due to training error and the \emph{forgetting} issue \citep{fernandes2021shooting}, however, $\Mbb^{2n}$ no longer has the correct bridge $\Qbridge$. 
Now suppose we first perform IMF for $\Mbb^{2n}$ and learn $\Mbb^{2n,\star}$ in the forward direction. That is to say, we repeat alternative reciprocal and Markovian projections and obtain a sequence $(\Mbb^{2n,m})_{m \in \nset}$ in the forward direction converging to $\Mbb^{2n,\star}$. Then $\Mbb^{2n,\star}$ now has the correct bridge $\Qbridge$ by \Cref{prop:convergence_marginals}, since $\Mbb^{2n,\star}$ is the SB between $\Mbb_0^{2n}$ and $\Mbb_T^{2n}$. Theoretically, $\Mbb_t^{2n}=\Mbb_t^{2n,\star}$ for $t=0,T$, but due to training error accumulating it may be that $\Mbb_T^{2n} \neq \Mbb_T^{2n,\star}$. However, $\Mbb_0^{2n}=\Mbb_0^{2n,\star}$, since $\Mbb^{2n,\star}$ is in the forward direction and starting from samples from $\pi_0$. As a result, we can obtain a Markov forward diffusion $\Mbb^{2n,\star}$, which has $\Mbb_0^{2n,\star}=\pi_0$ and the correct bridge $\Qbridge$. These are the same set of properties that the reference measure $\Q$ has. As a result, replacing $\Q$ with $\Mbb^{2n,\star}$ in \eqref{eq:schrodinger_bridge} results in the same SB solution. 
Consequently, now continuing the IPF iterations from $\Mbb_0^{2n,\star}$, it is as if we restart IPF afresh using a modified SB problem 
\begin{equation}
    \PSB = \argmin \ensembleLigne{\KLLigne{\Pbb}{\Mbb^{2n,\star}}}{\Pbb_0 = \pi_0, \ \Pbb_T = \pi_T}.\label{eq:SB_modified}
\end{equation} 
If $\Mbb^{2n,\star}$ is closer than $\Q$ to $\Mbb^{\star}$ in the sense of KL divergence, then we obtain a better initialization of the IPF procedure. 
As proposed in \Cref{alg:DSBM_general}, DSBM performs the Markovian and reciprocal projection only once before switching between the forward and backward directions. However, it is still beneficial compared to DSB with less bias accumulation in the bridge.

Algorithmically, one main difference between DSB-type algorithms and DSBM due to the above distinction occurs in the trajectory caching step. In DSB, a fixed discretization of SDE needs to be chosen, and all intermediate samples from the discretized Euler-Maruyama simulation of the SDE need to be saved. Furthermore, a second set of drift evaluation needs to be performed for all datapoints in the trajectory  \citep[Equations (12), (13)]{debortoli2021diffusion}. The IPML algorithm in \cite{vargas2021solving} is also similar to DSB, but Gaussian processes are used to fit the drifts of forward and backward SDEs instead of neural networks. In \cite{chen2021likelihood}, the implicit score matching loss is used instead, but all intermediate points in the SDE trajectory also need to be saved. On the contrary, DSBM does not require intermediate samples during trajectory caching and only retains the joint samples at times $0,T$. Then, the intermediate trajectories are reconstructed using the reference bridge $\Qbb_{|0,T}$.

\section{Joint Training of Forward and Backward Processes}
\label{sec:joint-train-forw}

We recall below the DSBM algorithm given in \Cref{alg:DSBM_general}.

\begin{algorithm}[H]
\caption{Diffusion Schr\"odinger Bridge Matching}
\begin{algorithmic}[1]
\dsbmalgo
\end{algorithmic}
\end{algorithm}

Our main observation comes from \Cref{prop:forward_backward_markov_proj}. In
particular, under mild assumptions, we have that the Markovian projection
$\Mbb = \projM(\Pi)$ is associated with 
  \begin{align}
    &
    \rmd \bfX_t = \{f_t(\bfX_t) + \sigma_t^2 \CPELigne{\Pi_{T|t}}{\nabla \log \Qbb_{T|t}(\bfX_T|\bfX_t)}{\bfX_t} \} \rmd t + \sigma_t \rmd \bfB_t , \quad \bfX_0 \sim \pi_0 ,  \\
    &
    \rmd \bfY_t = \{-f_{T-t}(\bfY_t) + \sigma_{T-t}^2 \CPELigne{\Pi_{0|T-t}}{\nabla \log \Qbb_{T-t|0}(\bfY_t|\bfY_T)}{\bfY_t} \} \rmd t + \sigma_{T-t} \rmd \bfB_t , \quad \bfY_0 \sim \pi_T . 
  \end{align}
Considering the following losses, 
  \begin{align}
    &\textstyle \theta^\star = \argmin_\theta \ensembleLigne{\int_0^T \expeLigne{\Pi_{t,T}}{\normLigne{\sigma_t^2 \nabla
    \log \Qbb_{T|t}(\bfX_T|\bfX_t) - v_\theta(t,\bfX_t)}^2}/\sigma_t^2 \rmd t }{\theta \in \Theta} , \label{eq:loss_forward_appendix} \\
    &\textstyle \phi^\star = \argmin_\phi \ensembleLigne{\int_0^T \expeLigne{\Pi_{t,0}}{\normLigne{\sigma_t^2 \nabla
    \log \Qbb_{t|0}(\bfX_t|\bfX_0) - v_\phi(t,\bfX_t)}^2} /\sigma_t^2 \rmd t }{\phi \in \Phi} . \label{eq:loss_backward_appendix}
  \end{align}
  If the families of functions $\ensembleLigne{v_\theta}{\theta \in \Theta}$ and
  $\ensembleLigne{v_\phi}{\theta \in \Phi}$ are rich enough, we have for any
  $t \in \ccint{0,T}$ and $x_t \in \rset^d$,
  $v_{\theta^\star}(t, x_t) = \sigma_t^2 \CPELigne{\Pi_{T|t}}{\nabla \log
    \Qbb_{T|t}(\bfX_T|\bfX_t)}{\bfX_t = x_t}$ and
  $v_{\phi^\star}(t, x_t) = \sigma_t^2 \CPELigne{\Pi_{0|t}}{\nabla \log
    \Qbb_{t|0}(\bfX_t|\bfX_0)}{\bfX_t = x_t}$. In practice, this means that the
  Markovian projection can be computed in a forward \emph{or} backward fashion
  equivalently.

  Therefore, given a coupling $\Pi = \Pi^{2n}$, we can update \emph{both} $v_\theta$
  and $v_\phi$. This means that we train the forward and backward model
  \emph{jointly}. We then consider $\Mbb^{2n+1}_b$ associated with
  \eqref{eq:approximate_markovian_proj_backward} and $\Mbb^{2n+1}_f$ associated
  with \eqref{eq:approximate_markovian_proj_forward}. Note that if the families
  of functions $\ensembleLigne{v_\theta}{\theta \in \Theta}$ and
  $\ensembleLigne{v_\phi}{\theta \in \Phi}$ are rich enough then
  $\Mbb^{2n+1}_f = \Mbb^{2n+1}_b$.

  \paragraph{Mixture from forward and backward.} Once we have obtained both the
  forward update and the backward update, our next task is to define the new
  mixture of bridge $\Pi^{2n+1}$. In \Cref{alg:DSBM_general}, since we train
  only the \emph{backward} model $\Mbb^{2n+1}=\Mbb^{2n+1}_b$, we define
  $\Pi^{2n+1} = \Mbb_{0,T}^{2n+1} \Qbb_{|0,T}$. In the case of
  \emph{joint training}, we have access to $\Mbb^{2n+1}_b$ and
  $\Mbb^{2n+1}_f$. One way to define a new mixture of bridge is to compute
  $\Pi^{2n+1} = \tfrac{1}{2}(\Mbb_{b, 0,T}^{2n+1} \Qbb_{|0,T} + 
  \Mbb_{f, 0,T}^{2n+1}) \Qbb_{|0,T}$. This choice ensures that in the case
  where $\Mbb^{2n+1}_f = \Mbb^{2n+1}_b$ we have
  \begin{equation}
    \textstyle \Pi^{2n+1} = \Mbb_{f, 0,T}^{2n+1} \Qbb_{|0,T} = \Mbb_{b, 0,T}^{2n+1}  \Qbb_{|0,T}.
  \end{equation}
  It also ensures that all the steps in the \emph{joint} DSBM training
  algorithms are symmetric. We leave the study of an optimal combination of
  $\Mbb_f^{2n+1}$ and $\Mbb_b^{2n+1}$ for future work.

  \paragraph{Consistency loss.} In addition to the losses
  \eqref{eq:loss_forward_appendix} and \eqref{eq:loss_backward_appendix}, we
  also consider an additional \emph{consistency} loss. A similar idea
  was explored in \cite{song2022applying}. In DSB
  \citep{debortoli2021diffusion,chen2021likelihood} and DSBM, see
  \Cref{alg:DSBM_general}, the processes parameterized by $v_\theta$ (forward)
  and $v_\phi$ backward are identical \emph{only at equilibrium}. Thus imposing the forward and the backward processes match at each step of DSB or DSBM would lead to some bias. However, this is not the case in the
  \emph{joint training} setting. Indeed, in that case, we have
  $\Mbb^{2n+1}_f = \Mbb^{2n+1}_b$ if the families are rich enough. Therefore, we get that
  \begin{equation}
    \label{eq:approximate_markovian_proj_backward_appendix}
    \rmd \bfY_t = \{ -f_{T-t}(\bfY_t) + v_{\phi}(T-t, \bfY_t) \} \rmd t + \sigma_{T-t} \rmd \bfB_t , \qquad \bfY_0 \sim \pi_T , 
  \end{equation}
  is the time reversal of 
  \begin{equation}
    \label{eq:approximate_markovian_proj_forward_appendix}
    \rmd \bfX_t = \{ f_t(\bfX_t) +  v_{\theta}(t, \bfX_t) \} \rmd t + \sigma_t \rmd \bfB_t , \qquad \bfX_0 \sim \pi_0 .
  \end{equation}
  Computing the time-reversal of \eqref{eq:approximate_markovian_proj_backward_appendix}, we have
  \begin{equation}
    \label{eq:approximate_markovian_proj_backward_backward_appendix}
    \rmd \bfX_t = \{ f_t(\bfX_t) -  v_{\phi}(t, \bfX_t) + \sigma_t^2 \nabla \log \Pi^{2n}_t(\bfX_t) \} \rmd t + \sigma_t \rmd \bfB_t , \qquad \bfX_0 \sim \pi_0 .
  \end{equation}
  Identifying \eqref{eq:approximate_markovian_proj_backward_backward_appendix} and \eqref{eq:approximate_markovian_proj_forward_appendix}, we get that for any $t \in \ccint{0,T}$ and $x_t \in \rset^d$
  \begin{equation}
    \label{eq:identity_score}
    v_\theta(t, x_t) = -  v_{\phi}(t, x_t) + \sigma_t^2 \nabla \log \Pi^{2n}_t(x_t) . 
  \end{equation}
  We highlight that letting $\sigma_t \to 0$ for any $t \in \ccint{0,T}$, we get
  that $v_\theta = -v_\phi$, which confirms that the time-reversal of an ODE is
  simply given by flipping the sign of the velocity. Therefore, we propose the
  following loss which links the parameters $\theta$ and $\phi$
  \begin{equation}
    \textstyle \mathcal{L}_{\mathrm{cons}}(\theta, \phi) = \int_0^T \expeLigne{\Pi_t^{2n}}{\normLigne{v_\theta(t, \bfX_t) +   v_{\phi}(t, \bfX_t) - \sigma_t^2 \nabla \log \Pi^{2n}_t(\bfX_t)}^2} /\sigma_t^2 \rmd t . 
  \end{equation}
  Leveraging tools from implicit score matching \citep{hyvarinen2005estimation} and the divergence theorem, we get that 
  \begin{equation}
    \label{eq:consistency_loss}
    \textstyle \mathcal{L}_{\mathrm{cons}}(\theta, \phi) = \int_0^T \expeLigne{\Pi_t^{2n}}{\normLigne{v_\theta(t, \bfX_t) +   v_{\phi}(t, \bfX_t)}^2 /\sigma_t^2  + 2 \mathrm{div}(v_\theta(t, \bfX_t) +   v_{\phi}(t, \bfX_t)) } \rmd t + C , 
  \end{equation}
  where $C \geq 0$ is a constant which does not depend on $\theta$ and $\phi$.
  Alternatively, \eqref{eq:identity_score} shows that 
    \begin{equation}
      \nabla \log \Pi_t^{2n}(x_t) = \E_{\Pi_{T|t}^{2n}}[\nabla \log \Qbb_{T|t}|\bfX_t = x_t] + \E_{\Pi_{0|t}^{2n}}[\nabla \log \Qbb_{t|0}|\bfX_t = x_t] . 
    \end{equation}
  We thus also have the following denoising score matching consistency loss
  \begin{equation}
    \label{eq:consistency_loss_dsm}
    \textstyle \mathcal{L}_{\mathrm{cons}}(\theta, \phi) = \int_0^T \expeLigne{\Pi_{0,t,T}^{2n}}{\normLigne{v_\theta(t,\bfX_t) + v_\phi(t,\bfX_t) - \sigma_t^2 ( \nabla
    \log \Qbb_{T|t}(\bfX_T|\bfX_t) + \nabla \log \Qbb_{t|0}(\bfX_t|\bfX_0))}^2} /\sigma_t^2 \rmd t  , 
  \end{equation}

  The advantage of this DSM loss is that it does not rely on any divergence computation.
  
  Below, we recall the two losses used to estimate the Markovian projection
  \eqref{eq:loss_forward_appendix} and \eqref{eq:loss_backward_appendix}
  \begin{align}    
        &\textstyle \mathcal{L}(\theta) = \int_0^T \expeLigne{\Pi_{t,T}}{\normLigne{\sigma_t^2 \nabla
    \log \Qbb_{T|t}(\bfX_T|\bfX_t) - v_\theta(t,\bfX_t)}^2} /\sigma_t^2 \rmd t  ,  \label{eq:loss_forward_appendix_comp} \\
    & \textstyle \mathcal{L}(\phi) = \int_0^T \expeLigne{\Pi_{t,0}}{\normLigne{\sigma_t^2 \nabla
    \log \Qbb_{t|0}(\bfX_t|\bfX_0) - v_\phi(t,\bfX_t)}^2} /\sigma_t^2 \rmd t  . \label{eq:loss_backward_appendix_comp}
  \end{align}
  The complete loss we consider in the joint training of the algorithm is of the form
  \begin{equation}
    \label{eq:total_loss}
    \mathcal{L}_\lambda(\theta, \phi) = \mathcal{L}(\theta) + \mathcal{L}(\phi) + \lambda \mathcal{L}_{\mathrm{cons}}(\theta, \phi) ,
  \end{equation}
  where $\lambda > 0$ is an additional regularization parameter. We now
  state a version of DSBM which performs \emph{joint training} in
  \Cref{alg:joint_training}. 

  \begin{algorithm}[H]
    \caption{Diffusion Schr\"odinger Bridge Matching (Joint Training)}
    \label{alg:joint_training}
\begin{algorithmic}[1]
\STATE{\textbf{Input:} Coupling $\Pi_{0,T}^0$, tractable bridge $\Qbb_{|0,T}$, $N \in \nset$}
\STATE{Let $\Pi^0 = \Pi_{0,T}^0 \Qbb_{|0,T}$.}
\FOR{$n \in \{0, \dots, N-1\}$}
\STATE Learn $v_{\phi^\star}, v_{\theta^\star}$ using \eqref{eq:total_loss} with $\Pi = \Pi^{n}$.
  \STATE Let $\Mbb^{n+1}_f$ be given by \eqref{eq:approximate_markovian_proj_forward}. 
  \STATE Let $\Mbb^{n+1}_b$ be given by \eqref{eq:approximate_markovian_proj_backward}. 
  \STATE Let $\Mbb^{n+1} = \tfrac{1}{2}(\Mbb^{n+1}_f + \Mbb^{n+1}_b)$. 
  \STATE Let $\Pi^{n+1} = \Mbb^{n+1}_{0,T} \Qbb_{|0,T} $. 
 \ENDFOR
  \STATE \textbf{Output:} $v_{\theta^\star}$, $v_{\phi^\star}$
\end{algorithmic}
\end{algorithm}

\section{Loss Scaling}
\label{sec:loss_scaling_appendix}
Similar to the loss weighting in standard diffusion models \citep{song2020score,ho2020denoising}, we derive a similar weighting to reduce the variance of our objective. We focus on the forward direction of Markovian projection in this case, and the backward case can be derived similarly. 
Our forward loss in the DSBM framework is given by \eqref{eq:loss_function}, where the inner expectation is given by  
\begin{equation}
    \expeLigne{\Pi_{t,T}}{\normLigne{\sigma_t^2 \nabla
    \log \Qbb_{T|t}(\bfX_T|\bfX_t) - v_\theta(t,\bfX_t)}^2}. 
\end{equation}
Letting $\Qbb_{|0,T}$ be a Brownian bridge with diffusion parameter $\sigma$ and assuming $T=1$, this becomes
\begin{equation}
    \expeLigne{(\mathbf{X}_0, \mathbf{X}_1) \sim \Pi_{0,1}, \bfZ \sim \mathcal{N}(0, \Id)}{\normLigne{ \mathbf{X}_1 - \mathbf{X}_0 - \sigma \sqrt{t/(1-t)} \bfZ - v_\theta(t, \bfX_t^{0,T})}^2}
\end{equation}
with $\bfX_t^{0,T} = t \mathbf{X}_1 + (1-t) \mathbf{X}_0 + \sigma \sqrt{t(1-t)} \bfZ$. When $t \approx 1$, we see that the regression target is dominated by the noise term $\sigma \sqrt{t/(1-t)} \bfZ$ which needs to be predicted based on information contained within $\bfX_t^{0,T}$. 
The loss will have an approximate scale of $\sigma^2 t/(1-t)$ when $t\approx 1$ which will be very large. To avoid these large values affecting gradient descent, we can downweight the loss by $1 + \sigma^2 t/(1-t)$ (we can add $1$ to effectively cause no loss scaling when $t$ is close to $0$)
\begin{equation}
    (1 + \sigma^2 t/(1-t))^{-1} \expeLigne{(\mathbf{X}_0, \mathbf{X}_1) \sim \Pi_{0,1}, \bfZ \sim \mathcal{N}(0, \Id)}{\normLigne{ \mathbf{X}_1 - \mathbf{X}_0 - \sigma \sqrt{t/(1-t)} \bfZ - v_\theta(t, \bfX_t^{0,T})}^2}. 
\end{equation}
Similar arguments can be applied to the backward loss \eqref{eq:loss_function_backward}
\begin{align}
    &\expeLigne{\Pi_{t,0}}{\normLigne{\sigma_t^2 \nabla
    \log \Qbb_{t|0}(\bfX_t|\bfX_0) - v_\phi(t,\bfX_t)}^2}\\
    &=\expeLigne{(\mathbf{X}_0, \mathbf{X}_1) \sim \Pi_{0,1}, \bfZ \sim \mathcal{N}(0, \Id)}{\normLigne{ \mathbf{X}_0 - \mathbf{X}_1 - \sigma \sqrt{(1-t)/t} \bfZ - v_\phi(t, \bfX_t^{0,T})}^2} ,
\end{align}
which we then downweight by $1 + \sigma^2 (1-t)/t$
\begin{equation}
    (1 + \sigma^2 (1-t)/t)^{-1} \expeLigne{(\mathbf{X}_0, \mathbf{X}_1) \sim \Pi_{0,1}, \bfZ \sim \mathcal{N}(0, \Id)}{\normLigne{ \mathbf{X}_0 - \mathbf{X}_1 - \sigma \sqrt{(1-t)/t} \bfZ - v_\phi(t, \bfX_t^{0,T})}^2}.
\end{equation}

\section{Experiments}
\label{sec:exp_detail_appendix}
In this section, we present further details of the experiment setups as well as further experiment results. 
In all experiments, we use Brownian motion for the reference measure $\Qbb$ with corresponding Brownian bridge \eqref{eq:interpolation_stochastic} and $T=1$. 
We use the Adam optimizer with learning rate ${10}^{-4}$ and SiLU activations unless specified otherwise. The experiments are run on computing clusters with a mixture of both CPU and GPU resources. 

\subsection{2D Experiments}
For the 2D experiments, we closely follow \cite{tong2023conditional} and the released code\footnote{\url{https://github.com/atong01/conditional-flow-matching} (code released under MIT license)}, and use the same synthetic datasets and the 2-Wasserstein distance between the test set and samples simulated using probability flow ODE as the evaluation metric. However, we use 10000 samples in the test set since we find the 2-Wasserstein distance can vary greatly with only 1000 samples (which can be as high as 0.3 even between two set of samples both drawn from the ground truth distribution). We use a simple MLP with 3 hidden layers and 256 hidden units to parameterize the forward and backward drift networks. We use batch size 128 and 20 diffusion steps with uniform schedule at sampling time. Each outer iteration is trained for 10000 steps and we train for 20 outer iterations. 
As the initial coupling is more optimal in DSBM-IMF+, we reduce the number of outer iterations to 4 and train for 50000 steps in each outer iteration. 
For flow methods, we train for 200000 steps in total. For Table \ref{tab:2d_result} we use $\sigma_t=1$ in all cases, except the \textit{moons-8gaussians} dataset where we use $\sigma_t=5$.
Note that FM cannot be used for the \textit{moons-8gaussians} task since it requires a Gaussian source, but CFM is applicable. 
The experiments are run using 1 CPU and take approximately 200 minutes (for both training and testing).

\subsubsection{Variance of the reference measure $\mathbb{Q}$}
We comment further on the effect of $\sigma_t$ in the reference path measure $\mathbb{Q}$. We assume a time-homogeneous $\sigma_t=\sigma$ for simplicity. In \Cref{fig:2d_sigma_paths}, we vary $\sigma$ and visualize the learned transport for a \textit{3gaussians} problem of transporting between two Gaussian mixtures. In Table \ref{tab:2d_sigma_metrics_wasserstein2} we show the 2-Wasserstein distance between the test set and generated samples for this \textit{3gaussians} problem as well as the \textit{moons-8gaussians} problem. We find that large values of $\sigma$ result in increasingly curved transport paths, and correspondingly reduced performance when $\sigma$ is excessively large. Conversely, we also find reduced performance when $\sigma$ is excessively small. 
We conjecture this is due to increased optimization difficulty and bias accumulation. 
Firstly, the EOT problem becomes more difficult to solve as $\sigma$ is taken to $0$, which would require a higher number of outer iterations.
Further, the introduction of noise also decreases optimization difficulty by smoothing the intermediate marginals between the two terminal distributions. The benefit of setting $\sigma >0$ and using a stochastic sampler was also observed in \cite{albergo2023stochastic,delbracio2023inversion}. Finally, we conjecture setting $\sigma >0$ could also increase the diversity of sampled couplings and may alleviate some bias accumulation issues in the outer iterations. When $\sigma = 0$, these issues result in the artifacts observed in the transferred samples (marked using yellow points) in \Cref{fig:2d_sigma_paths}. The appropriate value for $\sigma$ depends on the spatial scaling of the problem as shown in Table \ref{tab:2d_sigma_metrics_wasserstein2}, where the optimum $\sigma$ is larger for the larger scale \textit{moons-8gaussians} problem.

\begin{table}[h]
    \centering
    \scalebox{0.7}{
    \begin{tabular}{cccc}
    \toprule 
    \multicolumn{2}{c}{3gaussians} & \multicolumn{2}{c}{moons-8gaussians} \tabularnewline
    \midrule
    $\sigma$ & \textit{2-Wasserstein} & $\sigma$ & \textit{2-Wasserstein} \tabularnewline
    \midrule
    \midrule 
    $\sigma=0$ & 0.646\textpm 0.028 & $\sigma=0$ & 1.459\textpm 0.008\tabularnewline
    \midrule 
    $\sigma=0.1$ & 0.724\textpm 0.039  & $\sigma=1.0$ & 1.285\textpm 0.346 \tabularnewline
    \midrule 
    $\sigma=0.3$ & 0.546\textpm 0.169 & $\sigma=2.0$ & \emph{0.916\textpm 0.292}   \tabularnewline
    \midrule 
    $\sigma=1.0$ & \textbf{0.439\textpm 0.072} & $\sigma=4.0$ & \textbf{0.818\textpm 0.249} \tabularnewline
    \midrule
    $\sigma=3.0$ & \emph{0.543\textpm 0.078}  & $\sigma=8.0$ & 0.989\textpm 0.179  \tabularnewline
    \bottomrule
    \end{tabular}
    }
    \vspace{0.25cm}
    \caption{2-Wasserstein distance for varying value of $\sigma$ used in the DSBM-IMF method.}
    \label{tab:2d_sigma_metrics_wasserstein2}
    \vspace{-0.6cm}
\end{table}

\subsection{Gaussian Experiment}
Similar to the 2D experiments, we use a simple MLP with 2 hidden layers and 256 hidden units to parameterize the forward and backward drift networks. This is a smaller network compared to the ``large'' network in \cite{debortoli2021diffusion}. We use batch size 128 and 20 diffusion sampling steps with uniform time schedule at inference time. Each outer iteration is trained for 10000 steps and we train for 20 outer iterations. The experiments are also run using 1 CPU, and for the case $d=50$ finish in approximately 200 minutes. We note that DSB, IMF-b, and DSBM methods all solve for the \schro Bridge transport, whereas Rectified Flow does not and so is not plotted on the covariance convergence plot in \Cref{fig:gaussian_ipf_convergence} since it is not comparable. 

For \Cref{tab:gaussian_avg_kl}, we assume the marginals of the learned process $\P_t$ are also independently Gaussian distributed in each dimension. We thus estimate the KL divergence from the sample mean and variance of each dimension of $\P_t$ using the analytic KL formula between Gaussian distributions.

\subsection{MNIST Transfer Experiment} \label{subsec:mnist_appendix}
We follow \cite{debortoli2021diffusion} closely for the setup of this experiment. We use the set of first 5 letters of EMNIST (A-E, a-e) such that both domains have 10 classes in total. We use the same U-Net architecture, batch size 128 and 30 diffusion sampling steps. The network size is approximately 6.6 million parameters. Each outer iteration is trained for 5000 steps. We refresh the cache dataloader 
every 1000 steps with 10000 new samples. 
Contrary to \cite{debortoli2021diffusion}, in our experiments we find that we obtain better sampling quality for both DSB and DSBM using a uniform noising schedule. We simply choose $\sigma_t=1$ for all $t$ and $T=1$ in our experiments. 

For DSBM-IPF, we train for at most 50 outer iterations (i.e. 250000 total number of steps). For DSBM-IMF and Rectified Flow, since their first iteration corresponds to Bridge Matching and Flow Matching respectively, we first pretrain the forward and backward networks for 100000 steps using the Bridge Matching or Flow Matching losses. DSBM-IMF then switches to iterative training with 5000 steps per outer iteration, the same as DSBM-IPF. For Rectified Flow, we train for 50000 steps per outer iteration. The experiments are performed using 2 GPUs and take approximately one day.

We provide further experiment results in Figures \ref{fig:mnist_backward_samples_comparison}, \ref{fig:mnist_backward_trajectory} and \ref{fig:mnist_forward_trajectory}. 
In \Cref{fig:mnist_backward_samples_comparison}, we show samples generated using different algorithms and at different points of convergence. Samples generated using CFM and Rectified Flow with 2 rectified steps in (a) and (b) appear to be less clear and identifiable. OT-CFM improves upon CFM slightly in (c), but many samples still appear to be unclear. 
For DSB, the algorithm has not converged after 10 iterations, and many samples in (d) still appear to be letters. After 20 iterations, there are still letter-like samples in \Cref{fig:mnist_backward_samples_dsb20}, and the digit classes also appear to be unbalanced with many instances of digit `0'. After 30 iterations, however, the sample quality of DSB becomes very poor in (e). On the other hand, as shown in (f)-(j), we observe that DSBM-IPF converges faster than DSB, with more accurate samples even in iterations 10 and 20, and the sample quality continues to improve until the end of training after 50 iterations. 

We present some additional trajectory samples at the end of training in Figures \ref{fig:mnist_backward_trajectory} and \ref{fig:mnist_forward_trajectory} in both the forward and backward directions. We observe DSBM is able to transfer samples across the two domains faithfully, and the output samples preserve interesting similarities compared to the input, whereas Bridge Matching preserves much less similarity. The Mean Squared Distance (MSD) between the initial and final samples also confirm that DSBM methods transfer more closely to the original inputs, and DSBM-IMF further achieves the best FID score out of the three methods. 

\begin{figure}
    \centering

    \subfloat[CFM]{
    \includegraphics[width=0.19\linewidth]{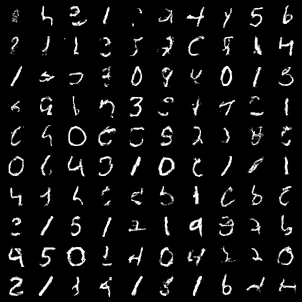} 
    }
    \subfloat[Rectified Flow @ 3]{
    \includegraphics[width=0.19\linewidth]{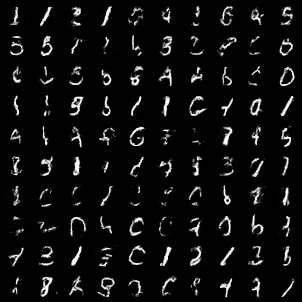} 
    \label{fig:mnist_backward_samples_rf_appendix}
    }
    \subfloat[OT-CFM]{
    \includegraphics[width=0.19\linewidth]{Figs/MNIST_EMNIST/CFM/OTCFM250000b.png} 
    } 
    \subfloat[DSB @ 10]{
    \includegraphics[width=0.19\linewidth]{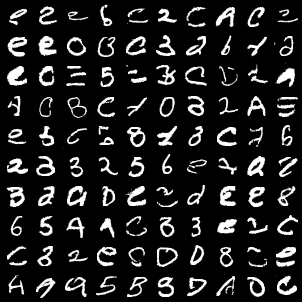} 
    }
    \subfloat[DSB @ 30]{
    \includegraphics[width=0.19\linewidth]{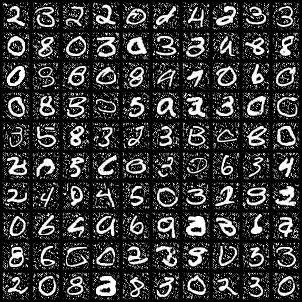} 
    }

    \subfloat[DSBM-IPF @ 10]{
    \includegraphics[width=0.19\linewidth]{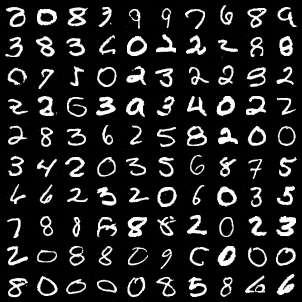} 
    }
    \subfloat[DSBM-IPF @ 20]{
    \includegraphics[width=0.19\linewidth]{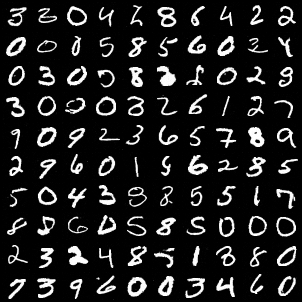} 
    }
    \subfloat[DSBM-IPF @ 30]{
    \includegraphics[width=0.19\linewidth]{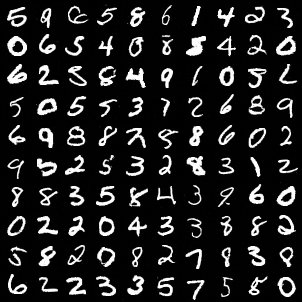} 
    }
    \subfloat[DSBM-IPF @ 40]{
    \includegraphics[width=0.19\linewidth]{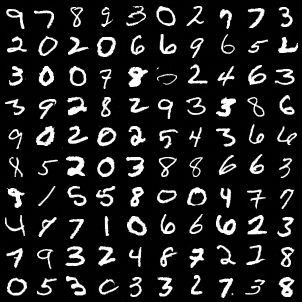} 
    }
    \subfloat[DSBM-IPF @ 50]{
    \includegraphics[width=0.19\linewidth]{Figs/MNIST_EMNIST/DSBM/DSBM50b.png} 
    }
    \caption{Samples of MNIST digits transferred from the EMNIST letters using different methods. @ indicates the progressed number of outer iterations $n+1$. }
    \label{fig:mnist_backward_samples_comparison}
    \vspace{-0.2cm}
\end{figure}

\begin{figure}
    \centering
    \subfloat[Bridge Matching]{
    \includegraphics[width=0.2\linewidth]{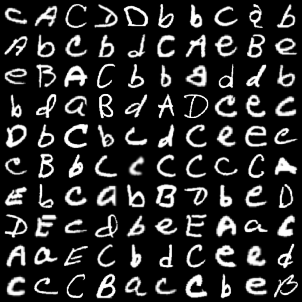} 
    \hfill
    \includegraphics[width=0.2\linewidth]{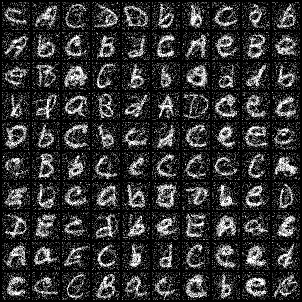} 
    \hfill
    \includegraphics[width=0.2\linewidth]{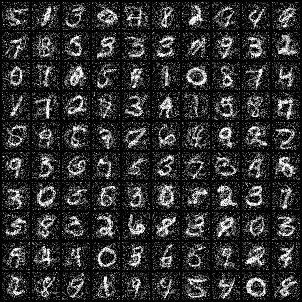} 
    \hfill
    \includegraphics[width=0.2\linewidth]{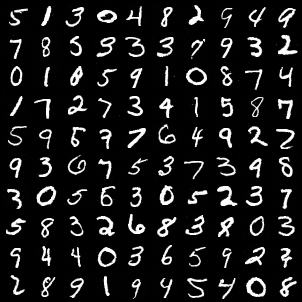} 
    \hfill
    \raisebox{0.1\linewidth}{
    \small
    \begin{tabular}{c|c}
        FID & 17.14 \\
        MSD & 0.579
    \end{tabular}
    }
    }
    
    \subfloat[DSBM-IPF]{
    \includegraphics[width=0.2\linewidth]{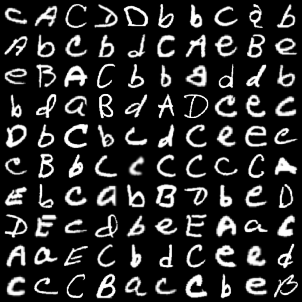} 
    \hfill
    \includegraphics[width=0.2\linewidth]{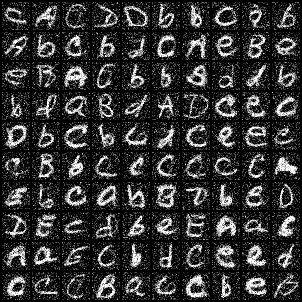} 
    \hfill
    \includegraphics[width=0.2\linewidth]{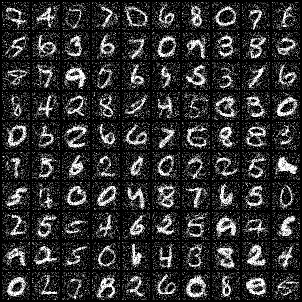} 
    \hfill
    \includegraphics[width=0.2\linewidth]{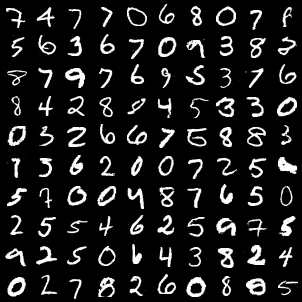} 
    \hfill
    \raisebox{0.1\linewidth}{
    \small
    \begin{tabular}{c|c}
        FID & 15.27 \\
        MSD & \textbf{0.354}
    \end{tabular}
    }
    }
    
    \subfloat[DSBM-IMF]{
    \includegraphics[width=0.2\linewidth]{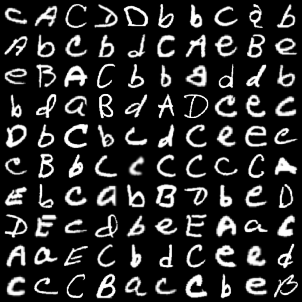} 
    \hfill
    \includegraphics[width=0.2\linewidth]{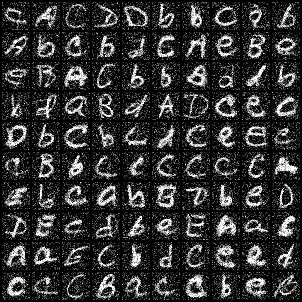} 
    \hfill
    \includegraphics[width=0.2\linewidth]{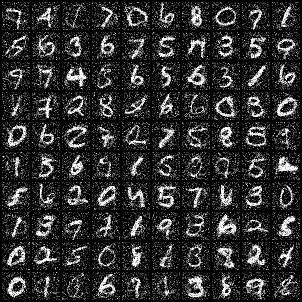} 
    \hfill
    \includegraphics[width=0.2\linewidth]{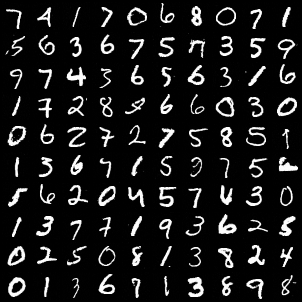} 
    \hfill
    \raisebox{0.1\linewidth}{
    \small
    \begin{tabular}{c|c}
        FID & \textbf{10.59} \\
        MSD & 0.375
    \end{tabular}
    }
    }
    \caption{Left: EMNIST to MNIST sample trajectory with 30 diffusion steps at $t=0,1/3,2/3,1$. Right: FID score of final samples, and Mean Squared Distance between initial and final samples. }
    
    \label{fig:mnist_backward_trajectory}

    \centering
    \subfloat[Bridge Matching]{
    \includegraphics[width=0.2\linewidth]{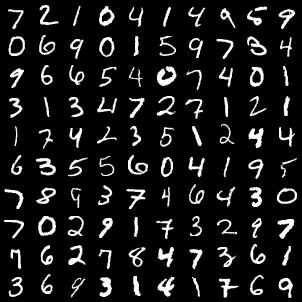} 
    \hfill
    \includegraphics[width=0.2\linewidth]{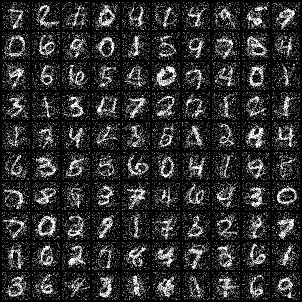} 
    \hfill
    \includegraphics[width=0.2\linewidth]{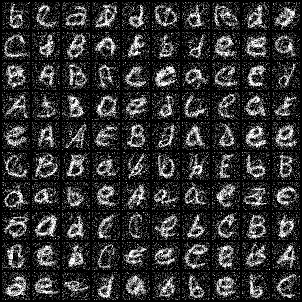} 
    \hfill
    \includegraphics[width=0.2\linewidth]{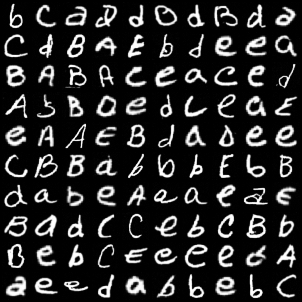} 
    \hfill
    \raisebox{0.1\linewidth}{
    \small
    \begin{tabular}{c|c}
        FID & 9.402 \\
        MSD & 0.561
    \end{tabular}
    }
    }
    
    \subfloat[DSBM-IPF]{
    \includegraphics[width=0.2\linewidth]{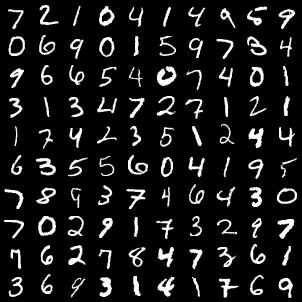} 
    \hfill
    \includegraphics[width=0.2\linewidth]{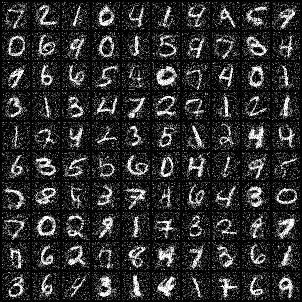} 
    \hfill
    \includegraphics[width=0.2\linewidth]{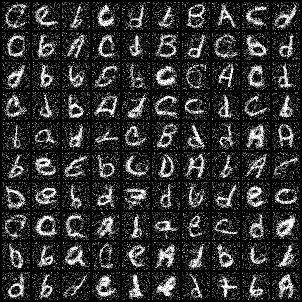} 
    \hfill
    \includegraphics[width=0.2\linewidth]{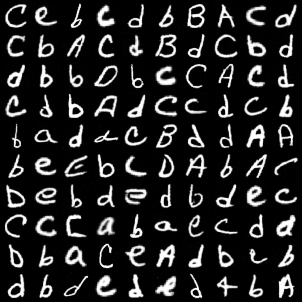} 
    \hfill
    \raisebox{0.1\linewidth}{
    \small
    \begin{tabular}{c|c}
        FID & 13.59 \\
        MSD & \textbf{0.359}
    \end{tabular}
    }
    }
    
    \subfloat[DSBM-IMF]{
    \includegraphics[width=0.2\linewidth]{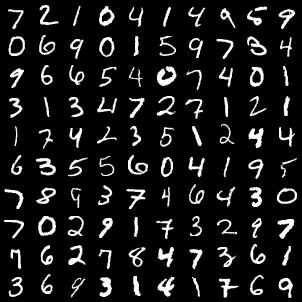} 
    \hfill
    \includegraphics[width=0.2\linewidth]{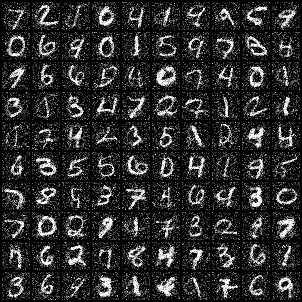} 
    \hfill
    \includegraphics[width=0.2\linewidth]{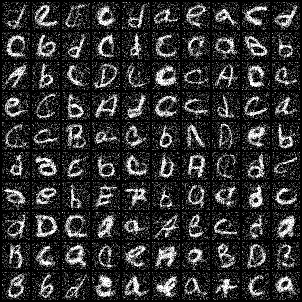} 
    \hfill
    \includegraphics[width=0.2\linewidth]{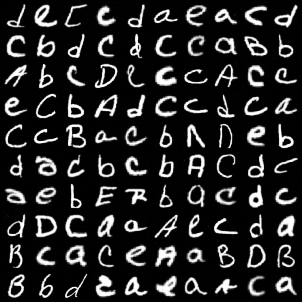} 
    \hfill
    \raisebox{0.1\linewidth}{
    \small
    \begin{tabular}{c|c}
        FID & \textbf{8.990} \\
        MSD & 0.375
    \end{tabular}
    }
    }
    \caption{Left: MNIST to EMNIST sample trajectory with 30 diffusion steps at $t=0,1/3,2/3,1$. Right: FID score of final samples, and Mean Squared Distance between initial and final samples. }
    \label{fig:mnist_forward_trajectory}
\end{figure}

\begin{figure}
  \centering
  \includegraphics[width=0.48\linewidth]{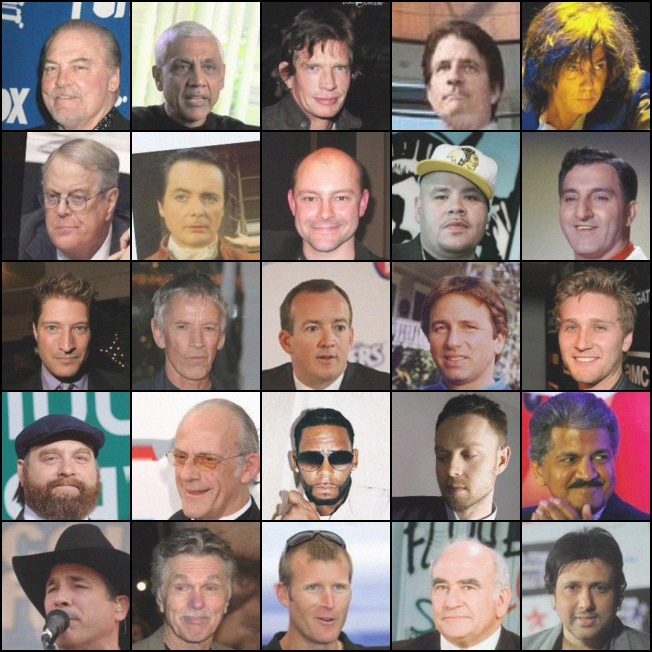}  \hfill
  \includegraphics[width=0.48\linewidth]{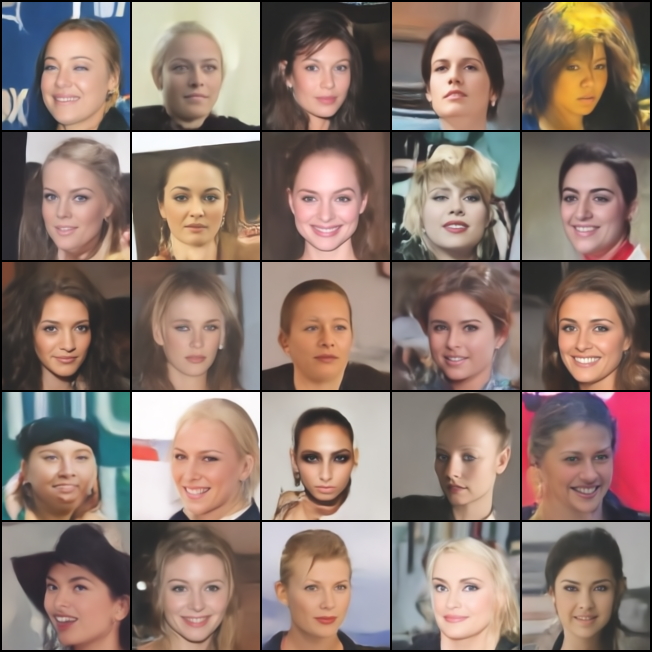} 
  \vspace{.3cm}
  \includegraphics[width=0.48\linewidth]{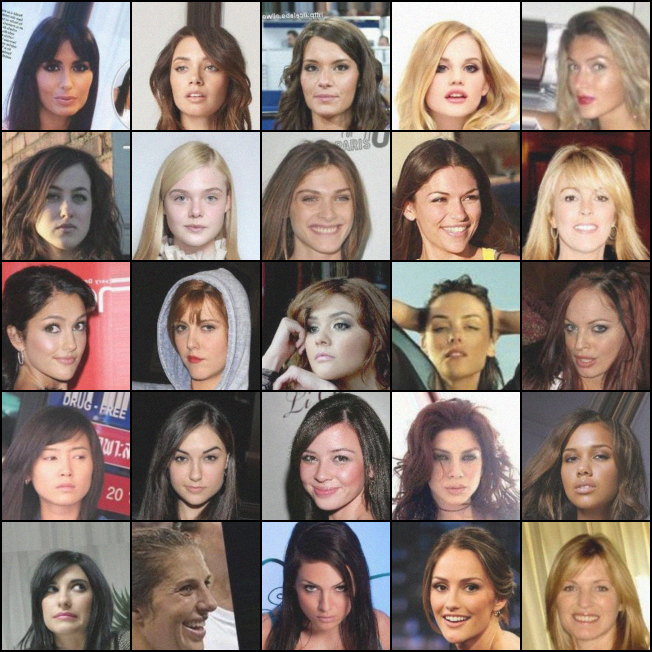}  \hfill
  \includegraphics[width=0.48\linewidth]{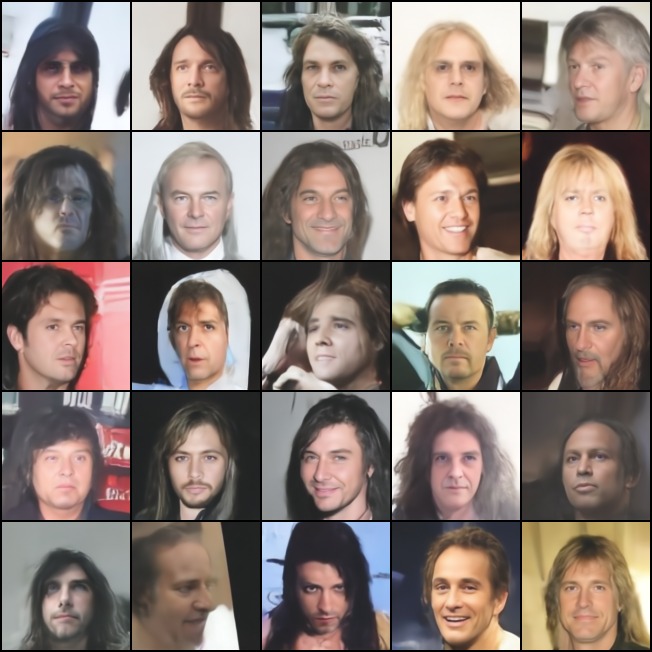}  
  \caption{Transfer results between images given by the tokens   \texttt{female}/\texttt{young} and \texttt{male}/\texttt{old}. Top row: original images (left) and generated images (right). Bottom row: original images (left) and generated images (right).}
  \label{fig:celeba_128}
\end{figure}

\subsection{CelebA Transfer Experiment}
\label{sec:celeba-transf-exper}

For CelebA \citep{liu2015faceattributes}, we test DSBM on resized $64\times64$
and $128\times128$ resolution images.  We evaluate the dependency of the results
with respect to $\sigma >0$ on CelebA $64 \times 64$. We also showcase the
scalibility of our method by training DSBM on CelebA $128 \times 128$. In both
cases, the dataset is split between \texttt{male/old} and
\texttt{female/young}. We gather 20000 samples of each class on which we perform
classical data augmentation such as horizontal flipping. 

For our ablation study w.r.t. the value of $\sigma > 0$, we run DSBM for 20
iterations (note that the loss and the quality was still improving after 20
DSBM iterations but by stopping the runs early we were allowed to draw
comparisons with more values of $\sigma$). 
We use a U-Net architecture with 4 resolution levels and 2 residual blocks per resolution level. The batch size is fixed to 64 and the
EMA rate is fixed to 0.999. 
We refresh the cache dataloader 
every 500 steps with 10000 new samples. 
The SDE sampler is chosen to be the modified Euler-Maruyama
sampler, see \cite{heng2021simulating} for instance, with a  constant schedule for
the stepsizes. We use 100 sampling steps
at inference time and to refresh the cache. For each outer DSBM iteration we
train the model for 20000 iterations.

We provide additional transfer results in resolution $128\times128$ in
\Cref{fig:celeba_128}. We do not change the training setting for this
experiment. 

\subsection{AFHQ Transfer Experiment}
For AFHQ \citep{choi2020starganv2}, we test DSBM between classes \texttt{cat} and \texttt{wild} with $512 \times 512$ resolution images. Each class contains approximately 5000 samples. We first pretrain the networks using Bridge Matching for 100000 steps, then run DSBM for 20 iterations with 25000 steps per outer iteration. We follow \cite{liu2022flow} and use the same U-Net architecture\footnote{\url{https://github.com/gnobitab/RectifiedFlow/blob/main/ImageGeneration/configs/rectified_flow/afhq_cat_pytorch_rf_gaussian.py} (code released under Apache-2.0 license)}.  The batch size is 4 and the EMA rate is 0.999. We choose $\sigma^2=5$ and again we use 100 sampling steps with constant stepsizes.

\subsection{CIFAR-10 Generative Modeling Experiment}
\label{subsec:cifar10_appendix}
We also test our method in the standard generative modeling framework on the CIFAR-10 dataset. We again use a U-Net architecture with 4 resolution levels and 2 residual blocks per resolution level. The network size is approximately 39.6 million parameters. The batch size is fixed to 128 and the
EMA rate is fixed to 0.9999. The AdamW optimizer is used for this task. 
For DSBM-IMF and Rectified Flow, again we first pretrain the networks using the Bridge Matching or Flow Matching losses, then switch to DSBM-IMF or RF training with 100000 steps per outer iteration. We find that 1 or 2 additional outer iterations appear sufficiently effective on this task, and additional outer iterations can cause sample quality to drop. For our main experiment, the pretraining stage ran for approximately 6 days, and DSBM-IMF ran for approximately 4 additional days using 4 V100 GPUs. 

The best results during training for each method are reported in \Cref{tab:fid_cifar10_appendix} and \Cref{fig:fid_nfe_cifar10_appendix}, where we compute the FID score between 50000 samples in the CIFAR-10 training set and 50000 generated samples following standard practice for this task. We observe that DSBM-IMF can clearly improve upon Bridge Matching at the same value of $\sigma$, which suggests that further outer iterations in DSBM is beneficial for improving sample quality. This is contrary to Rectified Flow, which causes the FID score to worsen compared to Flow Matching after only 1 rectified iteration. However, as $\sigma$ increases, we observe the FID score worsens for both Bridge Matching and DSBM as more stochasticity is introduced in the sampler. The best result of DSBM-IMF is obtained using $\sigma^2=0.2$ and is slightly better than FM (i.e. with $\sigma^2=0$) using 100 Euler steps. On the other hand, using the dopri5 ODE solver, FM achieves a FID of 4.055 with on average 148 integration steps. 
In \Cref{fig:fid_nfe_cifar10_appendix}, we observe that both RF and DSBM-IMF are very effective in improving sampling quality at low number of diffusion steps, i.e. low number of function evaluations (NFEs), compared to Bridge and Flow Matching as well as OT-CFM which improves upon CFM slightly. DSBM-IMF also achieves lower FID score than RF as the NFEs are taken higher. 
Additional strategies such as distillation and fast SDE solvers can also be useful for improving few-step sampling quality further.

\begin{figure}
    \centering
    \begin{minipage}[b]{.31\linewidth}
        \centering
        \scalebox{0.8}{
        \begin{tabular}{ccc}
            \toprule
            $\sigma^2$ & FM & RF \\
            \midrule
            0 & 4.931 & 6.010 \\
            \midrule
            \midrule
            $\sigma^2$  & BM & DSBM-IMF \\
            \midrule
            0.2 & 5.427 & \textbf{4.511} \\
            \midrule
            0.5 & 8.274 & 6.896 \\
            \midrule
            1 & 12.749 & 9.881 \\
            \bottomrule
        \end{tabular}
        }
    
        \captionof{table}{FID results on the CIFAR-10 train set using 100 Euler(-Maruyama) steps.}
        \label{tab:fid_cifar10_appendix}
    \end{minipage}
    \qquad
    \begin{minipage}[b]{.3\linewidth}
        \centering
        \includegraphics[width=0.95\linewidth]{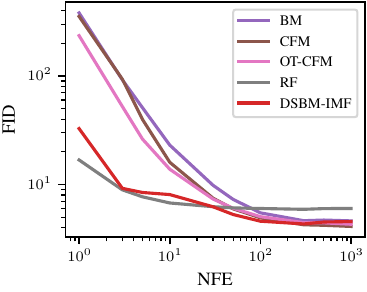}
        \caption{FID vs number of diffusion steps (NFE) with NFE between 1 and 1000. }
        \label{fig:fid_nfe_cifar10_appendix}
    \end{minipage}
    \quad
    \begin{minipage}[b]{.3\linewidth}
        \centering
        \includegraphics[width=0.95\linewidth]{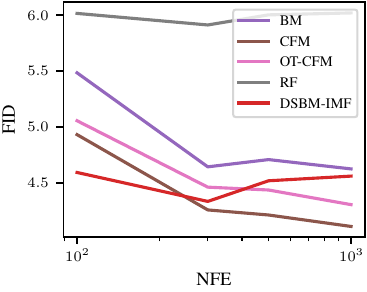}
        \caption{Zoomed-in version of \Cref{fig:fid_nfe_cifar10_appendix} with NFE between 100 and 1000. }
        \label{fig:fid_nfe_cifar10_high_appendix}
    \end{minipage}
    \vspace{-0.4cm}
\end{figure}

\subsection{Fluid Flows Experiment}
We use the fluid flows dataset\footnote{\url{https://github.com/CliMA/CliMADatasets.jl} (code released under MIT license)} from \cite{bischoff2023unpaired}. The dataset consists of unpaired low ($64 \times 64$) and high ($512 \times 512$) resolution fields, as well as a context field with local information for the high resolution field. The data fields consist of two channels representing supersaturation and vorticity. The context field is dependent on the wavenumber $k_x=k_y \in \{1,2,4,8,16\}$, which specifies the frequency of the saturation specific humidity modulation. We follow \cite{bischoff2023unpaired} for data processing and network architecture, which is given by a U-Net with an additional spatial mean bypass network given by an MLP. 
The network size is approximately 11.3 million parameters in total. We train using batch size 4, learning rate $2 \times {10}^{-4}$, for 5000 steps per outer iteration and $N=12$ outer iterations. 
We refresh the cache dataloader 
every 2500 steps with 1250 new samples. 
The training takes approximately 20 hours on a single RTX GPU. We used $\sigma^2=0.3$ and sample using 30 diffusion steps without finetuning these parameters. 
For the Diffusion-fb method in \cite{bischoff2023unpaired}, we use the released code\footnote{\url{https://github.com/CliMA/diffusion-bridge-downscaling} (code released under Apache-2.0 license)} without modifying any parameters. 

We visualize intermediate and final reconstruction samples for different algorithms in \Cref{fig:downscaler_vis_others}. We see that DSBM-IPF and DSBM-IMF provide consistent samples with the low resolution source, whereas  Diffusion-fb and Bridge Matching produce dissimilar samples. 
We also follow \cite{bischoff2023unpaired} for a more refined statistical analysis in Figures \ref{fig:downscaler_pixel_kde_appendix}, \ref{fig:downscaler_spectra_appendix}, \ref{fig:downscaler_mean_kde_appendix}. 
DSBM-IMF achieves comparable performance as Diffusion-fb in terms of these statistical profiles, and can be comparatively more accurate e.g.\ in the tails of the distributions in \Cref{fig:downscaler_pixel_kde_appendix}, and for the case $k_x=k_y=4$ in \Cref{fig:downscaler_spectra_appendix} for which the power spectrum of supersaturation is correctly captured by DSBM-IMF but not by other methods. Comparing this analysis with \Cref{fig:downscaler_l2}, DSBM-IMF is also significantly more accurate in terms of conditional consistency than Diffusion-fb. On the other hand, DSBM-IPF appears less accurate in terms of these unconditional statistics than Diffusion-fb and DSBM-IMF, but achieves lower $\ell_2$ distances from the input sources in \Cref{fig:downscaler_l2}. This suggests that DSBM-IPF and DSBM-IMF exhibit different empirical biases before convergence, and DSBM-IMF is more preferable when the accuracy of the samples are important. This is in line with IMF theory as the marginals $\pi_0, \pi_T$ are preserved in IMF but not in IPF.

\begin{figure}
    \centering
    \subfloat[Low-res]{
    \includegraphics[width=1.4cm]{Figs/Downscaler/k=2/dsbm-ipf/clip_im_grid_start.png}
    }
    \subfloat[Diffusion-fb]{
    \includegraphics[width=1.4cm]{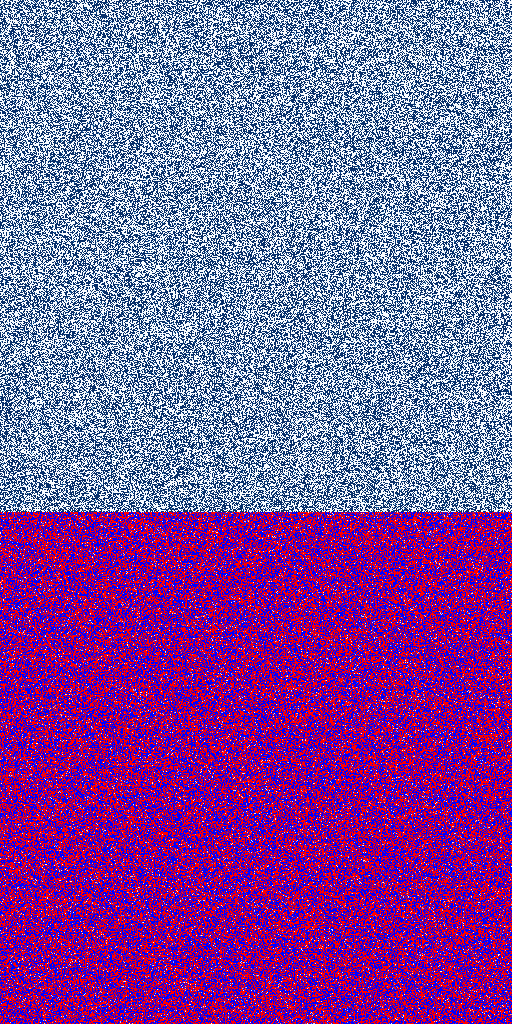}
    \includegraphics[width=1.4cm]{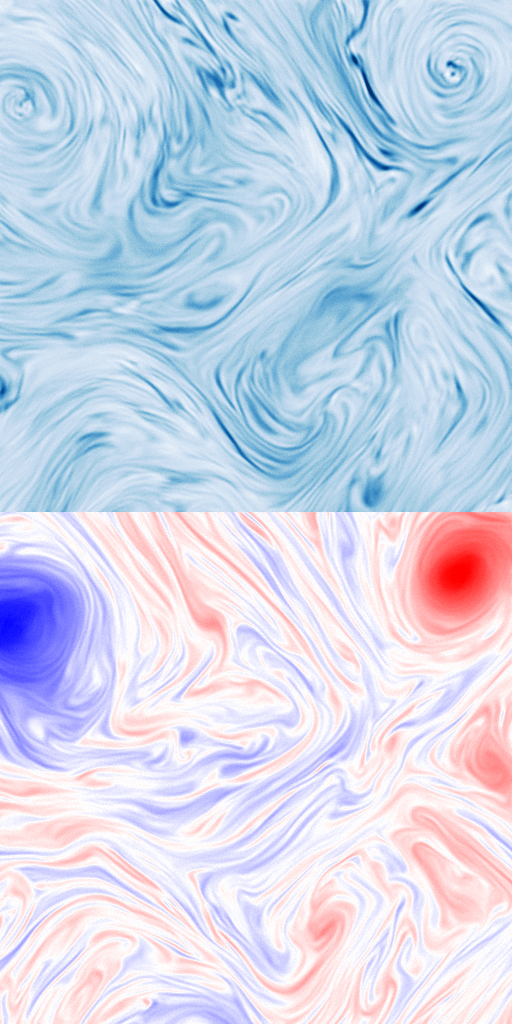} 
    }
    \subfloat[Bridge Matching]{
    \includegraphics[width=1.4cm]{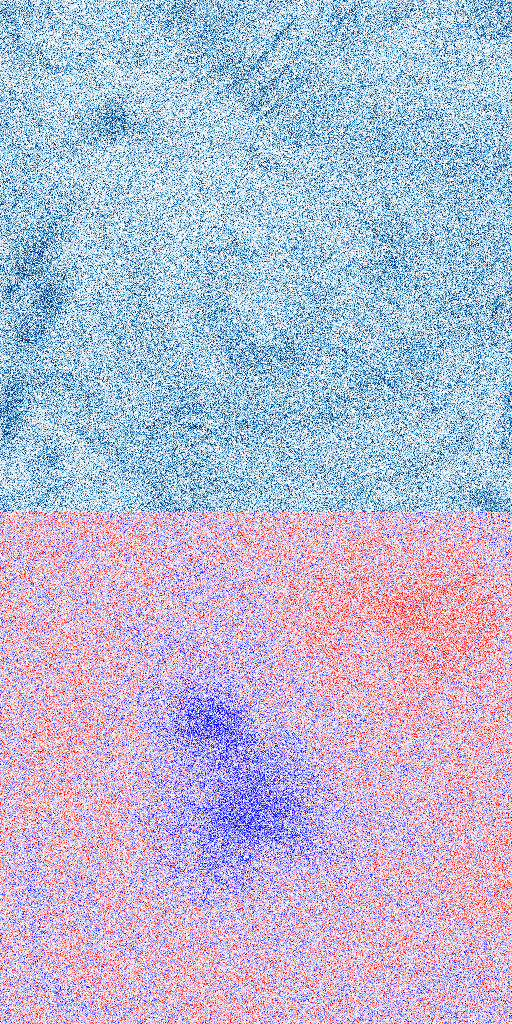}
    \includegraphics[width=1.4cm]{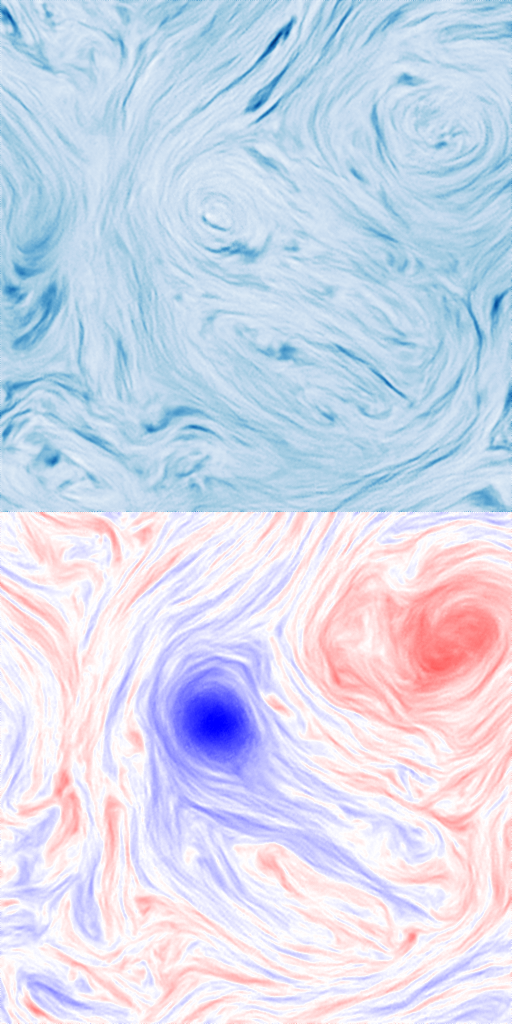} 
    }
    \subfloat[DSBM-IPF]{
    \includegraphics[width=1.4cm]{Figs/Downscaler/k=2/dsbm-ipf/clip_im_grid_15.png}
    \includegraphics[width=1.4cm]{Figs/Downscaler/k=2/dsbm-ipf/clip_im_grid_last.png} 
    }
    \subfloat[DSBM-IMF]{
    \includegraphics[width=1.4cm]{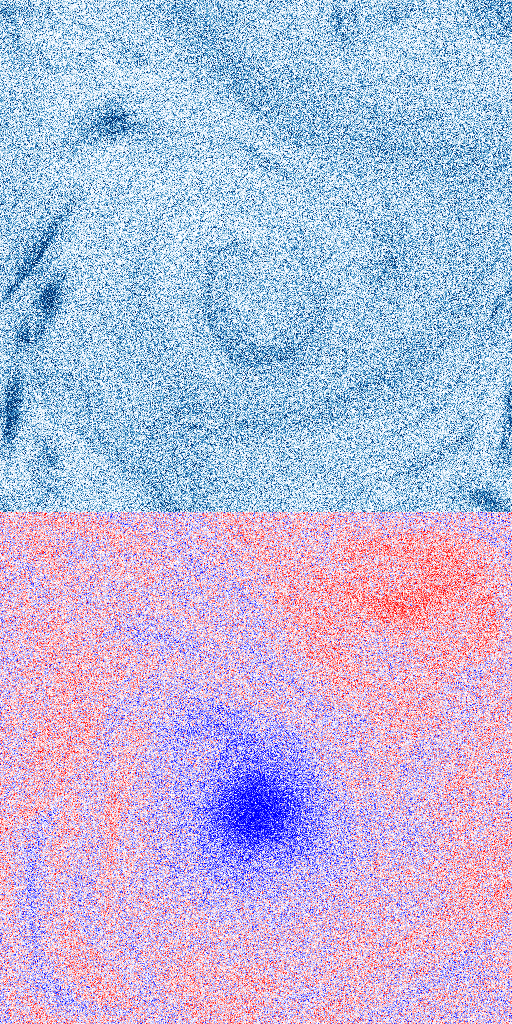}
    \includegraphics[width=1.4cm]{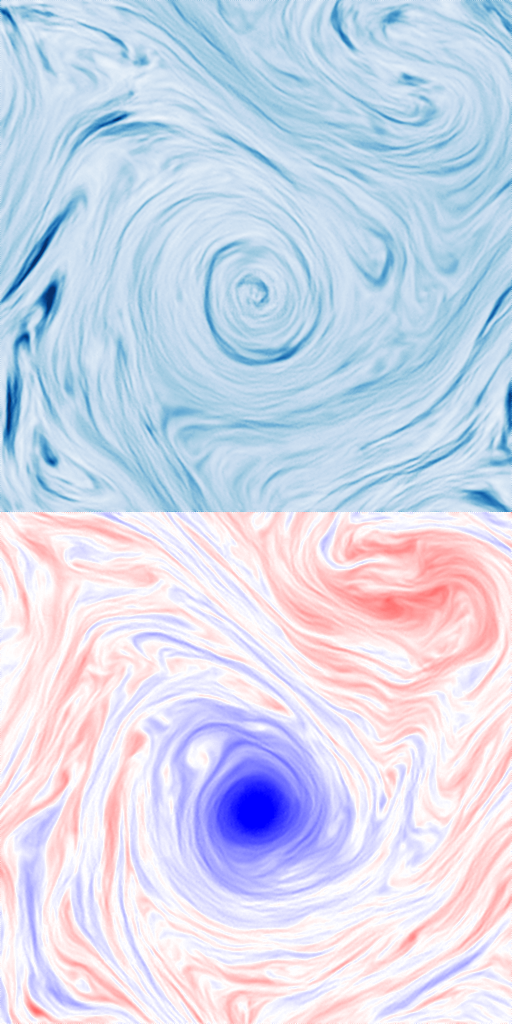} 
    }
    \caption{(a) Source low resolution sample; (b)(c)(d)(e) intermediate state and final reconstruction of each algorithm, for wavenumber $k_x=k_y=2$. }
    \label{fig:downscaler_vis_others}
\end{figure}

\begin{figure}
    \centering
    \includegraphics[width=\linewidth]{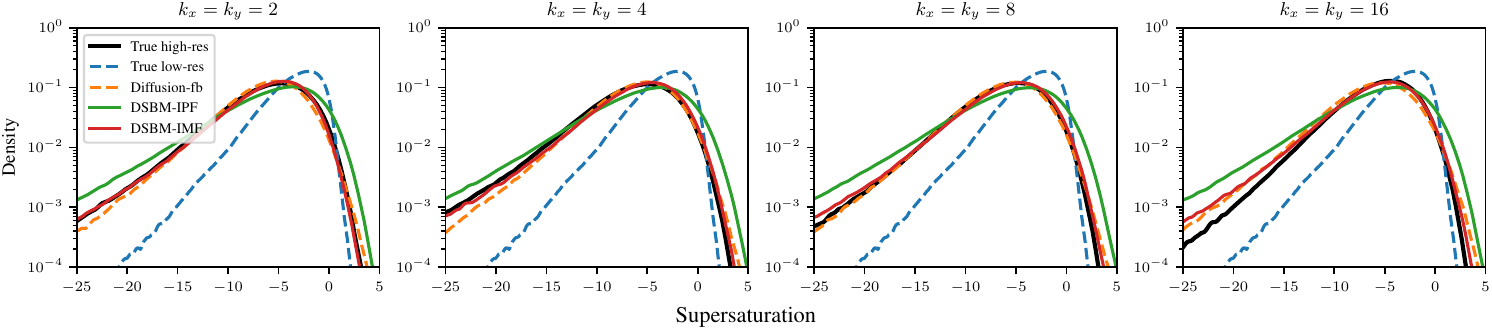}

    \includegraphics[width=\linewidth]{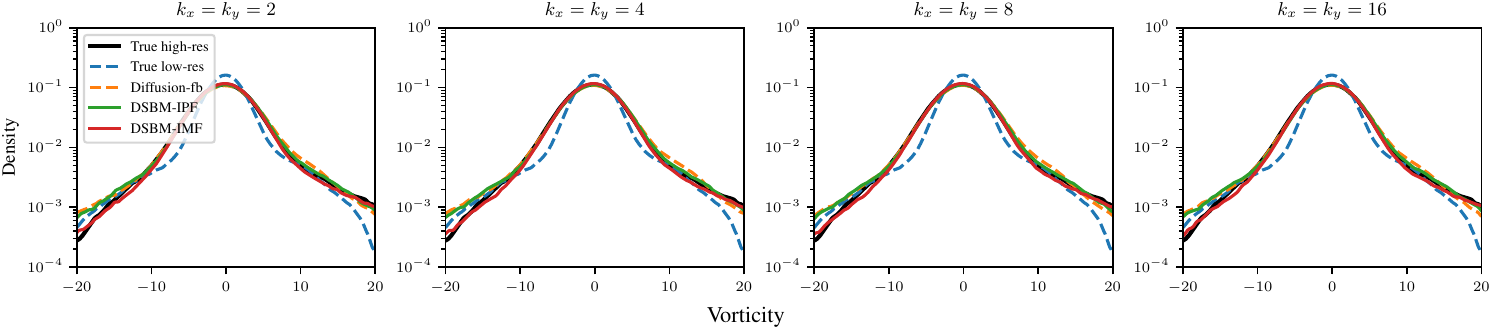}
    \caption{KDE estimates of values in supersaturation and vorticity fields.}
    \label{fig:downscaler_pixel_kde_appendix}

    \centering
    \includegraphics[width=\linewidth]{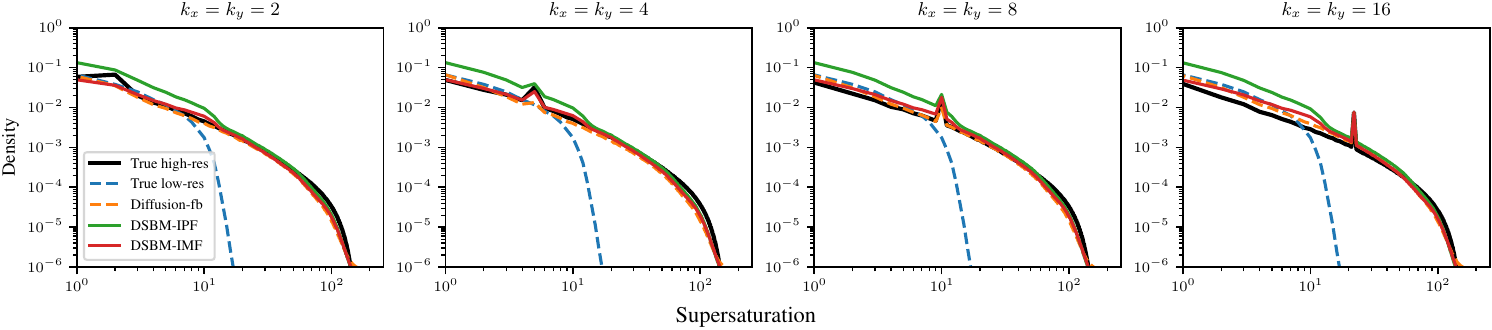}

    \includegraphics[width=\linewidth]{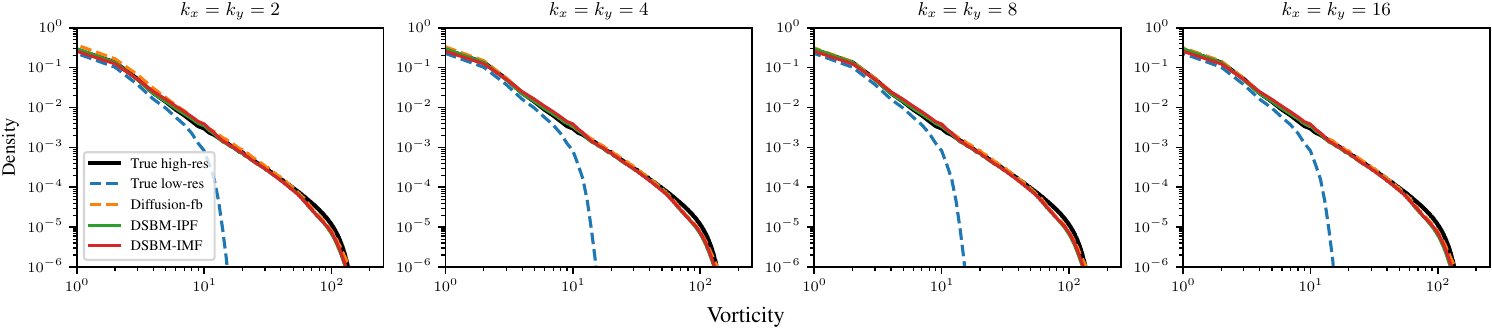}
    \caption{Spectral density estimates of supersaturation and vorticity fields.}
    \label{fig:downscaler_spectra_appendix}

    \centering
    
    \includegraphics[width=\linewidth]{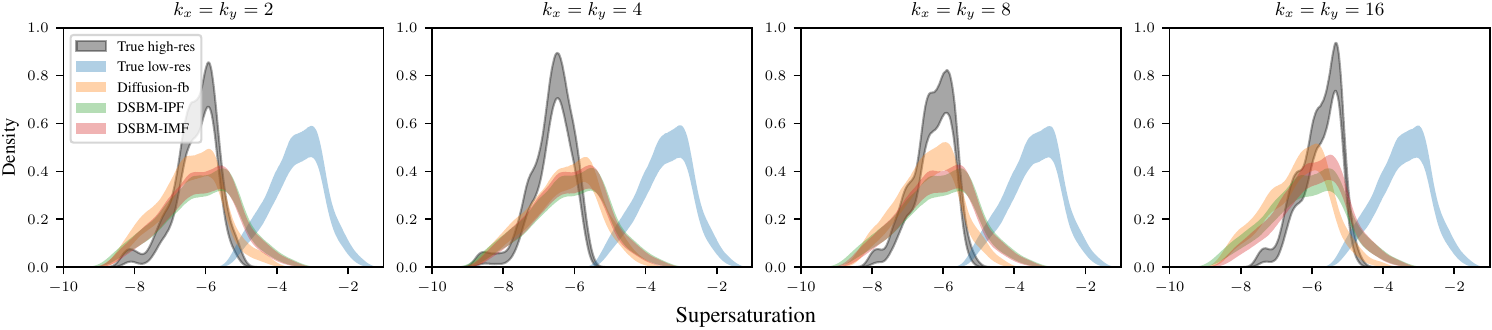}

    \caption{KDE estimates of spatial means of the supersaturation field. The shaded areas denote 99\% confidence interval obtained using 10000 bootstrap samples. }
    \label{fig:downscaler_mean_kde_appendix}

\end{figure}

\section{Broader Impact}
\label{sec:broader_impact_appendix}
Our work focuses on theoretical and methodological research and is intended to bring closer the fields of generative modeling and optimal transport. 
It can be useful for learning transport maps between general distributions with high accuracy and high scalability, which can have useful applications in machine learning, but also natural science areas such as physics, biology and geosciences in which optimal transport maps with theoretical guarantees are appealing. Our fluid flows experiment demonstrates such potentials. However, as is the case for generative models as a whole, intentional malicious use could cause detrimental societal impacts.

\end{document}